\newtheorem{theorem}{Theorem}
\newtheorem{lemma}{Lemma}
\newtheorem{corollary}{Corollary}
\newtheorem{remark}{Remark}
\DeclareMathOperator{\trace}{tr}
\DeclareMathOperator*{\argmin}{arg~min}
\DeclareMathOperator*{\argmax}{arg~max}
\renewcommand{\ALG@name}{Procedure}
\title{\vspace{-7mm}Bounded P-values in Parametric Programming-based Selective Inference}
\date{\today}
\def\@fnsymbol#1{\ensuremath{\ifcase#1\or
{1}\or
{2}\or
{3}\or
{\dagger}\or
\else\@ctrerr\fi}}
\author{
Tomohiro Shiraishi\thanks{Nagoya University} ,
Daiki Miwa\thanks{Nagoya Institute of Technology} ,\\
Vo Nguyen Le Duy\thanks{RIKEN} ,
Ichiro Takeuchi\footnotemark[1] \footnotemark[3] \thanks{Corresponding author. e-mail: ichiro.takeuchi@mae.nagoya-u.ac.jp}
}
\begin{document}

\maketitle

\thispagestyle{empty}

\begin{abstract}
  \noindent
  Selective inference (SI) has been actively studied as a promising framework for statistical hypothesis testing for data-driven hypotheses.
The basic idea of SI is to make inferences conditional on an event that a hypothesis is selected.
In order to perform SI, this event must be characterized in a traceable form.
When selection event is too difficult to characterize, additional conditions are introduced for tractability.
This additional conditions often causes the loss of power, and this issue is referred to as over-conditioning in~\citep{fithian2014optimal}.
Parametric programming-based SI (PP-based SI) has been proposed as one way to address the over-conditioning issue.
The main problem of PP-based SI is its high computational cost due to the need to exhaustively explore the data space.
In this study, we introduce a procedure to reduce the computational cost while guaranteeing the desired precision, by proposing a method to compute the lower and upper bounds of $p$-values.
We also proposed three types of search strategies that efficiently improve these bounds.
We demonstrate the effectiveness of the proposed method in hypothesis testing problems for feature selection in linear models and attention region identification in deep neural networks.
%








\end{abstract}

\newpage
\section{Introduction}
\label{sec:introduction}

The advancement of measurement and information technologies has enabled the acquisition of large volumes of data across various scientific fields.
The approach, known as \emph{data-driven science}, seeks to uncover new discoveries through the analysis of scientific data.
Data-driven science is considered as a novel scientific paradigm, often referred to as \emph{the fourth paradigm}, in addition to the conventional three paradigms grounded in theory, experiment, and simulation.

In data-driven science, hypotheses are selected based on the data.
Therefore, the selected hypotheses are adapted to that data.
This creates a challenge when evaluating the hypotheses by statistical inference, such as computing $p$-values or confidence intervals, as it introduces a bias known as \emph{selection bias}~\citep{breiman1992little,potscher1991effects,leeb2005model,leeb2006can,benjamini2009selective,kriegeskorte2009circular,potscher2010confidence,benjamini2020selective}.
A common approach to address the selection bias issue is to split the data into two subsets: one for hypothesis selection and the other for evaluation.
However, data splitting may result in a loss of statistical power due to a smaller sample size, require the data to be independent and identically distributed, and prevent the evaluation of individual instance rather than the bulk evaluation of the entire dataset.

\emph{Selective inference (SI)} is attracting attention as a promising framework for statistical hypothesis testing for data-driven hypotheses (see, e.g., \citet{taylor2015statistical} for an overview).
The fundamental idea of SI introduced in a seminal paper by \citet{lee2016exact} is to perform statistical hypothesis testing by conditioning on an event that a hypothesis is selected (referred to as the \emph{hypothesis selection event} or \emph{selection event}).
In SI, by conditioning on the hypothesis selection event, it is possible to correct for the selection bias.
Initially, SI was studied in the context of feature selection for linear models (e.g., \citep{loftus2014significance,fithian2015selective,tibshirani2016exact,yang2016selective,hyun2018exact}).
Later, it has been extended to various problems
(e.g., \citep{rugamer2020inference,yamada2018post,suzumura2017selective,umezu2017selective,neufeld2022tree}).

In order to perform SI, it is necessary to characterize the event of selecting a hypothesis in a tractable form.
All the conditional SI studies listed above focus on the selection event which can be characterized as an intersection of linear and quadratic inequalities.
For complex selection events that cannot be easily characterized, additional conditions are introduced for computational tractability.
This is often referred to as \emph{over-conditioning (OC)}.
While SI with OC can still control the type I error rate below the significance level, it is known that the statistical power decreases due to OC (see \citet{fithian2014optimal} for theoretical discussion on OC and loss of powers).

To address the power loss caused by OC, several approaches have been proposed, including the randomized approach~\citep{tian2018selective,panigrahi2022exact}, the sampling approach~\citep{terada2017selective,panigrahi2022approximate}, and the parametric programming (PP) approach~\citep{liu2018more,le2021parametric}.
However, these approaches incur significant computational costs in exhaustively exploring the data space to identify the regions where the hypotheses of interest are selected.
In other words, these approaches require running ML algorithms for hypothesis selection many times on various datasets, which leads to high computational cost, especially when the hypothesis selection algorithm itself is computationally costly.

In this study, we focus on PP-based SI\footnote{
	PP-based SI is an approach for a common problem setting known as {\it saturated model setting}, while randomization and sampling approaches are mainly discussed in a more complex problem setting called {\it selected model setting} (see \citet{fithian2014optimal} for discussions on the difference of saturated and selected model settings).
	In this study, we restrict our discussion to the saturated model setting and do not discuss randomization approaches and sampling approaches.
}.
The basic idea of PP-based SI is based on the insights provided by \citet{liu2018more}.
In a common setting of SI called {\it saturated model setting}, SI is reduced to a one-dimensional line search problem in a high-dimensional data space defined by the test statistic (see section~\ref{sec:sec2} for details).
If we can solve the one-dimensional search problem, we can avoid OC issues and conduct SI without loss of power.
In \citet{liu2018more}, the authors discuss an approach to solve the one-dimensional search problem using a heuristic grid search.
However, there is no theoretical guarantee for the SI result based on a heuristic grid search because there is no performance guarantee between grid points.
To address this problem, \citet{le2021parametric} proposed incorporating a optimization technique called {\it parametric programming (PP)}~\citep{gal2012advances} into the SI framework.
PP is an optimization technique for solving a parametrized class of optimization problems and has been used, e.g., in regularization path computation~\citep{hastie2004entire, mairal2012complexity, tibshirani2011solution}.
PP-based SI can provide valid results if PP is applied to the entire one-dimensional line in the data space defined by the test statistic.
It has been demonstrated that PP-based SI is more powerful than existing SI approaches~\citep{sugiyama2021more,duy2022more}.
Furthermore, PP-based SI is applicable to new problems such as inferences on deep neural networks (DNNs)~\citep{duy2022quantifying, miwa2023valid}.

The goal of this study is to improve the PP-based SI in two key aspects.
The first aspect is on reducing the computational cost associated with PP-based SI, as the exhaustive search of one-dimensional line by PP is often time-consuming.
The second aspect is to establish a theoretical guarantee for the inferential results of PP-based SI, even when an exhaustive search is not conducted\footnote{
	It is important to note that a rigorous coverage of the infinite length of a one-dimensional line by PP is impossible.
	In the existing studies on PP-based SI mentioned earlier, a long yet finite line segment is considered for the one-dimensional search by PP.
}.
Specifically, we propose a method that enables us to obtain lower and upper bounds of $p$-values at any given point during the exploration in the data space.
This capability reduces the search cost for obtaining $p$-values with the desired precision.
Moreover, when it comes to hypothesis testing, where we only need to determine whether the null hypothesis is rejected or not, efficient decision-making can be achieved by utilizing these lower and upper bounds\footnote{
	Specifically, if the upper bound of the $p$-value is smaller than the significance level $\alpha$, the null hypothesis can be rejected.
	Conversely, if the lower bound is greater than or equal to $\alpha$, the null hypothesis cannot be rejected.
}.
To our knowledge, existing studies that discuss the bounds on selective $p$-values are limited to the works by \citet{jewell2022testing} and \citet{chen2023more}.
In these studies, the authors derived upper bounds for selective $p$-values in a specific case of a specific problem settings. As discussed later in section~\ref{subsec:bounding_selective_p} and Appendix~\ref{app:corollary_of_main_theorem}, their bounds are specific cases of the more general bounds provided by our proposed method.

Our contributions in this paper are summarized as follows.
\begin{itemize}

	\item
	      We propose a method to obtain the lower and upper bounds of $p$-values using the results of the data space exploration conducted up to any arbitrary point.
	      It enables us to reduce the computational cost required for computing $p$-values in PP-based SI.

	\item
	      We propose three options for data space search strategies.
	      By employing these strategies to explore the data space, it becomes possible to obtain $p$-values with the desired precision in fewer iterations.

	\item
	      We demonstrate the effectiveness of the proposed method by comparing it with the conventional method in hypothesis testing problems for feature selection in linear models and attention region identification in DNNs.

\end{itemize}

\paragraph{Organization of the paper}
The rest of the paper is organized as follows.
In section~\ref{sec:sec2}, we formulate the problem settings, and summarize existing SI approaches.
In section~\ref{sec:sec3}, we present how to derive the lower and upper bounds of $p$-values, which is the main contribution of this study.
Furthermore, we introduce three search strategies for exploring the data space in the PP-based SI.
In section~\ref{sec:sec4}, we conduct numerical experiments to compare the proposed method with conventional methods in two examples.
Finally, we discuss future works in section~\ref{sec:sec5}.
The code of the proposed method is available at \url{https://github.com/Takeuchi-Lab-SI-Group/sicore} and that of numerical experiments in section~\ref{sec:sec4} is available at \url{https://github.com/shirara1016/bounded_p_values_in_si}.

\newpage
\section{Existing Approaches to Selective Inference}
\label{sec:sec2}
\subsection{General Description}
%
%
%
In this section, we describe the general framework of Selective Inference (SI).
We first describe the problem setting of SI we considered in this paper.
%
%
Then, we describe the SI with over-conditioning (OC).
When the OC approach is used in decision making, the type I error rate can be properly controlled at the designated significance level, but the power is usually lower than the case without OC.
Finally, we describe the parametric programming-based SI (PP-based SI) to address the issue of OC.
%
%
%
%
%
See \citep{le2021parametric,sugiyama2021more,duy2022quantifying,miwa2023valid} for more details on the contents in this section.

\paragraph{Notations.}
To formulate the problem, we consider a random data vector
\begin{equation}
	\bm{D} = (D_1, ..., D_n)^\top \sim \mathcal{N}({\bm{\mu}}, \sigma^2I_n),
\end{equation}
where $n$ is the number of instances, ${\bm{\mu}}$ is an unknown vector, and $\sigma^2I_n \in \mathbb{R}^{n \times n}$ is a covariance matrix that is known or estimable from independent data.
The goal is to quantify the statistical significance of the data-driven hypotheses that are obtained by applying a ML algorithm to the data $\bm{D}$.
\paragraph{Statistical inference.}
First, we apply a ML algorithm $\mathcal{A}\colon \bm{D}\mapsto \mathcal{A}(\bm{D})$ to the observed data $\bm{D}^\mathrm{obs}$ and consider some statistical test problems based on the result $\mathcal{A}(\bm{D}^\mathrm{obs})$.
We consider the \emph{selective $z$-test}:
\begin{equation}
	\mathrm{H}_{0}: \bm{\eta}^\top \bm{\mu} = 0
	\quad \text{v.s.} \quad
	\mathrm{H}_{1}: \bm{\eta}^\top \bm{\mu} \neq 0,
\end{equation}
and the \emph{selective $\chi$-test}:
\begin{equation}
	\mathrm{H}_{0}: \left\|P\bm{\mu}\right\| = 0
	\quad \text{v.s.} \quad
	\mathrm{H}_{1}: \left\| P \bm{\mu}\right\| \neq 0,
\end{equation}
where $\bm{\eta}\in\mathbb{R}^n$ is a vector and $P\in\mathbb{R}^{n\times n}$ is a projection matrix that depends on $\mathcal{A}(\bm{D}^\mathrm{obs})$.
\paragraph{Test-statistic.}
For each of these hypothesis tests, we consider test statistics called \emph{conditionally $z$ test-statistic}, which is expressed as
\begin{equation}
	T(\bm{D}) = \bm{\eta}^\top \bm{D},
\end{equation}
and \emph{conditionally $\chi$ test-statistic}, which is expressed as
\begin{equation}
	T(\bm{D}) = \left\|P\bm{D}\right\| / \sigma,
\end{equation}
respectively.
\paragraph{Conditional selective inference}
To conduct the statistical tests, we consider the test statistic conditional on the ML algorithm event, i.e.,
\begin{equation}
	\label{eq:conditonal_sampling_distribution}
	T(\bm{D}) \mid
	\left\{
	\mathcal{A}(\bm{D}) = \mathcal{A}(\bm{D}^\mathrm{obs})
	\right\}.
\end{equation}
To compute the selective $p$-value based on the conditional sampling distribution in~\eqref{eq:conditonal_sampling_distribution}, we need to additionally condition on the nuisance component $\mathcal{Q}(\bm{D})$, where
\begin{equation}
	\mathcal{Q}(\bm{D})
	=
	(I_n - \bm{\eta}\bm{\eta}^\top / \left\|\bm{\eta}\right\|^2)\bm{D}
	\in \mathbb{R}^n
\end{equation}
for the selective $z$-test, and
\begin{equation}
	\mathcal{Q}(\bm{D})
	=
	\left(
	\sigma P\bm{D} / \left\|P\bm{D}\right\|, P^\perp D
	\right)
	\in \mathbb{R}^{n} \times \mathbb{R}^{n}
\end{equation}
for the selective $\chi$-test\footnote{
	Note that conditioning on the nuisance component can also be interpreted as an OC and that the power of SI can be improved if we conduct SI without this OC.
	Unfortunately, however, PP-based SI and other existing SI approaches cannot avoid OC on the nuisance component.
	This is because, without conditioning on the nuisance component, we need to compute the truncated normal distribution in an $n$-dimensional space, which is computationally quite challenging.
}.
The selective $p$-value is then computed as:
\begin{equation}
	\label{eq:general_selective_p_value}
	p^\mathrm{selective}
	=
	\mathbb{P}_{\mathrm{H}_0}
	\left(
	|T(\bm{D})| \geq |T(\bm{D}^\mathrm{obs})|
	~\Big|~
	\bm{D} \in \mathcal{D}
	\right),
\end{equation}
where
$
	\mathcal{D}=
	\left\{
	\bm{D}\in\mathbb{R}^n:
	\mathcal{A}(\bm{D}) = \mathcal{A}(\bm{D}^\mathrm{obs}),
	\mathcal{Q}(\bm{D}) = \mathcal{Q}(\bm{D}^\mathrm{obs})
	\right\}
$.
\paragraph{Characterization of the conditional data space.}
The data space $\mathbb{R}^n$ conditional on $\mathcal{A}(\bm{D}) = \mathcal{A}(\bm{D}^\mathrm{obs})$ is represented as a subset of $\mathbb{R}^n$.
Furthermore, by conditioning also on the nuisance component $\mathcal{Q}(\bm{D}) = \mathcal{Q}(\bm{D}^\mathrm{obs})$, the subspace $\mathcal{D}$ in~\eqref{eq:general_selective_p_value} is a one-dimensional line in $\mathbb{R}^n$.
Therefore, the set $\mathcal{D}$ can be re-written, using a scalar parameter $z\in\mathbb{R}$, as
\begin{equation}
	\mathcal{D}
	=
	\left\{
	\bm{D}(z)\in\mathbb{R}^n:
	\bm{D}(z) = \bm{a} + \bm{b}z, z\in\mathcal{Z}
	\right\},
\end{equation}
using vectors $\bm{a}, \bm{b}\in\mathbb{R}^n$ defined as
\begin{equation}
	\label{eq:direction_of_z_test}
	\bm{a} = (I_n - \bm{\eta}\bm{\eta}^\top/\left\|\bm{\eta}\right\|^2)\bm{D}^\mathrm{obs},
	\quad
	\bm{b} = \bm{\eta} / \left\|\bm{\eta}\right\|^2
\end{equation}
for the selective $z$-test, and
\begin{equation}
	\label{eq:direction_of_chi_test}
	\bm{a} = P^\perp \bm{D}^\mathrm{obs},
	\quad
	\bm{b} = \sigma P\bm{D}^\mathrm{obs} / \left\|P\bm{D}^\mathrm{obs}\right\|
\end{equation}
for the selective $\chi$-test, and
\begin{equation}
	\label{eq:Z_region}
	\mathcal{Z} =
	\left\{
	z\in\mathbb{R}:
	\mathcal{A}(\bm{a} + \bm{b}z) = \mathcal{A}(\bm{D}^\mathrm{obs})
	\right\}.
\end{equation}
Let us consider a random variable $Z\in\mathbb{R}$ and its observation $Z^\mathrm{obs}\in\mathbb{R}$ such that they respectively satisfy $\bm{D}=\bm{a}+\bm{b}Z$ and $\bm{D}^\mathrm{obs}=\bm{a}+\bm{b}Z^\mathrm{obs}$.
The (two-tailed) selective $p$-value in~\eqref{eq:general_selective_p_value} is re-written as
\begin{equation}
	\label{eq:one_dimensional_selective_p_value}
	p^\mathrm{selective}
	=
	\mathbb{P}_{\mathrm{H}_0}
	\left(
	|Z| > |Z^\mathrm{obs}|
	~\Big|~
	Z \in \mathcal{Z}
	\right).
\end{equation}
Because the variable $Z\sim\mathcal{N}(0, \sigma^2\left\|\bm{\eta}\right\|^2)$ for the selective $z$-test and $Z\sim\chi(\trace P)$ for the selective $\chi$-test under the null hypothesis, $Z\mid Z\in\mathcal{Z}$ follows a \emph{truncated normal distribution} and a \emph{truncated $\chi$ distribution}, respectively.
Once the truncation region $\mathcal{Z}$ is identified, the selective $p$-value in~\eqref{eq:one_dimensional_selective_p_value} can be easily computed.
Thus, the remaining task is reduced to the characterization of $\mathcal{Z}$.
To identify $\mathcal{Z}$, we first introduce the concept of the OC and then use parametric programming to address OC issue.
\subsection{Over-Conditioning Approach}
%
%
In many ML algorithms, conditioning event only on the result of algorithm i.e., $\mathcal{A}(\bm{D})=\mathcal{A}(\bm{D}^\mathrm{obs})$ is often too \emph{complicated} to characterize.
Therefore, we introduce a \emph{sub-algorithm} $\mathcal{S}: \bm{D} \mapsto \mathcal{S}(\bm{D})$ such that the intersection of the event on the ML algorithm $\mathcal{A}(\bm{D}) = \mathcal{A}(\bm{D}^\mathrm{obs})$ and the event on the sub-algorithm $\mathcal{S}(\bm{D}) = \mathcal{S}(\bm{D}^\mathrm{obs})$ can be characterized in a tractable form.
This means that instead of~\eqref{eq:conditonal_sampling_distribution} we consider the following test statistic conditional on the ML algorithm event and the sub-algorithm event, i.e.,
\begin{equation}
	\label{eq:oc_conditonal_sampling_distribution}
	T(\bm{D}) \mid
	\left\{
	\mathcal{A}(\bm{D}) = \mathcal{A}(\bm{D}^\mathrm{obs}),
	\mathcal{S}(\bm{D}) = \mathcal{S}(\bm{D}^\mathrm{obs})
	\right\}.
\end{equation}
\paragraph{The OC conditional data space.}
For an arbitrary data $\bm{d} \in\mathbb{R}^n$, we define the OC conditional data space as:
\begin{equation}
	\mathcal{D}^\mathrm{oc}(\bm{d})
	=
	\left\{
	\bm{D} \in \mathbb{R}^n:
	\mathcal{A}(\bm{D}) = \mathcal{A}(\bm{d}),
	\mathcal{S}(\bm{D}) = \mathcal{S}(\bm{d})
	\right\}.
\end{equation}
Note that the case of $\bm{d}=\bm{D}^\mathrm{obs}$ corresponds to the conditioning in~\eqref{eq:oc_conditonal_sampling_distribution} and then we consider the $\mathcal{Z}\cap \mathcal{D}^\mathrm{oc}(\bm{D}^\mathrm{obs})$ instead of $\mathcal{Z}$ in~\eqref{eq:one_dimensional_selective_p_value} to compute the OC $p$-value.
In many ML algorithms, by introducing additional event on an appropriate sub-algorithm $\mathcal{S}$, the OC conditional data space can be characterized by an intersection of a set of \emph{linear/quadratic inequalities} of data $\bm{D}\in\mathbb{R}^n$.
In other words, we can introduce a sub-algorithm $\mathcal{S}$ such that when the original selection event on the ML algorithm $\mathcal{A}(\bm{D}) = \mathcal{A}(\bm{d})$ is combined with an additional selection event on the sub-algorithm $\mathcal{S}(\bm{D}) = \mathcal{S}(\bm{d})$, the data space conditional on the intersection of these two events can be characterized by an intersection of linear/quadratic inequalities of the data.
In this case, the OC conditional data space is written as:
\begin{align}
	\mathcal{D}^\mathrm{oc}(\bm{d})
	 & =
	\left\{
	\bm{D} \in \mathbb{R}^n:
	\mathcal{A}(\bm{D}) = \mathcal{A}(\bm{d}),
	\mathcal{S}(\bm{D}) = \mathcal{S}(\bm{d})
	\right\} \\
	\label{eq:general_oc_data_space}
	 & =
	\left\{
	\bm{D} \in \mathbb{R}^n:
	\bigcap_{t\in[n_t(\mathcal{A}(\bm{d}),\mathcal{S}(\bm{d}))]}
	g_t^{\mathcal{A}(\bm{d}),\mathcal{S}(\bm{d})}(\bm{D}) \leq 0
	\right\},
\end{align}
where $n_t(\mathcal{A}(\bm{d}),\mathcal{S}(\bm{d}))$ is the number of linear/quadratic inequalities
and
$g_{t}^{\mathcal{A}(\bm{d}),\mathcal{S}(\bm{d})}(\bm{D}) \leq 0$ represents the $t$-th inequality for a vector variable that depends on $\mathcal{A}(\bm{d})$ and $\mathcal{S}(\bm{d})$.
%
%
%
\subsection{Parametric Programming Approach}
\label{subsec:pp_based_si}
Since we only consider a line in the data space in~\eqref{eq:Z_region}, with a slight abuse of notation, we interpret that a real number $z$ is an input to the ML algorithm $\mathcal{A}$ and the sub-algorithm $\mathcal{S}$.
Namely, we write $\mathcal{A}(z)=\mathcal{A}(\bm{a}+\bm{b}z)$ and $\mathcal{S}(z)=\mathcal{S}(\bm{a}+\bm{b}z)$.
In order to obtain the region $\mathcal{Z}$ in~\eqref{eq:Z_region}, the idea of PP-based SI is to enumerate all the OC regions in
\begin{equation}
	\mathcal{Z}^\mathrm{oc}(\bm{a}+\bm{b}z)
	=
	\left\{
	r\in\mathbb{R}:
	\mathcal{A}(r) = \mathcal{A}(z),
	\mathcal{S}(r) = \mathcal{S}(z)
	\right\},
\end{equation}
and consider their union
\begin{equation}
	\label{eq:region_homotopy}
	\mathcal{Z}
	=
	\bigcup_{z\in\mathbb{R}\mid \mathcal{A}(z)=\mathcal{A}^\mathrm{obs}}
	\mathcal{Z}^\mathrm{oc}(\bm{a}+\bm{b}z)
\end{equation}
by parametric programming where we denote $\mathcal{A}(\bm{D}^\mathrm{obs})$ by $\mathcal{A}^\mathrm{obs}$ for short.
From~\eqref{eq:general_oc_data_space}, $\mathcal{Z}^\mathrm{oc}(\bm{a}+\bm{b}z)$ is written as:
\begin{align}
	\mathcal{Z}^\mathrm{oc}(\bm{a}+\bm{b}z)
	 & =
	\left\{
	r\in\mathbb{R}:
	\mathcal{A}(r) = \mathcal{A}(z),
	\mathcal{S}(r) = \mathcal{S}(z)
	\right\} \\
	 & =
	\left\{
	r\in\mathbb{R}:
	\bm{a}+\bm{b}r \in \mathcal{D}^\mathrm{oc}(\bm{a}+\bm{b}z)
	\right\} \\
	\label{eq:decomp_Z_oc}
	 & =
	\bigcap_{t\in[n_t(\mathcal{A}(z),\mathcal{S}(z))]}
	\left\{
	r\in \mathbb{R}:
	g_t^{(\mathcal{A}(z),\mathcal{S}(z))}(\bm{a}+\bm{b}r) \leq 0
	\right\}.
\end{align}
In addition, since each $g_t^{(\mathcal{A}(z),\mathcal{S}(z))}(\bm{a}+\bm{b}r) \leq 0$ is a linear/quadratic inequality for a vector variable, it is transformed into the intersection of the multiple linear/quadratic inequalities for a scalar variable $r$.
By analytically solving these linear/quadratic inequalities, an intersection of linear/quadratic inequalities for a scalar variable $r$ is represented as a union of multiple non-overlapping intervals, which we denote by:
\begin{equation}
	\label{eq:region_oc_final_form}
	\mathcal{Z}^\mathrm{oc}(\bm{a}+\bm{b}z)
	=
	\bigcup_{v \in [n_v(\mathcal{A}(z),\mathcal{S}(z))]}
	[\ell_{v}^{(\mathcal{A}(z),\mathcal{S}(z))}, u_{v}^{(\mathcal{A}(z),\mathcal{S}(z))}],
\end{equation}
where
$n_v(\mathcal{A}(z),\mathcal{S}(z))$
is the number of intervals, while
$\ell_{v}^{(\mathcal{A}(z),\mathcal{S}(z))}$
and
$u_{v}^{(\mathcal{A}(z),\mathcal{S}(z))}$
are the lower and the upper bound of the $v$-th interval that depend on $\mathcal{A}(z)$ and $\mathcal{S}(z)$.
%
%
Using~\eqref{eq:region_homotopy}, \eqref{eq:decomp_Z_oc} and \eqref{eq:region_oc_final_form}, the region $\mathcal{Z}$ in~\eqref{eq:Z_region} is given by:
\begin{align}
	\label{eq:completion_of_Z}
	\mathcal{Z}
	 & =
	\bigcup_{z\in \mathbb{R} \mid \mathcal{A}(z)=\mathcal{A}^\mathrm{obs}}
	\bigcap_{s\in [n_s(\mathcal{A}(z),\mathcal{S}(z))]}
	\left\{
	r\in \mathbb{R}:
	g_s^{(\mathcal{A}(z),\mathcal{S}(z))}(\bm{a}+\bm{b}r) \leq 0
	\right\} \\
	 & =
	\bigcup_{z\in \mathbb{R} \mid \mathcal{A}(z)=\mathcal{A}^\mathrm{obs}}
	\bigcup_{v\in [n_v(\mathcal{A}(z),\mathcal{S}(z))]}
	[\ell_{v}^{(\mathcal{A}(z),\mathcal{S}(z))}, u_{v}^{(\mathcal{A}(z),\mathcal{S}(z))}].
\end{align}
This enables us to compute selective $p$-values in \eqref{eq:one_dimensional_selective_p_value}.

\newpage
\section{Proposed method}
\label{sec:sec3}
To compute the selective $p$-value, we need to identify a set of $z \in \mathbb{R}$ that satisfies the conditioning part of~\eqref{eq:Z_region}.
By applying the ML algorithm $\mathcal{A}$ and the sub-algorithm $\mathcal{S}$ to the data $\bm{D}=\bm{a}+\bm{b}z$, which corresponds to the point $z \in \mathbb{R}$,
it is possible to obtain the $\mathcal{A}(z)$ and $\mathcal{S}(z)$.
Then, the intersection of linear/quadratic inequalities
\begin{equation}
	\bigcap_{t \in [n_t(\mathcal{A}(z),\mathcal{S}(z))]}
	\{r \in \mathbb{R}: g_t^{(\mathcal{A}(z),\mathcal{S}(z))}(\bm{a}+\bm{b}r) \leq 0\}
\end{equation}
are satisfied at the point $z$, and henceforth the union of corresponding intervals
\begin{equation}
	\label{eq:union_of_intervals_for_z}
	\bigcup_{v \in [n_v(\mathcal{A}(z),\mathcal{S}(z))]}
	[\ell_{v}^{(\mathcal{A}(z),\mathcal{S}(z))}, u_{v}^{(\mathcal{A}(z),\mathcal{S}(z))}]
\end{equation}
is obtained by analytically solving these inequalities.
By checking whether $\mathcal{A}(z)=\mathcal{A}^\mathrm{obs}$ or not, it is possible to know whether the union of intervals in~\eqref{eq:union_of_intervals_for_z} satisfies the condition of the region $\mathcal{Z}$ in~\eqref{eq:Z_region}.

The selective $p$-value $p^\mathrm{selective}$ is obtained by repeating such a process until the entire real line $\mathbb{R}$ is covered by exhaustively checking all the intervals.
However, there are often cases where a large number of intervals need to be checked, and it may require a too much computational cost to cover the entire real line.

In previous studies on PP-based SI~\citep{le2021parametric,sugiyama2021more,duy2022quantifying,miwa2023valid}, a wide range of exploration was conducted to exhaustively search along the line in the data space.
For examples, the interval ranging from $-20\sigma\|\bm{\eta}\|$ to $20\sigma\|\bm{\eta}\|$ was searched for the selective $z$-test, where the unconditional sampling distribution of $Z$ in~\eqref{eq:one_dimensional_selective_p_value} is $\mathcal{N}(0,\sigma^2\|\bm{\eta}\|^2)$.
Furthermore, the interval ranging from $0$ to $100$ was searched for the selective $\chi$-test, where the unconditional sampling distribution of $Z$ in~\eqref{eq:one_dimensional_selective_p_value} is $\chi(\trace P)$.
We refer to the PP-based SI based on exhaustive search as {\tt exhaustive}.
In this study, we propose a procedure to compute selective $p$-values based on PP-based SI \emph{without} the need for such exhaustive search.
Our proposed method allows for providing the lower and upper bounds of the selective $p$-value based on the intervals checked so far.
By setting the desired precision or significance level in advance, it becomes possible to obtain a sufficiently precise selective $p$-value and make correct decision without the need for conducting an exhaustive search.
\subsection{PP-based SI Procedure}
Procedure~\ref{alg:simple} shows how to update the lower and upper bounds of the selective $p$-value.
To explain the procedure, we introduce some notations.
Let $i$ be the index of the iteration, $S_i \subseteq \mathbb{R}$ be the set of intervals checked by the $i$-th iteration, and $R_i \subseteq S_i$ be a subset of $S_i$ such that the same result as the observed one is obtained when the ML algorithm is applied to the data $\bm{D}=\bm{a}+\bm{b}z$ for any $z \in R_i$.
We call $S_i$ and $R_i$ as \emph{searched intervals} and \emph{truncated intervals}, respectively.

Given the observed data $\bm{D}^\mathrm{obs}$ and the ML algorithm $\mathcal{A}$, we first compute the vectors $\bm{a}$ and $\bm{b}$ that characterize the direction of the test-statistic based on~\eqref{eq:direction_of_z_test} for the selective $z$-test and on~\eqref{eq:direction_of_chi_test} for the selective $\chi$-test, respectively.
Then, the point $Z^\mathrm{obs}$ that corresponds to the observed data $\bm{D}^\mathrm{obs}$ in such a way that $\bm{D}^\mathrm{obs}=\bm{a}+\bm{b}Z^\mathrm{obs}$ is obtained.
By applying the ML algorithm $\mathcal{A}$ and the sub-algorithm $\mathcal{S}$ at the observed point $Z^\mathrm{obs}$, the pair of $(\mathcal{A}^\mathrm{obs}, \mathcal{S}^\mathrm{obs})$ is obtained.
Then, the searched intervals and truncated intervals are initialized as:
\begin{equation}
	S_1 = R_1 =
	\bigcup_{v \in [n_v(\mathcal{A}^\mathrm{obs},\mathcal{S}^\mathrm{obs})]}
	[
	\ell_{v}^{(\mathcal{A}^\mathrm{obs},\mathcal{S}^\mathrm{obs})},
	u_{v}^{(\mathcal{A}^\mathrm{obs},\mathcal{S}^\mathrm{obs})}
	].
\end{equation}
In each iteration, a point $z$ is selected from unchecked subset $S_i^c \coloneq \mathbb{R} \setminus S_i$ (the specific strategies for selecting $z \in S_i^c$ will be discussed in section~\ref{subsec:search_strategy}).
Then, by applying the ML algorithm $\mathcal{A}$ and the sub-algorithm $\mathcal{S}$ at the selected point $z$, we obtain the union of intervals
\begin{equation}
	\bigcup_{v \in [n_v(\mathcal{A}(z),\mathcal{S}(z))]}
	[\ell_{v}^{(\mathcal{A}(z),\mathcal{S}(z))}, u_{v}^{(\mathcal{A}(z),\mathcal{S}(z))}].
\end{equation}
Note that a single point $z$ is selected and the OC region for this point is obtained in each iteration.
The searched intervals $S_i$ is updated by adding this union of intervals, while truncated intervals $R_i$ is updated similarly only if $\mathcal{A}(z) = \mathcal{A}^\mathrm{obs}$.
Using the searched intervals $S_i$ and the truncated intervals $R_i$, the lower and upper bounds of the selective $p$-value are computed by using functions $L$ and $U$ whose details are described in Theorem~\ref{thm:main} below.

We consider two termination criteria for Procedure~\ref{alg:simple} for two different purposes.
The first case is when we want to obtain the selective $p$-value with the desired precision $\varepsilon$.
In this case, we can terminate the procedure when the difference between the lower and upper bounds of the selective $p$-value becomes $U_i - L_i < \varepsilon$.
The second case is when we want to decide whether the null hypothesis is rejected or not for a specific significance level $\alpha$ (e.g., $\alpha = 0.05$).
In this case, it is possible to know that the null hypothesis is rejected if $L_i \geq \alpha$, whereas it is not rejected if $U_i \leq \alpha$.
This means that we can terminate the procedure when either of these conditions occurs.
We call the proposed method with the first termination criterion as \texttt{proposed(precision)} and that with the second termination criterion as \texttt{proposed(decision)}

\begin{algorithm}
	\caption{Procedure for computing the bounds of the selective $p$-value}
	\label{alg:simple}
	\begin{algorithmic}[1]
		\REQUIRE Observed data $\bm{D}^\mathrm{obs}$, the ML algorithm $\mathcal{A}$ and the sub-algorithm $\mathcal{S}$
		\STATE Compute $\bm{a}$ and $\bm{b}$, by~\eqref{eq:direction_of_z_test} or \eqref{eq:direction_of_chi_test}
		\STATE Obtain $Z^\mathrm{obs}$ such that $\bm{D}^\mathrm{obs}=\bm{a}+\bm{b}Z^\mathrm{obs}$
		\STATE $\mathcal{A}^\mathrm{obs} \leftarrow \mathcal{A}(Z^\mathrm{obs})$, $\mathcal{S}^\mathrm{obs} \leftarrow \mathcal{S}(Z^\mathrm{obs})$
		\STATE $S_1 \leftarrow R_1 \leftarrow
			\cup_{v \in [n_v(\mathcal{A}^\mathrm{obs},\mathcal{S}^\mathrm{obs})]}
			[
			\ell_{v}^{(\mathcal{A}^\mathrm{obs},\mathcal{S}^\mathrm{obs})},
			u_{v}^{(\mathcal{A}^\mathrm{obs},\mathcal{S}^\mathrm{obs})}
			]$
		\STATE $i\leftarrow 1$
		\WHILE{termination criterion is satisfied}
		\STATE Select a $z \in S_i^c$
		\STATE Compute $\mathcal{A}(z)$ and $\mathcal{S}(z)$
		\STATE $S_{i+1} \leftarrow
			S_i \cup \left\{
			\cup_{v \in [n_v(\mathcal{A}(z),\mathcal{S}(z))]}
			[\ell_{v}^{(\mathcal{A}(z),\mathcal{S}(z))}, u_{v}^{(\mathcal{A}(z),\mathcal{S}(z))}]
			\right\}$
		\IF{$\mathcal{A}(z) = \mathcal{A}^\mathrm{obs}$}
		\STATE $R_{i+1} \leftarrow
			R_i \cup \left\{
			\cup_{v \in [n_v(\mathcal{A}(z),\mathcal{S}(z))]}
			[\ell_{v}^{(\mathcal{A}(z),\mathcal{S}(z))}, u_{v}^{(\mathcal{A}(z),\mathcal{S}(z))}]
			\right\}$
		\ENDIF
		\STATE $i \leftarrow i+1$
		\STATE $L_i\leftarrow L(S_i, R_i)$
		\STATE $U_i\leftarrow U(S_i, R_i)$
		\ENDWHILE
		\ENSURE $[L_i, U_i]$
	\end{algorithmic}
\end{algorithm}

\subsection{Lower and Upper Bounds of Selective $p$-values}
\label{subsec:bounding_selective_p}
\paragraph{Main theorem.}
The following theorem states that, given a searched interval $S_i$ and truncated interval $R_i$, it is possible to obtain the lower and upper bounds of the selective $p$-value (lines 14-15 in Procedure~\ref{alg:simple}).
This theorem contains the technique of considering conservative (upper bound) $p$-values by adding the tails to truncated intervals $R_i$ for sufficiently large searched intervals $S_i$ in~\citep{jewell2022testing,chen2023more} as a special case.
We show this technique in Appendix~\ref{app:corollary_of_main_theorem} as a corollary of our theorem.
In this section, we only focus on the (two-sided) selective $p$-value in~\eqref{eq:one_dimensional_selective_p_value} for simplicity.
However, the results in this section can be easily extended to the case of the one-sided test (see Appendix~\ref{app:extension_to_one_sided} for more details).
\begin{theorem}
	\label{thm:main}
	Let $t$ be the observed test statistic, $f$ be the unconditional probability density function of the test statistic, $\{S_i\}_{i \in \mathbb{N}}, \{R_i\}_{i \in \mathbb{N}}$ be the sequences of subsets of $\mathcal{B}(\mathbb{R})$
	as defined in Procedure~\ref{alg:simple},
	where
	$\mathcal{B}(\mathbb{R})$
	is the borel set of $\mathbb{R}$, i.e., a set of measurable subsets of $\mathbb{R}$.
	%
	%
	Then,
	given
	$S_i$ and $R_i$,
	the lower and the upper bounds of the selective $p$-value are given by
	\begin{align}
		L_i & = L(S_i, R_i) =
		\frac{\mathcal{I}(R_i \setminus [-|t|,|t|])}
		{\mathcal{I}(R_i\cup (S_i^c\cap [-|t|,|t|]))},
		\\
		U_i & = U(S_i, R_i) =
		\frac{\mathcal{I}((R_i\cup S_i^c)\setminus [-|t|, |t|])}
		{\mathcal{I}(R_i\cup (S_i^c\setminus [-|t|,|t|]))},
	\end{align}
	where
	$\mathcal{I}\colon \mathcal{B}(\mathbb{R})\ni B\mapsto \int_B f(z)dz\in[0,1]$.
\end{theorem}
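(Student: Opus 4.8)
The plan is to express the selective $p$-value as a ratio of integrals over the true (but unknown) truncation region $\mathcal{Z}$, and then bound numerator and denominator separately using what we know and don't know about $\mathcal{Z}$. The key structural fact is the inclusion chain $R_i \subseteq \mathcal{Z} \subseteq R_i \cup S_i^c$, which holds for every iteration $i$: every interval we have added to $R_i$ genuinely satisfies $\mathcal{A}(z) = \mathcal{A}^\mathrm{obs}$ so it lies in $\mathcal{Z}$; and conversely any point of $\mathcal{Z}$ that we have already searched must have been placed in $R_i$ (since searching it reveals $\mathcal{A}(z) = \mathcal{A}^\mathrm{obs}$), so the only points of $\mathcal{Z}$ not in $R_i$ are unsearched, i.e. in $S_i^c$. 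This sandwich is the engine of the whole argument.

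First I would write the selective $p$-value from \eqref{eq:one_dimensional_selective_p_value} as
\begin{equation}
	p^\mathrm{selective}
	=
	\frac{\mathcal{I}(\mathcal{Z}\setminus[-|t|,|t|])}
	{\mathcal{I}(\mathcal{Z})},
\end{equation}
valid because under $\mathrm{H}_0$ the numerator is $\mathbb{P}_{\mathrm{H}_0}(|Z|>|t|,\, Z\in\mathcal{Z})$ and the denominator is $\mathbb{P}_{\mathrm{H}_0}(Z\in\mathcal{Z})$, both computed against the density $f$ via $\mathcal{I}$. Next I would observe that the map $x\mapsto \mathcal{I}(A\cap[-|t|,|t|]^c)/\mathcal{I}(A)$, viewed as a function of the set $A$, is monotone in a precise sense: enlarging $A$ by adding mass inside $[-|t|,|t|]$ decreases the ratio, while adding mass outside $[-|t|,|t|]$ increases it. Concretely, for any measurable $A$ and disjoint extra piece $E$, one has $\mathcal{I}(A\cup E\setminus[-|t|,|t|])/\mathcal{I}(A\cup E) \le \mathcal{I}(A\setminus[-|t|,|t|])/\mathcal{I}(A)$ when $E\subseteq[-|t|,|t|]$, and the reverse inequality when $E\cap[-|t|,|t|]=\varnothing$; this follows from the elementary fact that for $a,b,c,d\ge 0$, $\frac{a}{a+b}\le\frac{a+c}{a+b+c}$ and $\frac{a}{a+b}\ge\frac{a}{a+b+d}$.

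Then I would apply this monotonicity twice to $\mathcal{Z}$, using the sandwich $R_i\subseteq\mathcal{Z}\subseteq R_i\cup S_i^c$. For the lower bound: shrinking the numerator's set and enlarging the denominator's set both decrease the ratio, so I replace $\mathcal{Z}\setminus[-|t|,|t|]$ by the smaller $R_i\setminus[-|t|,|t|]$ and replace $\mathcal{Z}$ in the denominator by something at least as large — specifically $R_i\cup(S_i^c\cap[-|t|,|t|])$, which contains $\mathcal{Z}$ restricted in a way that only adds mass inside $[-|t|,|t|]$ relative to $R_i$ (one must check that the numerator set and denominator set are consistently chosen, i.e. the denominator equals the numerator set plus the complementary part, so the ratio manipulation is legitimate). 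This yields exactly $L_i$. For the upper bound I symmetrically replace $\mathcal{Z}$ in the numerator by the larger $(R_i\cup S_i^c)\setminus[-|t|,|t|]$ and in the denominator by the smaller $R_i\cup(S_i^c\setminus[-|t|,|t|])$, giving $U_i$. The main obstacle — and the step deserving the most care — is the bookkeeping that ensures the numerator and denominator sets in each bound form a valid $\frac{a}{a+b}$-type pair built from the three disjoint pieces $R_i\setminus[-|t|,|t|]$, $R_i\cap[-|t|,|t|]$, and the relevant parts of $S_i^c$, so that the two one-line algebraic inequalities above can be invoked cleanly; once the sets are decomposed this way, the monotonicity is immediate. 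I would also note in passing that $R_i$ has positive $\mathcal{I}$-measure (it contains the initial interval around $Z^\mathrm{obs}$), so all denominators are strictly positive and the bounds are well-defined.
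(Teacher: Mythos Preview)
Your proposal is correct and follows essentially the same route as the paper. Both arguments rest on the sandwich $R_i\subseteq\mathcal{Z}\subseteq R_i\cup S_i^c$ and the decomposition of the ratio into ``outside'' mass $a=\mathcal{I}(R\setminus[-|t|,|t|])$ and ``inside'' mass $b=\mathcal{I}(R\cap[-|t|,|t|])$; the paper rewrites the ratio as $1/(1+b/a)$ and then verifies that $R_i^{\inf}=R_i\cup(S_i^c\cap[-|t|,|t|])$ and $R_i^{\sup}=R_i\cup(S_i^c\setminus[-|t|,|t|])$ respectively extremize $b$ upward / $a$ downward and vice versa over all $R$ with $R_i\subseteq R\subseteq R_i\cup S_i^c$, which is exactly your $\frac{a}{a+b}$ monotonicity phrased slightly differently. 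One caution: your sentence that $R_i\cup(S_i^c\cap[-|t|,|t|])$ is ``at least as large'' as $\mathcal{Z}$ is not literally true (it can miss $\mathcal{Z}\cap S_i^c\cap[-|t|,|t|]^c$), but your parenthetical and closing paragraph show you already see that the real argument is the $a'/(a'+b')\le a/(a+b)$ inequality with $a'\le a$, $b'\ge b$, which is precisely what the paper does.
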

The proof of Theorem~\ref{thm:main} is presented in Appendix~\ref{app:proof_of_main_theorem}.
The outline of the proof is as follows.
From the Procedure~\ref{alg:simple}, we have $R_i\subseteq R_\infty\subseteq R_i\cup S_i^c$ for any integer $i$ and $R_\infty = \mathcal{Z}$ in~\eqref{eq:one_dimensional_selective_p_value}.
Then, we re-write the selective $p$-value in~\eqref{eq:one_dimensional_selective_p_value} as
\begin{align}
	p^\mathrm{selective}
	 & =
	\mathbb{P}_{\mathrm{H}_0}
	\left(
	|Z| \geq |t|
	~\Big|~
	Z \in R_\infty
	\right) \\
	 & =
	\left.
	\int_{R_\infty \setminus [-|t|,|t|]}f(z)dz
	\middle/
	\int_{R_\infty}f(z)dz
	\right. \\
	 & =
	\frac{\mathcal{I}(R_\infty\setminus [-|t|,|t|])}{\mathcal{I}(R_\infty)}.
\end{align}
Therefore, to evaluate the lower and upper bounds of the selective $p$-value, it is sufficient to solve the following two optimization problems:
\begin{equation}
	\inf_{R\in\mathcal{B}(\mathbb{R})\mid R_i\subseteq R\subseteq R_i\cup S_i^c}
	\frac{\mathcal{I}(R\setminus [-|t|,|t|])}{\mathcal{I}(R)},\
	\sup_{R\in\mathcal{B}(\mathbb{R})\mid R_i\subseteq R\subseteq R_i\cup S_i^c}
	\frac{\mathcal{I}(R\setminus [-|t|,|t|])}{\mathcal{I}(R)}.
\end{equation}
The term to be optimized is determined from the ratio of the probability masses outside and inside of $[-|t|,|t|]$.
The main idea for optimization is that we consider the two sets:
the first is $R_i$ with the intersection of $S_i^c$ and $[-|t|,|t|]$ to construct the lower bound, and the second is $R_i$ with $S_i^c$ minus $[-|t|,|t|]$ to construct the upper bound.
\paragraph{Properties of the bounds.}
The lower bounds $L_i$ and the upper bounds $U_i$ of the selective $p$-value obtained by Procedure~\ref{alg:simple} have some reasonable properties described in the following lemma.
\begin{lemma}
	\label{lemm:bounds_property}
	The lower bounds $L_i$ and upper bounds $U_i$ of the selective $p$-value obtained by Procedure~\ref{alg:simple} are monotonically increasing and decreasing, in $i$, respectively.
	Furthermore, each of them converges to the selective $p$-value as $i\to\infty$.
\end{lemma}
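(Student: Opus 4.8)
The plan is to exploit the explicit formulas for $L_i$ and $U_i$ from Theorem~\ref{thm:main} together with the monotonicity of the sequences $\{S_i\}$ and $\{R_i\}$ established in Procedure~\ref{alg:simple}. The key structural facts I would record first are: (i) $S_i \subseteq S_{i+1}$ and $R_i \subseteq R_{i+1}$ for all $i$, hence $S_{i+1}^c \subseteq S_i^c$; (ii) $R_i \subseteq R_\infty \subseteq R_i \cup S_i^c$, where $R_\infty = \mathcal{Z}$ is the true truncation region; and (iii) $\mathcal{I}$ is a (finite, monotone) measure on $\mathcal{B}(\mathbb{R})$, so $\mathcal{I}(A) \le \mathcal{I}(B)$ whenever $A \subseteq B$, and $\mathcal{I}$ is countably additive on disjoint unions. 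I would also note the elementary fact that both $R_\infty \setminus [-|t|,|t|]$ and $R_\infty$ have positive $\mathcal{I}$-measure (otherwise the selective $p$-value is undefined or trivially $0/1$), so all denominators below are strictly positive and the quotients are well defined.

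For monotonicity, I would argue directly on the two quotients. For $L_i$: the numerator $\mathcal{I}(R_i \setminus [-|t|,|t|])$ is nondecreasing in $i$ since $R_i$ grows; the denominator $\mathcal{I}(R_i \cup (S_i^c \cap [-|t|,|t|]))$ — I would show this is nonincreasing in $i$. The clean way is to observe that $R_i \cup (S_i^c \cap [-|t|,|t|]) = (R_i \cup S_i^c) \cap (R_i \cup [-|t|,|t|])$, or more simply to write $R_i \cup (S_i^c \cap [-|t|,|t|])$ as the disjoint union of $R_i \setminus [-|t|,|t|]$, $(R_i \cap [-|t|,|t|])$, and $(S_i^c \cap [-|t|,|t|]) \setminus R_i$; tracking each piece as $i$ increases (the first grows, the middle grows, the last shrinks, and the combined set of the last two is exactly $[-|t|,|t|] \setminus (R_i \setminus [-|t|,|t|])^c$-type bookkeeping) shows the whole set is contained in its predecessor up to the part already counted. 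The cleanest formulation: $R_i \cup (S_i^c \cap [-|t|,|t|])$ equals $\big(R_i \cup S_i^c\big)$ restricted appropriately; since $R_i$ increases toward $R_\infty$ and $S_i^c$ decreases, one checks $R_{i+1} \cup (S_{i+1}^c \cap [-|t|,|t|]) \subseteq R_i \cup (S_i^c \cap [-|t|,|t|])$ by a pointwise argument: any $z$ in the left side is either in $R_{i+1}$ (then it is in $R_\infty \subseteq R_i \cup S_i^c$, and if $|z|\le|t|$ it lands in the right side, if $|z|>|t|$ one needs $z \in R_i$, which holds because... ) — this pointwise check is where care is needed, so I would instead prefer the disjoint-decomposition bookkeeping. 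Either way, numerator up and denominator down gives $L_i \le L_{i+1}$. The argument for $U_i$ is symmetric with the roles reversed: the numerator $\mathcal{I}((R_i \cup S_i^c)\setminus[-|t|,|t|])$ is nonincreasing (since $R_i \cup S_i^c$ shrinks toward $R_\infty$) and the denominator $\mathcal{I}(R_i \cup (S_i^c \setminus [-|t|,|t|]))$ is also nonincreasing but one shows the numerator decreases at least as fast, equivalently $1 - U_i = \mathcal{I}\big((R_i \cup (S_i^c\setminus[-|t|,|t|])) \cap [-|t|,|t|]\big)/\mathcal{I}(R_i \cup (S_i^c\setminus[-|t|,|t|])) = \mathcal{I}(R_i \cap [-|t|,|t|])/(\text{denominator})$ is nondecreasing — here the numerator of $1-U_i$ simplifies because $(S_i^c\setminus[-|t|,|t|]) \cap [-|t|,|t|] = \emptyset$, which is the key simplification making this tractable.

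For convergence, I would use fact (ii) plus the squeeze. Since $R_i \uparrow$ and $R_i \subseteq R_\infty$, continuity of measure from below gives $\mathcal{I}(R_i) \to \mathcal{I}(R_\infty)$ and $\mathcal{I}(R_i \setminus [-|t|,|t|]) \to \mathcal{I}(R_\infty \setminus [-|t|,|t|])$; since $R_i \cup S_i^c \downarrow$ and each contains $R_\infty$, continuity from above gives $\mathcal{I}(R_i \cup S_i^c) \to \mathcal{I}(R_\infty)$ and likewise for the set minus $[-|t|,|t|]$. (Here I would remark that $\bigcap_i (R_i \cup S_i^c) = R_\infty$: "$\supseteq$" is (ii); "$\subseteq$" follows because once the search is exhaustive $S_\infty^c$ is the complement of all of $\mathbb{R}$ up to $\mathcal{I}$-null sets, or more carefully, $S_i^c \downarrow \emptyset$ in the sense that $\mathcal{I}(S_i^c) \to 0$ — this is exactly the statement that Procedure~\ref{alg:simple} eventually covers $\mathbb{R}$, which I am entitled to assume as in the outline preceding the lemma.) Then $S_i^c \cap [-|t|,|t|]$ and $S_i^c \setminus [-|t|,|t|]$ both have $\mathcal{I}$-measure tending to $0$, so both $L_i$ and $U_i$ have numerator $\to \mathcal{I}(R_\infty\setminus[-|t|,|t|])$ and denominator $\to \mathcal{I}(R_\infty)$, i.e. both converge to $p^\mathrm{selective}$. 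The main obstacle is the denominator-monotonicity bookkeeping for $L_i$ (and dually for $U_i$): it is purely set-theoretic but requires carefully isolating the part of $[-|t|,|t|]$ that migrates from "searched-but-not-truncated-yet unknown" into either $R_i$ or the definitively-excluded region, and confirming no mass is double counted or lost; once the right disjoint decomposition is written down it is routine, but writing it down correctly is the crux.
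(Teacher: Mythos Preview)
Your convergence argument is essentially correct, but the monotonicity argument contains a genuine gap. You claim that the denominator $\mathcal{I}\bigl(R_i\cup(S_i^c\cap[-|t|,|t|])\bigr)$ of $L_i$ is nonincreasing in $i$, and you try to justify this via the set inclusion $R_{i+1}\cup(S_{i+1}^c\cap[-|t|,|t|])\subseteq R_i\cup(S_i^c\cap[-|t|,|t|])$. This inclusion is \emph{false}. Take any $z\in R_{i+1}\setminus R_i$ with $|z|>|t|$ (such $z$ necessarily lies in $S_i^c$, since $R_{i+1}\subseteq R_\infty\subseteq R_i\cup S_i^c$). Then $z$ belongs to the left-hand side through $R_{i+1}$, but on the right-hand side $z\notin R_i$ and $z\notin S_i^c\cap[-|t|,|t|]$ because $|z|>|t|$. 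Concretely, with $t=1$, $R_i=S_i=[2,3]$, and a new searched interval $[3,4]$ found to lie in the truncation region, the denominator set grows from $[-1,1]\cup[2,3]$ to $[-1,1]\cup[2,4]$. Your proposed ``disjoint-decomposition bookkeeping'' cannot rescue the claim because the claim itself is wrong, not merely delicate; the same obstruction arises in your $1-U_i$ argument, whose denominator $\mathcal{I}\bigl(R_i\cup(S_i^c\setminus[-|t|,|t|])\bigr)$ is likewise not nonincreasing.

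The paper avoids all of this by using the variational characterization established in the proof of Theorem~\ref{thm:main}:
\[
L_i=\inf_{R\in\mathcal{R}_i}\frac{\mathcal{I}(R\setminus[-|t|,|t|])}{\mathcal{I}(R)},\qquad
U_i=\sup_{R\in\mathcal{R}_i}\frac{\mathcal{I}(R\setminus[-|t|,|t|])}{\mathcal{I}(R)},
\]
where $\mathcal{R}_i=\{R\in\mathcal{B}(\mathbb{R}):R_i\subseteq R\subseteq R_i\cup S_i^c\}$. The paper then shows the single inclusion $R_{i+1}\cup S_{i+1}^c\subseteq R_i\cup S_i^c$ (via $R_{i+1}\subseteq R_\infty\subseteq R_i\cup S_i^c$ and $S_{i+1}^c\subseteq S_i^c$), which together with $R_i\subseteq R_{i+1}$ gives $\mathcal{R}_{i+1}\subseteq\mathcal{R}_i$. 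Monotonicity of $L_i$ and $U_i$ then follows immediately from the behaviour of infima and suprema over shrinking index sets, and convergence follows from $\mathcal{R}_i\downarrow\{R_\infty\}$. If you wish to salvage a direct argument, the same key inclusion lets you rewrite $L_i=1\big/\bigl(1+\mathcal{I}((R_i\cup S_i^c)\cap[-|t|,|t|])/\mathcal{I}(R_i\setminus[-|t|,|t|])\bigr)$, where now the numerator of the inner ratio genuinely decreases and the denominator genuinely increases; but this is just the variational argument in disguise.
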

The proof of Lemma~\ref{lemm:bounds_property} is presented in Appendix~\ref{app:proof_of_bounds_property}.
Lemma~\ref{lemm:bounds_property} indicates that, in Procedure~\ref{alg:simple}, as the iteration progresses, the bounds of the selective $p$-value become tighter and converge to the selective $p$-value.
\subsection{Search Strategies}
\label{subsec:search_strategy}

From Lemma~\ref{lemm:bounds_property}, it is guaranteed that Procedure~\ref{alg:simple} will eventually terminate no matter how the search is conducted (i.e., no matter how the point $z$ is selected from $S_i^c$ as in line 7 of Procedure~\ref{alg:simple}).
In practice, however, the actual computational cost depends on how $z$ is selected from $S_i^c$.
%
Therefore, we discuss the search strategy, i.e., the selection method of $z$ from $S_i^c$ to construct $S_{i+1}$ from $S_i$.
Because Procedure~\ref{alg:simple} sequentially computes the lower and upper bounds of the selective $p$-value by using $S_i$ and $R_i$, using the notation of Theorem~\ref{thm:main}, our goal is to construct $S_{i+1}$ from $S_i$ such that the $U_{i+1}-L_{i+1}$ is minimized.
The following lemma indicates a sufficient condition for minimizing $U_{i+1}-L_{i+1}$.
\begin{lemma}
	\label{lemm:rephrase}
	To minimize $U_{i+1}-L_{i+1}$, it is sufficient to construct $S_{i+1}$ from $S_i$ such that the following two integral quantities are maximized:
	\begin{equation}
		\mathcal{I}(R_{i+1}),\ \mathcal{I}(S_{i+1}).
	\end{equation}
\end{lemma}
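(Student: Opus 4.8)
The plan is to directly manipulate the expressions for $L_{i+1}$ and $U_{i+1}$ given in Theorem~\ref{thm:main} and show that the difference $U_{i+1} - L_{i+1}$ is a monotone (decreasing) function of the two scalar quantities $\mathcal{I}(R_{i+1})$ and $\mathcal{I}(S_{i+1})$, all other relevant integral quantities being either fixed or expressible in terms of these. First I would introduce shorthand for the pieces that appear: write $A = \mathcal{I}(R_{i+1}\setminus[-|t|,|t|])$, $B = \mathcal{I}(R_{i+1}\cap[-|t|,|t|])$ so that $\mathcal{I}(R_{i+1}) = A+B$, and similarly decompose the contributions coming from $S_{i+1}^c$ inside and outside $[-|t|,|t|]$. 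The key observation is that $\mathcal{I}(S_{i+1}^c) = 1 - \mathcal{I}(S_{i+1})$, and that $S_{i+1}^c$ splits into its part inside $[-|t|,|t|]$ and its part outside; since $\mathcal{I}([-|t|,|t|])$ is a fixed constant (call it $c_0$) determined by the observed statistic and the null density $f$, and $R_{i+1}\subseteq S_{i+1}$, the ``inside'' mass of $S_{i+1}^c$ equals $c_0 - B - (\text{inside mass of } S_{i+1}\setminus R_{i+1})$. The honest bookkeeping step is to check that, after substitution, $U_{i+1}-L_{i+1}$ depends on $S_{i+1},R_{i+1}$ only through $\mathcal{I}(R_{i+1})$ and $\mathcal{I}(S_{i+1})$ — i.e. the ``inside/outside'' split of the newly searched-but-not-truncated region drops out or is pinned down by these two totals together with $c_0$.

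Once the dependence is reduced to the two scalars $\rho = \mathcal{I}(R_{i+1})$ and $\sigma = \mathcal{I}(S_{i+1})$, the second step is a one-line monotonicity argument: I would compute the partial derivatives of $U_{i+1}-L_{i+1}$ with respect to $\rho$ and with respect to $\sigma$ and show both are $\le 0$ on the admissible range (recalling the constraints $0\le \rho\le\sigma\le 1$ and $R_i\subseteq R_{i+1}$, $S_i\subseteq S_{i+1}$, which come from Procedure~\ref{alg:simple}). Intuitively: increasing $\mathcal{I}(R_{i+1})$ moves mass from the ``uncertain'' pool $S_{i+1}^c$ into the confirmed truncation region, which can only shrink the gap between the extremal feasible sets in the two optimization problems of Theorem~\ref{thm:main}; increasing $\mathcal{I}(S_{i+1})$ shrinks $S_{i+1}^c$ itself, again reducing the slack. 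Since both $L$ and $U$ are sandwiched between the value at the ``smallest feasible $R$'' and the ``largest feasible $R$'', and both of these feasible extremes are governed by the total masses $\rho$ and $\sigma$, the gap is nonincreasing in each. Hence maximizing $\mathcal{I}(R_{i+1})$ and $\mathcal{I}(S_{i+1})$ is sufficient (though not necessarily necessary) to minimize $U_{i+1}-L_{i+1}$, which is exactly the claim.

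The main obstacle I anticipate is the first step: verifying cleanly that the inside-versus-outside distribution of the newly-explored region $S_{i+1}\setminus R_{i+1}$ genuinely cancels out of the expression $U_{i+1}-L_{i+1}$. A priori, $L_{i+1}$ involves $S_{i+1}^c\cap[-|t|,|t|]$ in its denominator while $U_{i+1}$ involves $S_{i+1}^c\setminus[-|t|,|t|]$, so the inside/outside split does appear asymmetrically in the two fractions, and it is not obvious that the \emph{difference} is insensitive to it. I would handle this by recognizing that the lemma's conclusion need only be about a \emph{sufficient} condition: even if some residual dependence on the split remains, one can argue that, for fixed $\rho$ and $\sigma$, any admissible reallocation of the ``uncertain'' mass can only increase the gap relative to the best case, and then observe that the search strategies proposed in the sequel do not control that split anyway — so the operationally meaningful statement is precisely that, among strategies that fix the split, larger $\rho$ and $\sigma$ are better. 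If a fully split-independent identity does hold (which a careful expansion should reveal), the argument is even cleaner; otherwise the ``sufficient condition'' phrasing of the lemma gives the necessary slack. I would present whichever of the two routes the algebra supports, leaning on the explicit formulas for $L$ and $U$ from Theorem~\ref{thm:main} and the set inclusions guaranteed by Procedure~\ref{alg:simple}.
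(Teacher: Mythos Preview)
Your reduction step will not go through: the gap $U_{i+1}-L_{i+1}$ is \emph{not} a function of the two totals $\rho=\mathcal{I}(R_{i+1})$ and $\sigma=\mathcal{I}(S_{i+1})$ alone. Writing $A=\mathcal{I}(R_{i+1}\setminus[-|t|,|t|])$, $B=\mathcal{I}(R_{i+1}\cap[-|t|,|t|])$, $C=\mathcal{I}(S_{i+1}^c\setminus[-|t|,|t|])$, $D=\mathcal{I}(S_{i+1}^c\cap[-|t|,|t|])$, Theorem~\ref{thm:main} gives
\[
L_{i+1}=\frac{A}{A+B+D},\qquad U_{i+1}=\frac{A+C}{A+B+C},
\]
and the difference visibly depends on all four of $A,B,C,D$, not merely on $A+B$ and $C+D$. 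So your partial-derivative argument in $(\rho,\sigma)$ is not well-posed, and your fallback (``the sufficient-condition phrasing gives slack'') does not supply an actual inequality relating the gap to $\rho$ and $\sigma$.

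The paper avoids this obstacle by never looking at the gap as a single object. It first makes the (already sufficient-but-not-necessary) move of treating $L_{i+1}$ and $U_{i+1}$ separately: maximize the former, minimize the latter. Each is rewritten via the identity $\mathcal{I}(R\setminus[-|t|,|t|])/\mathcal{I}(R)=1/\bigl(1+\mathcal{I}(R\cap[-|t|,|t|])/\mathcal{I}(R\setminus[-|t|,|t|])\bigr)$ applied to $R_{i+1}^{\inf}$ and $R_{i+1}^{\sup}$. This produces four scalar targets: maximize $\mathcal{I}(R_{i+1}\setminus[-|t|,|t|])$ and $\mathcal{I}(R_{i+1}\cap[-|t|,|t|])$, and minimize $\mathcal{I}((R_{i+1}\cup S_{i+1}^c)\cap[-|t|,|t|])$ and $\mathcal{I}((R_{i+1}\cup S_{i+1}^c)\setminus[-|t|,|t|])$. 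Because the two ``maximize'' regions are disjoint with union $R_{i+1}$, and the two ``minimize'' regions are disjoint with union $R_{i+1}\cup S_{i+1}^c$, these collapse to: maximize $\mathcal{I}(R_{i+1})$ and minimize $\mathcal{I}(R_{i+1}\cup S_{i+1}^c)=\mathcal{I}(R_{i+1})+1-\mathcal{I}(S_{i+1})$. The stated sufficient condition then follows. The disjoint-union step is the mechanism that lets the inside/outside split disappear --- and it only works because $L$ and $U$ were handled term-by-term, not subtracted first.
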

The proof of Lemma~\ref{lemm:rephrase} is presented in Appendix~\ref{app:proof_of_rephrase}.
In the following, based on Lemma~\ref{lemm:rephrase}, we propose three search strategies using heuristics.
%
%
We regard the search strategy as a mapping $\pi: S_i\mapsto z$, where $z$ is the point in $S_i^c$ to be searched next, and then consider the following three types of mapping $\pi$:
\begin{equation}
	\label{eq:policy}
	\pi_1(S_i) = \argmin_{z\in S_i^c}\left|z - t\right|,\
	\pi_2(S_i) = \argmax_{z\in S_i^c}f(z),\
	\pi_3(S_i) = \argmax_{z\in E}f(z),
\end{equation}
where $E=\left\{\inf(S_i^c)_{\geq t},\sup(S_i^c)_{\leq t}\right\}$, which are both endpoints of the largest interval containing $t$ in $S_i$.
%
%
%
%
Figure~\ref{fig:search_strategies} illustrates these three search strategies.

These three search strategies are employed based on the following heuristic ideas.
First, it is reasonable to conjecture that $\mathcal{I}(R_{i+1})$ has a good chance to be large at points in $S_i^c$ that are close to $t$ because the same result of the algorithm $\mathcal{A}$ as the observed one is more likely to be obtained in the neighborhood of $t$.
Second, it is reasonable to simply conjecture that $\mathcal{I}(S_{i+1})$ has a good chance to be large at points in $S_i^c$ at which $f$ takes large values (note that, the value of $f$ is the only information we can use because we cannot know the length of intervals newly added to $S_{i+1}$ beforehand).

If we prefer to focus on the maximization of the $\mathcal{I}(R_{i+1})$, the choice is made by the mapping $\pi_1$.
On the other hand, if we prefer to focus on the maximization of the $\mathcal{I}(S_{i+1})$, the choice is made by the mapping $\pi_2$.
Finally, if we prefer to maximize both $\mathcal{I}(R_{i+1})$ and $\mathcal{I}(S_{i+1})$, the choice is made by the mapping $\pi_3$.
\begin{figure}[htbp]
	\centering
	\includegraphics[width=0.5\linewidth]{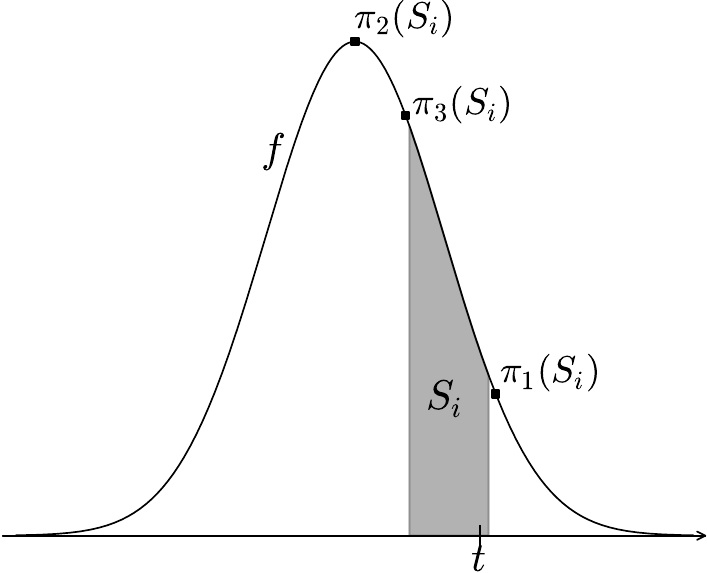}
	\caption{Three types of search strategies}
	\label{fig:search_strategies}
\end{figure}
\subsection{Bounds on Selective Confidence Intervals}
Using a similar approach, it is possible to obtain the upper and lower bounds of the selective confidence interval in the case of selective $z$-test.
See Appendix~\ref{app:extension_to_confidence_interval} for the details.

\newpage
\section{Experiment}
\label{sec:sec4}
We present the experimental results for each of the three types of statistical test: selective $z$-test and $\chi$-test for feature selection in linear models~\citep{sugiyama2021more} and selective $z$-test for attention region identification in deep neural networks (DNNs)~\citep{miwa2023valid}.
We describe the test problems for feature selection in linear models in section~\ref{subsec:sfs} and the test problems for attention region identification in DNNs in section~\ref{subsec:dnn}.
Details of the selection event characterization are deferred to Appendix~\ref{app:si_for_sfs} and \ref{app:si_for_dnn}.
In section~\ref{subsec:synthetic}, we describe the results of synthetic data experiments to confirm the type I error rates, powers and computational costs of the proposed method and several baseline methods.
In section~\ref{subsec:real}, we describe the results of real data experiments to demonstrate the computational advantages of the proposed method.
\subsection{Feature Selection in Linear Models}
\label{subsec:sfs}
As an example of feature selection algorithm for linear models, we consider forward stepwise feature selection (SFS).
In this problem, the data $\bm{D}$ is an $n$-dimensional random \emph{response vector}\footnote{Although the response vector is usually denoted as $\bm{y} \in \mathbb{R}^n$, we use the notation $\bm{D}$, to represent the data vector for different problems in the same notation.}.
We denote $X \in \mathbb{R}^{n \times p}$ by a (non-random) feature matrix, where $p$ is the number of original features.
Let $K$ be the number of the selected features by the SFS method and $\mathcal{M}_K\subset [p]$ be the selected feature set.
\paragraph{Selective $z$-test.}
The true coefficient for each selected feature $j \in \mathcal{M}_K$ is defined as:
\begin{equation}
    \beta_j =
    \left[(X_{\mathcal{M}_K}^\top X_{\mathcal{M}_K})^{-1} X_{\mathcal{M}_K}^\top \bm{D} \right]_j
    =\bm{\eta}^\top\bm{\mu}.
\end{equation}
where
$
    \bm{\eta} =
    X_{\mathcal{M}_K}
    \left(X^\top_{\mathcal{M}_K} X_{\mathcal{M}_K}\right)^{-1} \bm{e}_j
$
with $\bm{e}_j \in \mathbb{R}^{|\mathcal{M}_K|}$ being a basis vector with a $1$ at the $j^\mathrm{th}$ position.
Note that $\bm{\eta}$ depends on $j$, but we omit the dependence for notational simplicity.
For the inference on the $j^\mathrm{th}$ selected feature, we consider the statistical test for each true coefficient $\beta_j = \bm{\eta}^\top \bm{\mu}$ as follows:
\begin{equation}
    \label{eq:hypotheses_sfs}
    \mathrm{H}_{0, j}: \beta_j=0 \quad \text{vs.} \quad
    \mathrm{H}_{1, j}: \beta_j \neq 0.
\end{equation}
\paragraph{Selective $\chi$-test.}
Another possible interest in SFS is to make an inference on a subset of selected features $g\subset \mathcal{M}_K$.
In this case,
it is sufficient to test whether the true mean $\bm{\mu}$ is correlated with a subgroup $X_g$ after regressing out all the other features $\mathcal{M}_k \setminus g$.
Then, we consider the statistical test for
the magnitude of the projection of $\bm{\mu}$ onto the column space of $(I_n - P_{\mathcal{M}_K\setminus g})X_g$ as follows:
\begin{equation}
    \label{eq:hypotheses_sfs_chi}
    \mathrm{H}_{0, g}: \left\|P\bm{\mu}\right\|=0 \quad \text{vs.} \quad
    \mathrm{H}_{1, g}: \left\|P\bm{\mu}\right\| \neq 0,
\end{equation}
where $P$ is the projection matrix onto the column space of $(I_n - P_{\mathcal{M}_K\setminus g})X_g$.
The projection matrix $P$ is obtained by using the singular value decomposition $(I_n - P_{\mathcal{M}_K\setminus g})X_g=U_gS_gV_g^\top$ as $P=U_gU_g^\top$.
\subsection{Attention Region Identification in DNNs}
\label{subsec:dnn}
As an example of a data-driven hypothesis obtained by a DNN, let us consider the problem of quantifying the statistical significance of a saliency map in image classification problems.
In this problem, the data $\bm{D}$ is an $n$-dimensional vector of pixel values in an image.
Consider a trained DNN for an image classification problem.
Given a test image to the trained DNN, a saliency method such as CAM or Grad-CAM visualizes the regions of the test image so-called \emph{salient region} that contribute the most to the classification made by the DNN.
Our goal is to quantify the statistical significance of the saliency region.
Specifically, we consider a subset of pixels in the test image whose saliency map values are greater than a certain threshold and call this subset of pixels as salient region while the rest of the pixels are called non-salient region.
We denote the salient and non-salient regions as $\mathcal{O}_{\bm{D}} \subset [n]$ and $\mathcal{B}_{\bm{D}} = [n] \setminus \mathcal{O}_{\bm{D}}$, respectively (where $\mathcal{O}$ and $\mathcal{B}$ stand for Object and Background, respectively).
\paragraph{Selective $z$-test.}
To test the statistical significance of the saliency region, we consider a statistical test for the following hypotheses:
\begin{equation}
    \label{eq:hypotheses_dnn}
    \mathrm{H}_{0}: \bar{\mu}_{\mathcal{O}_{\bm{D}}}=\bar{\mu}_{\mathcal{B}_{\bm{D}}}
    \quad \text{vs.} \quad
    \mathrm{H}_{1}: \bar{\mu}_{\mathcal{O}_{\bm{D}}} \neq \bar{\mu}_{\mathcal{B}_{\bm{D}}},
\end{equation}
where $\bar{\mu}_{\mathcal{O}_{\bm{D}}} = \frac{1}{|\mathcal{O}_{\bm{D}}|}\sum_{i \in \mathcal{O}_{\bm{D}}} \mu_i$ and $\bar{\mu}_{\mathcal{B}_{\bm{D}}}$ is similarly defined.
This null hypothesis is equivalent to the $\bm{\eta}^\top\bm{\mu}=0$, where
$
    \bm{\eta} =
    \frac{1} {|\mathcal{O}_{\bm{D}}|} \mathbf{1}^n_{\mathcal{O}_{\bm{D}}} -
    \frac{1} {|\mathcal{B}_{\bm{D}}|} \mathbf{1}^n_{\mathcal{B}_{\bm{D}}}
$
with $\mathbf{1}^n_{\mathcal{C}} \in \mathbb{R}^n$ being a vector whose elements belonging to a set $\mathcal{C}$ are set to 1, and 0 otherwise.
\subsection{Synthetic Data Experiments}
\label{subsec:synthetic}
We compared our proposed method with the \texttt{naive} method, over-conditioning (\texttt{OC}) method, and the PP-based SI with exhaustive search (\texttt{exhaustive}).
We first examined the type I error rate, the power and the computation time for the four types of proposed method: \texttt{proposed(precision)} with precision $0.1\%$ and $1.0\%$, and \texttt{proposed(decision)} at significance level $0.01$ and $0.05$, by using the search strategy $\pi_3$.
Furthermore, we conducted a comparison of the number of iterations (search count) of Procedure~\ref{alg:simple} from the following two perspectives.
In the first perspective, we compared the types of the aforementioned four types of proposed method and examined how the search count varies based on differences in precisions and significance levels.
In the second perspective, we compared the three search strategies in section~\ref{subsec:search_strategy} in the cases of \texttt{proposed(precision)} with precision $0.1\%$ and \texttt{proposed(decision)} at significance level $0.05$.
More details (methods for comparison, etc.) are deferred to Appendix~\ref{app:experiment_details}
\paragraph{Selective $z$-test in SFS.}
We generated the dataset $\{(\bm{x}_i, D_i)\}_{i\in[n]}$ by $\bm{x}_i\sim\mathcal{N}(0, I_p)$ and $D_i = \bm{x}_i^\top \bm{\beta} + \epsilon_i$ with $\epsilon_i\sim\mathcal{N}(0, 1)$.
We set $p=10, K=5$ for all experiments.
The coefficients $\bm{\beta}$ was set to a $10$-dimensional zero vector for generating null datasets and set to a 10-dimensional vector whose first five elements are $\Delta$'s and the others are $0$'s to generate datasets for testing power.
We set $n\in \{100, 200, 300, 400\}$ for experiments conducted under the null hypothesis and $n=200, \Delta\in \{0.1, 0.2, 0.3, 0.4\}$ for experiments conducted under the alternative hypothesis.
\paragraph{Selective $\chi$-test in SFS.}
We generated the dataset exactly as in the case of the selective $z$-test.
\paragraph{Selective $z$-test in DNN.}
We generated the image $\{D_i=s_i +\epsilon_i\}_{i\in[n]}$ with $\epsilon_i\sim\mathcal{N}(0, 1)$.
Each $s_i$ was set to $s_i=0,\forall i \in [n]$ for generating null images and set to $s_i=\Delta,\forall i \in \mathcal{S},s_i=0,\forall i \notin [n]\setminus \mathcal{S}$ to generate images for testing power, where $\mathcal{S}$ is the true salient region whose location is randomly determined.
We set $d\in \{8, 16, 32, 64\}$ for experiments conducted under the null hypothesis and $d=16, \Delta\in \{1, 2, 3, 4\}$ for experiments conducted under the alternative hypothesis.
The number of instances was set to $n=d^2$.
We obtained salient regions by thresholding at $\tau=0$ using CAM as the saliency method for all experiments.
\paragraph{Numerical results.}
The results of type I error rates are shown in Figures~\ref{fig:fpr_0.05} and \ref{fig:fpr_0.01}.
Four of the five options \texttt{OC}, \texttt{exhaustive}, \texttt{proposed(precision)} and \texttt{proposed(decision)} successfully controlled the type I error rates under the significance levels, whereas \texttt{naive} could not.
Because \texttt{naive} method failed to control the type I error rate, we no longer considered its power.
The results of powers are shown in Figures~\ref{fig:tpr_0.05} and \ref{fig:tpr_0.01}.
We confirmed that the powers of the proposed method are the same as those of \texttt{exhaustive}, indicating that we succeeded in providing selective $p$-values without exhaustive search (note, however, that \texttt{proposed(precision)} with $1.0\%$ precision showed slightly lower powers because the specified precision is insufficient).
The \texttt{OC} method has the lowest power because it considers several extra conditions, which causes the loss of power.
The results of computation time required for the type I error rate experiments and the power experiments are shown in Figures~\ref{fig:time_null} and \ref{fig:time_alternative}.
All of the four types of proposed method are faster than the \texttt{exhaustive} method in all experimental settings.

Finally, the results of search counts for termination criteria and search strategies under the null hypotheses, and under the alternative hypotheses are shown in Figures~\ref{fig:count_null_criteria}, \ref{fig:count_alternative_criteria}, \ref{fig:count_null_strategy} and \ref{fig:count_alternative_strategy}, respectively.
Figures~\ref{fig:count_null_criteria} and \ref{fig:count_alternative_criteria}, which compare the termination criteria, show that \texttt{proposed(decision)} can significantly reduce search counts.
In particular, under the alternative hypotheses, the larger $\Delta$, the more biased the distribution of $p$-values becomes, resulting in a decrease in search counts.
Figures~\ref{fig:count_null_strategy} and \ref{fig:count_alternative_strategy}, which compare the search strategies, show that $\pi_3$ performs best for \texttt{proposed(precision)}.
It is difficult to evaluate the advantage and disadvantage of the three search strategies for \texttt{proposed(decision)}.
\subsection{Real Data Experiments}
\label{subsec:real}
We compared the two types of proposed method (\texttt{proposed(precision)} with precision $0.1\%$ and \texttt{proposed(decision)} at significance level $0.05$) and conventional method (\texttt{exhaustive}) on real datasets.
We used search strategy $\pi_3$ in proposed method.
For two types of tests in SFS, we used the Concrete Compressive Strength dataset in UCI machine learning repository~\citep{concrete_compressive_strength}, which consists of 1030 instances, each consisting of an $8$-dimensional feature vector and one response variable.
We randomly generated 250 sub-sampled datasets with sizes 400 from this dataset and applied the SFS with $K=3$.
For a test in DNN, we used the brain image dataset extracted from the dataset used in~\citep{buda2019association}, which includes 939 and 941 images with and without tumors, respectively.
We randomly extracted 250 images each with and without tumors and obtained salient regions by thresholding at $\tau=100$.
The results of search count for the selective $z$-test and $\chi$-test in SFS are shown in Figure~\ref{fig:sfs_real}, and the results of search count for the selective $z$-test in DNN for images with and without tumors are shown in Figure~\ref{fig:dnn_real}.
In all of the results, the proposed method significantly reduce search counts.
\begin{figure}[htbp]
    \centering
    \includegraphics[width=0.7\linewidth]{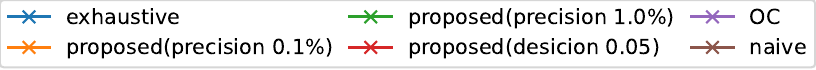}
    \begin{minipage}[b]{0.32\linewidth}
        \centering
        \includegraphics[width=1.0\linewidth]{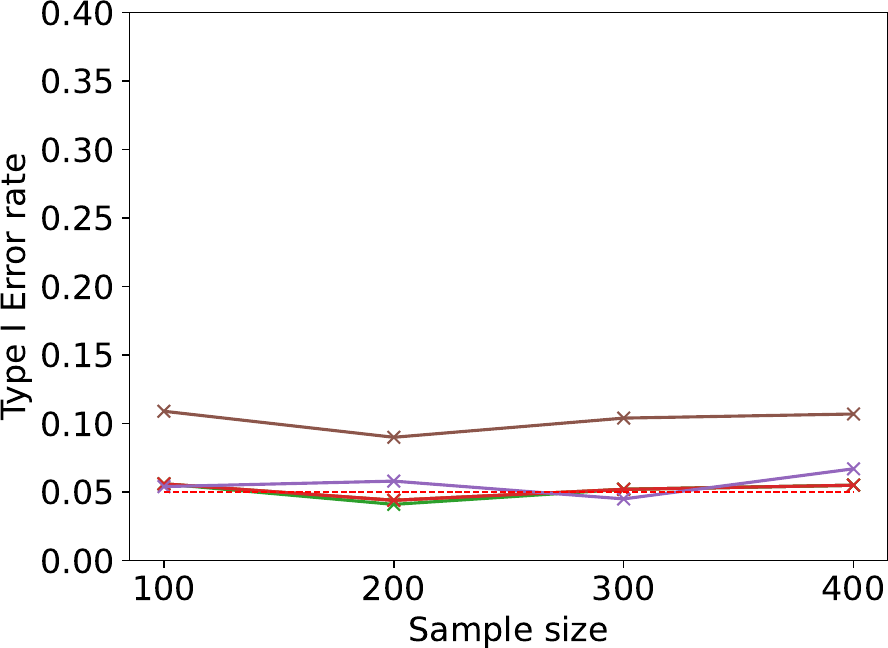}
        \subcaption{SFS ($z$-test)}
    \end{minipage}
    \begin{minipage}[b]{0.32\linewidth}
        \centering
        \includegraphics[width=1.0\linewidth]{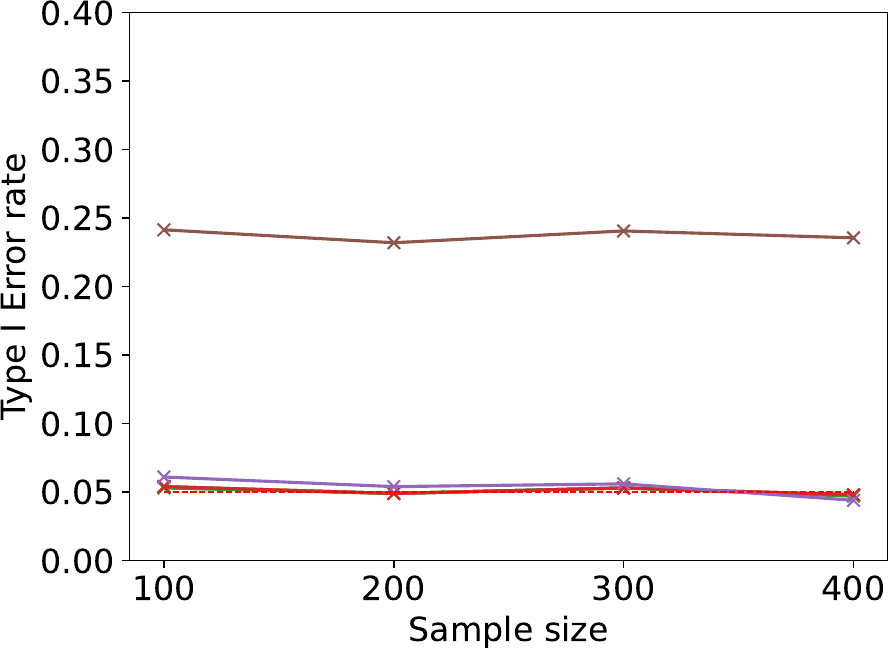}
        \subcaption{SFS ($\chi$-test)}
    \end{minipage}
    \begin{minipage}[b]{0.32\linewidth}
        \centering
        \includegraphics[width=1.0\linewidth]{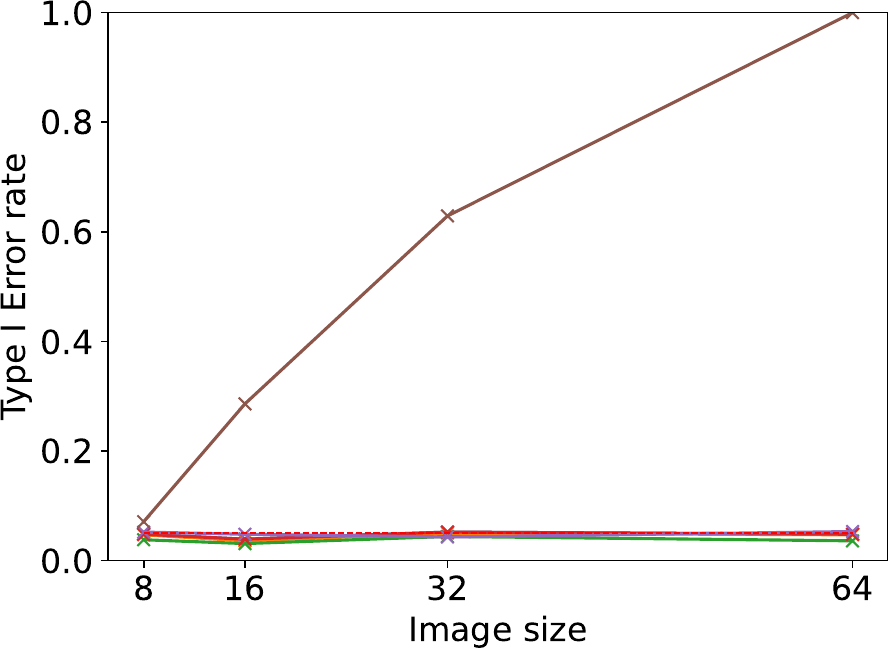}
        \subcaption{DNN ($z$-test)}
    \end{minipage}
    \caption{Type I Error rates (significance level is $0.05$)}
    \label{fig:fpr_0.05}
\end{figure}
\begin{figure}[htbp]
    \centering
    \includegraphics[width=0.7\linewidth]{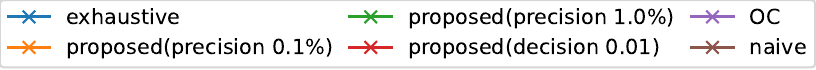}
    \begin{minipage}[b]{0.32\linewidth}
        \centering
        \includegraphics[width=1.0\linewidth]{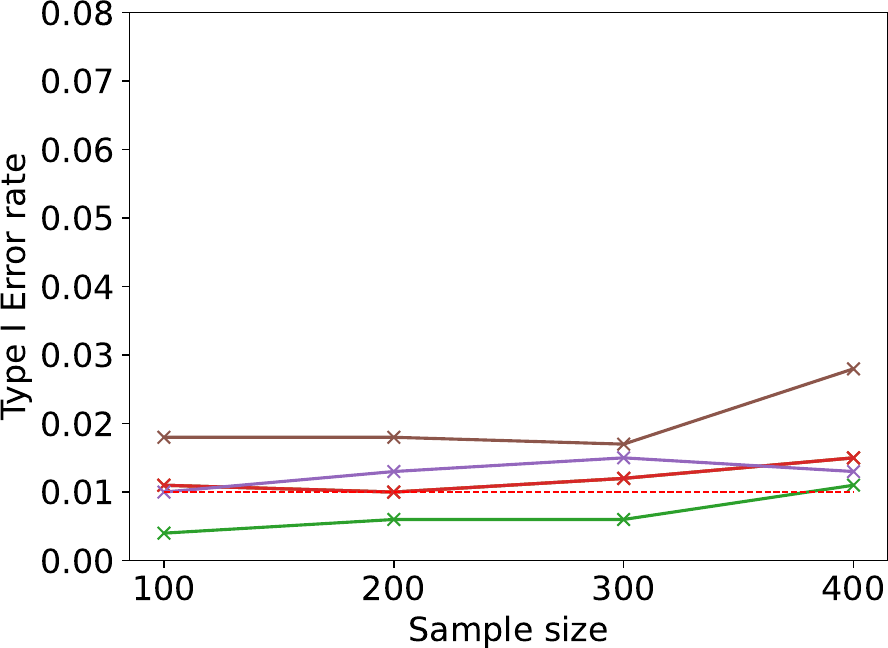}
        \subcaption{SFS ($z$-test)}
    \end{minipage}
    \begin{minipage}[b]{0.32\linewidth}
        \centering
        \includegraphics[width=1.0\linewidth]{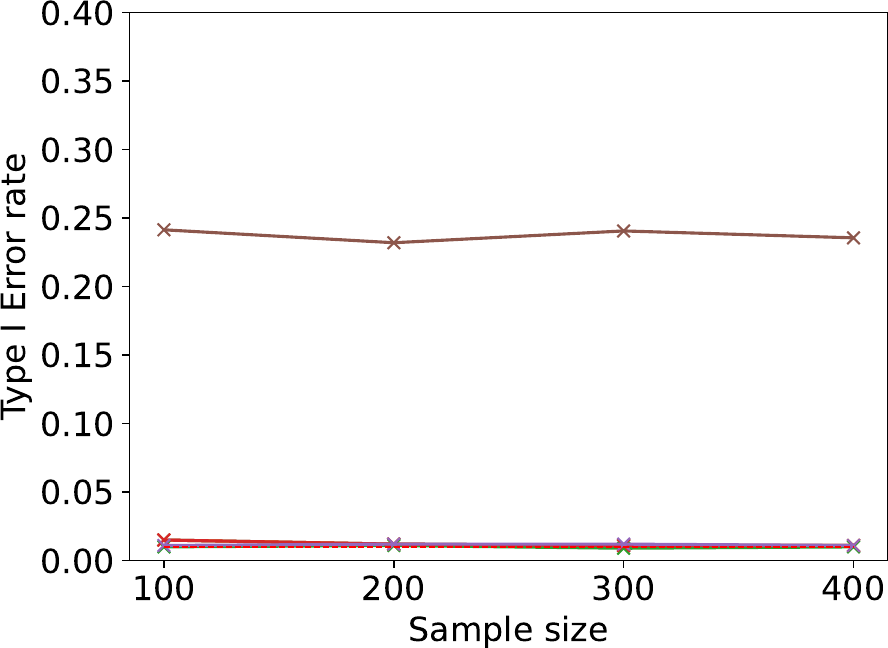}
        \subcaption{SFS ($\chi$-test)}
    \end{minipage}
    \begin{minipage}[b]{0.32\linewidth}
        \centering
        \includegraphics[width=1.0\linewidth]{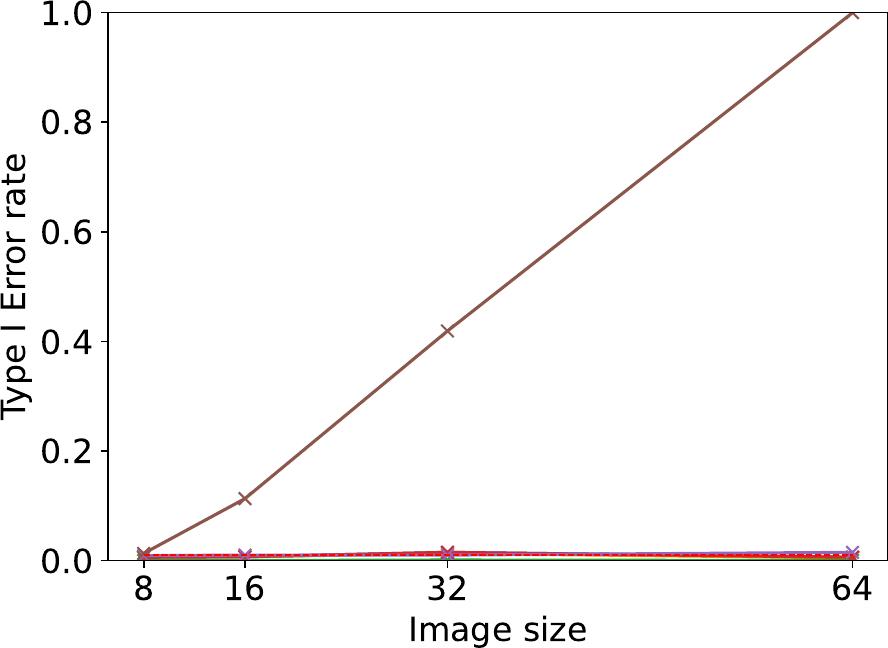}
        \subcaption{DNN ($z$-test)}
    \end{minipage}
    \caption{Type I Error rates (significance level is $0.01$)}
    \label{fig:fpr_0.01}
\end{figure}
\begin{figure}[htbp]
    \centering
    \includegraphics[width=0.7\linewidth]{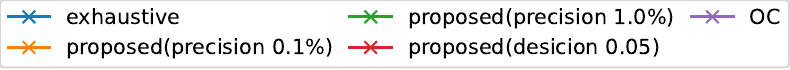}
    \begin{minipage}[b]{0.32\linewidth}
        \centering
        \includegraphics[width=1.0\linewidth]{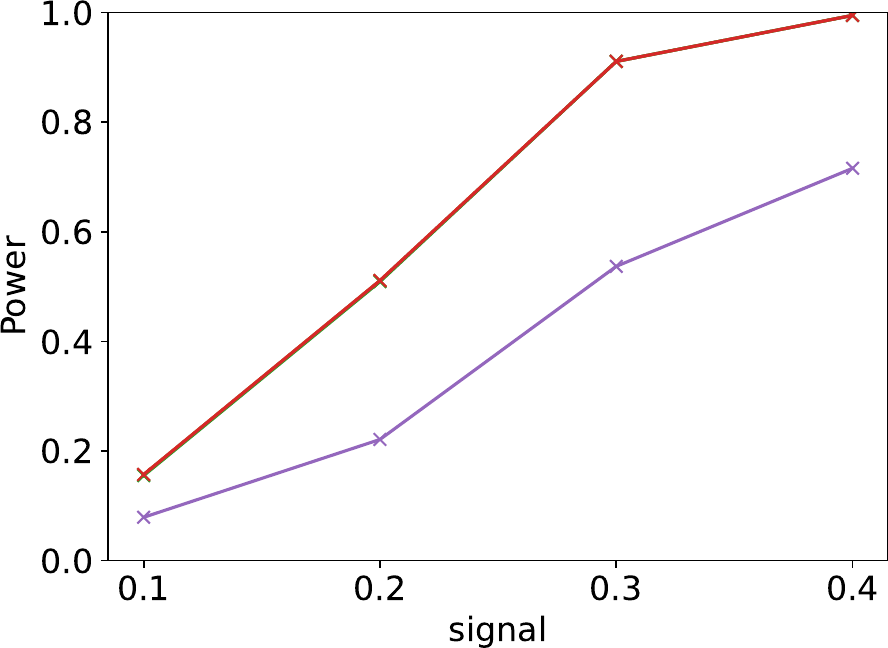}
        \subcaption{SFS ($z$-test)}
    \end{minipage}
    \begin{minipage}[b]{0.32\linewidth}
        \centering
        \includegraphics[width=1.0\linewidth]{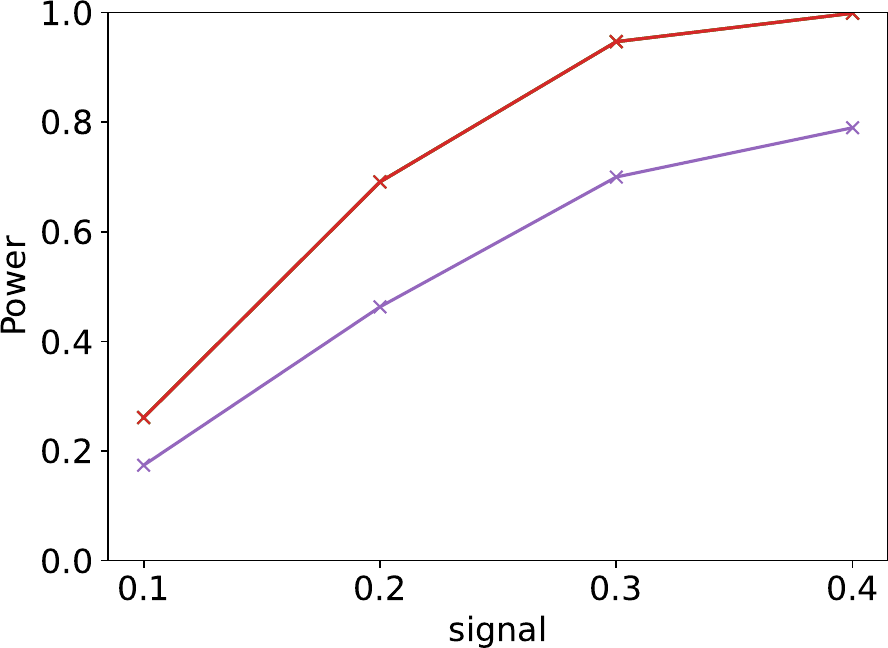}
        \subcaption{SFS ($\chi$-test)}
    \end{minipage}
    \begin{minipage}[b]{0.32\linewidth}
        \centering
        \includegraphics[width=1.0\linewidth]{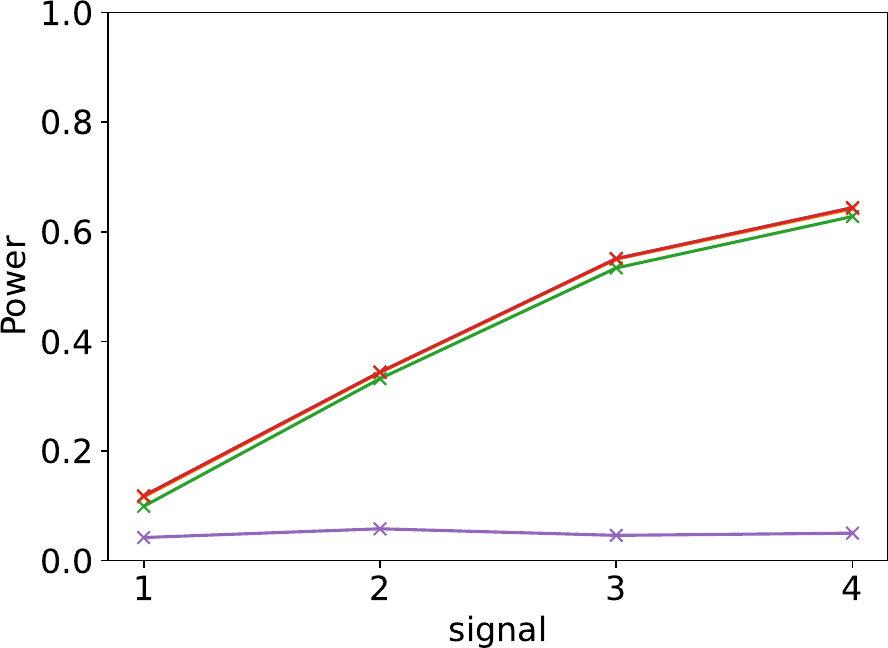}
        \subcaption{DNN ($z$-test)}
    \end{minipage}
    \caption{Powers (significance level is $0.05$)}
    \label{fig:tpr_0.05}
\end{figure}
\begin{figure}[htbp]
    \centering
    \includegraphics[width=0.7\linewidth]{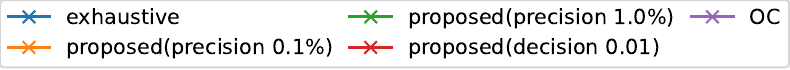}
    \begin{minipage}[b]{0.32\linewidth}
        \centering
        \includegraphics[width=1.0\linewidth]{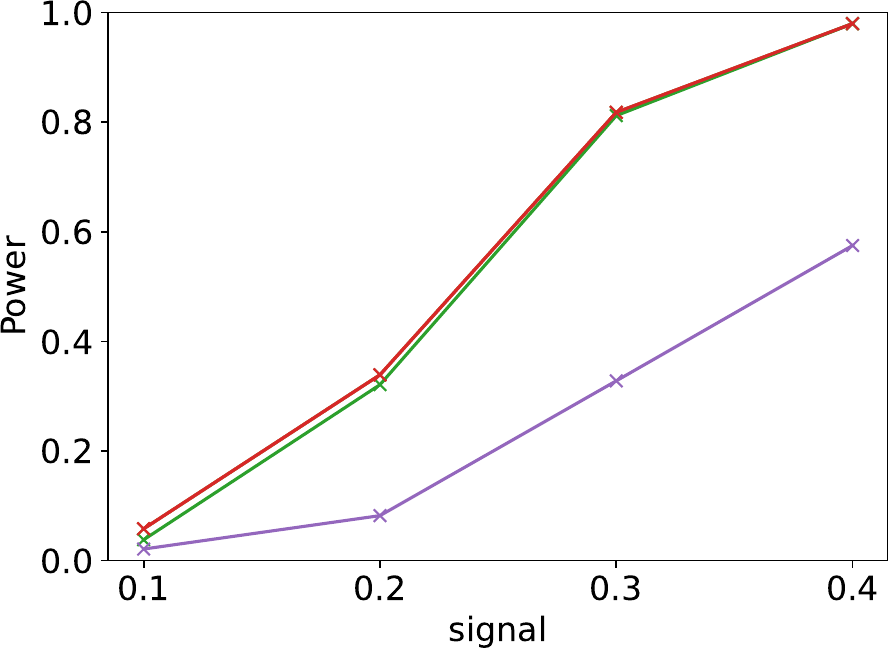}
        \subcaption{SFS ($z$-test)}
    \end{minipage}
    \begin{minipage}[b]{0.32\linewidth}
        \centering
        \includegraphics[width=1.0\linewidth]{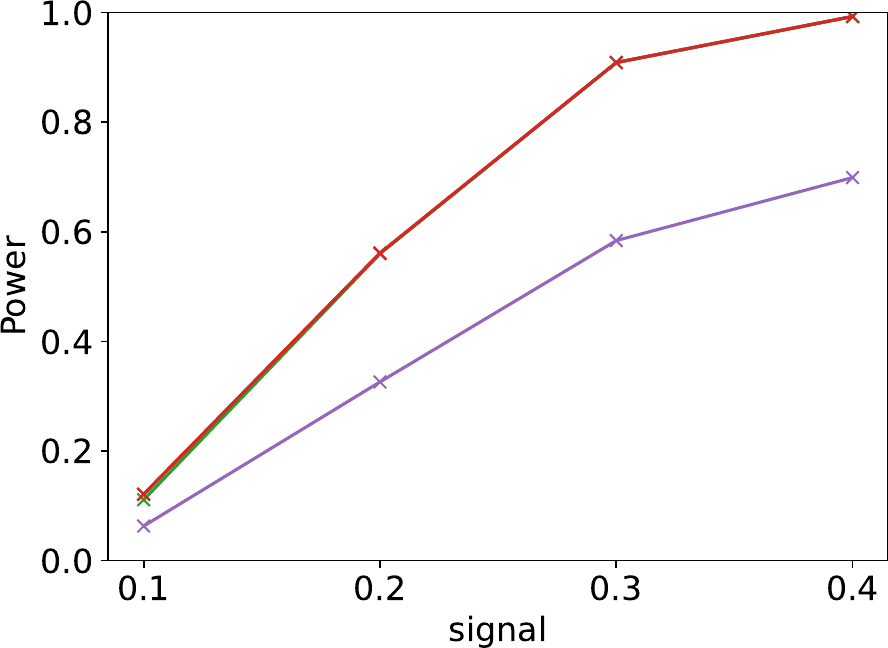}
        \subcaption{SFS ($\chi$-test)}
    \end{minipage}
    \begin{minipage}[b]{0.32\linewidth}
        \centering
        \includegraphics[width=1.0\linewidth]{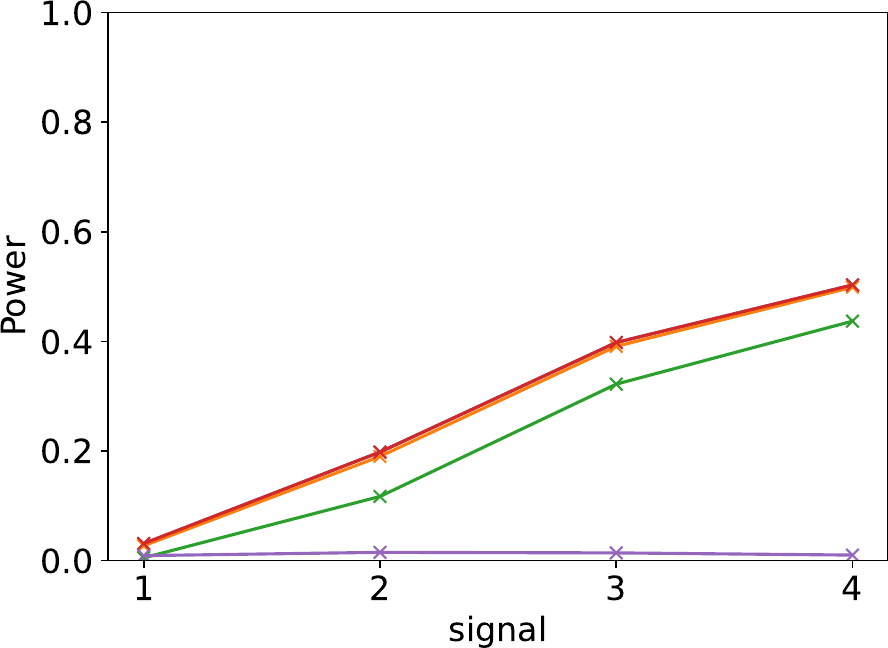}
        \subcaption{DNN ($z$-test)}
    \end{minipage}
    \caption{Powers (significance level is $0.01$)}
    \label{fig:tpr_0.01}
\end{figure}
\begin{figure}[htbp]
    \centering
    \includegraphics[width=0.8\linewidth]{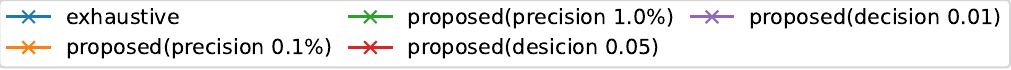}
    \begin{minipage}[b]{0.32\linewidth}
        \centering
        \includegraphics[width=1.0\linewidth]{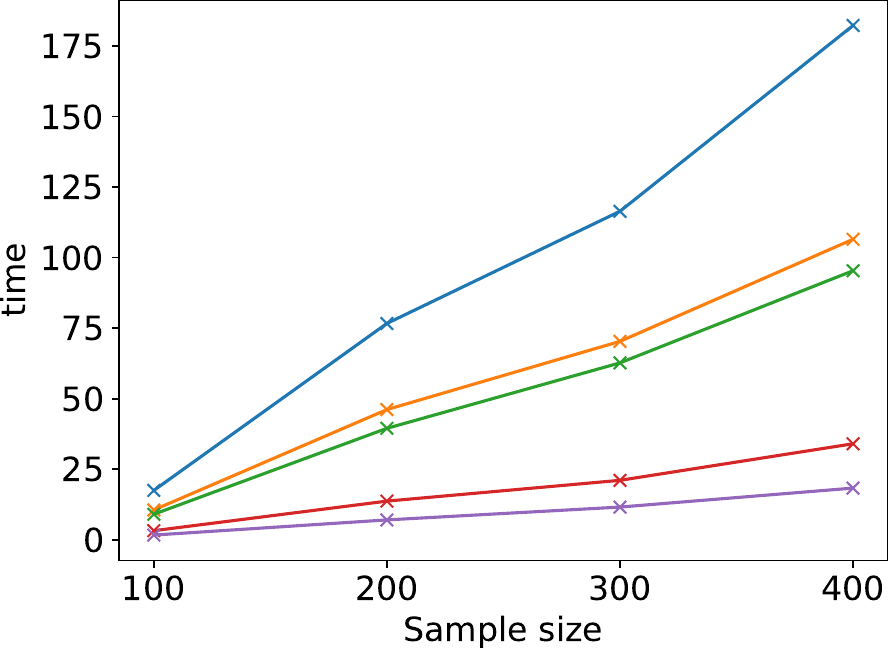}
        \subcaption{SFS ($z$-test)}
    \end{minipage}
    \begin{minipage}[b]{0.32\linewidth}
        \centering
        \includegraphics[width=1.0\linewidth]{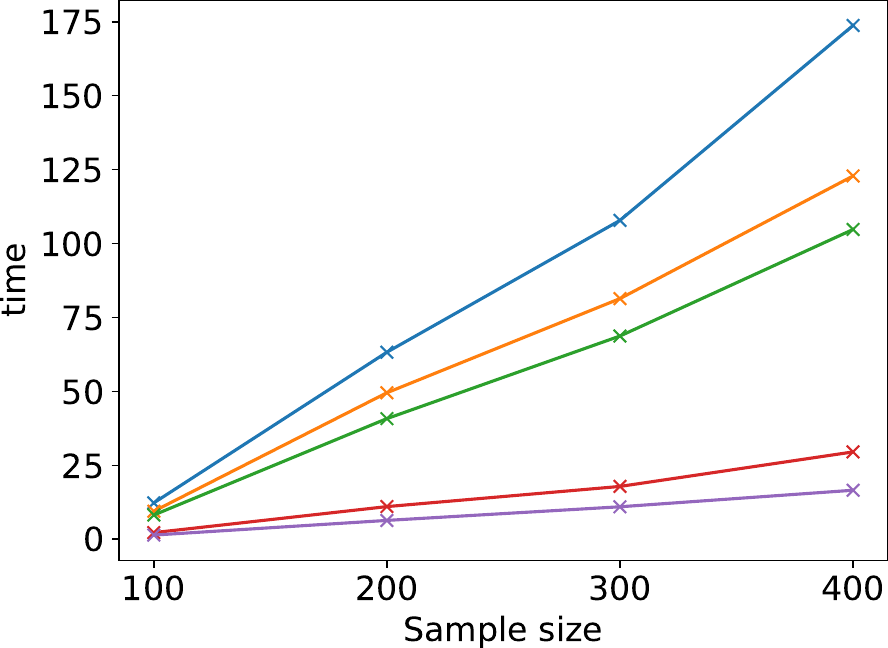}
        \subcaption{SFS ($\chi$-test)}
    \end{minipage}
    \begin{minipage}[b]{0.32\linewidth}
        \centering
        \includegraphics[width=1.0\linewidth]{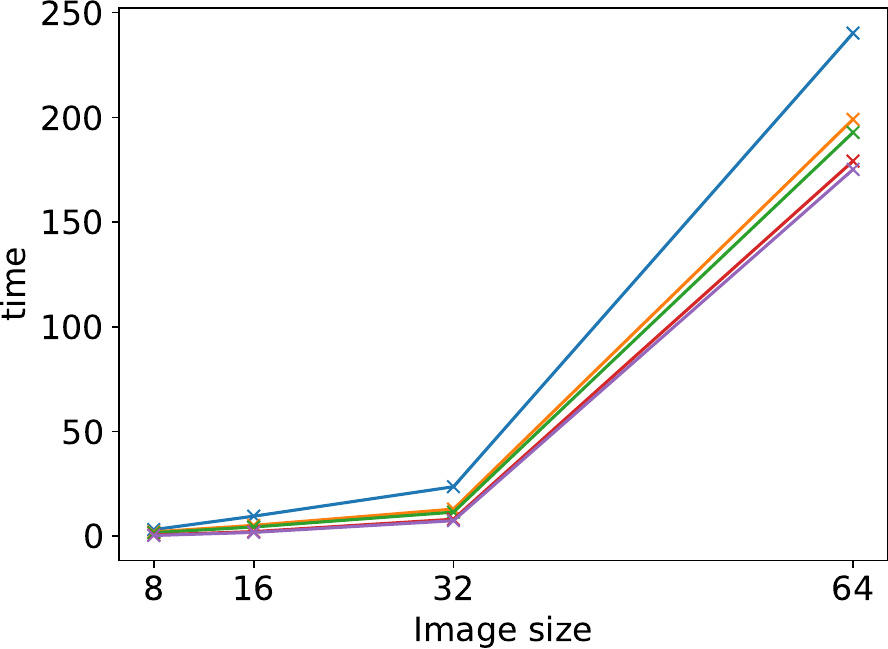}
        \subcaption{DNN ($z$-test)}
    \end{minipage}
    \caption{Computation time under null hypotheses}
    \label{fig:time_null}
\end{figure}
\begin{figure}[htbp]
    \centering
    \includegraphics[width=0.8\linewidth]{Fig/legends/time_and_count.pdf}
    \begin{minipage}[b]{0.32\linewidth}
        \centering
        \includegraphics[width=1.0\linewidth]{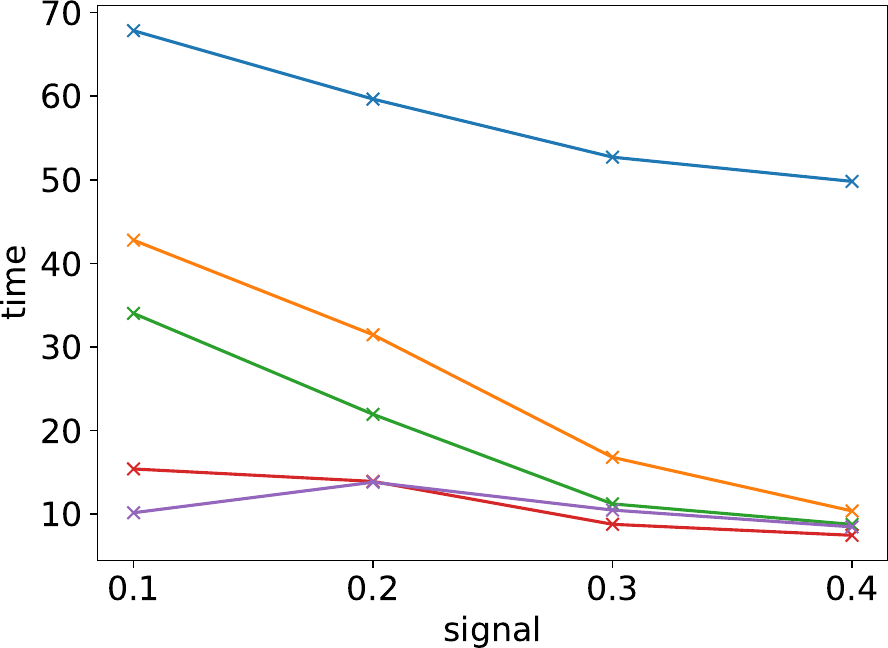}
        \subcaption{SFS ($z$-test)}
    \end{minipage}
    \begin{minipage}[b]{0.32\linewidth}
        \centering
        \includegraphics[width=1.0\linewidth]{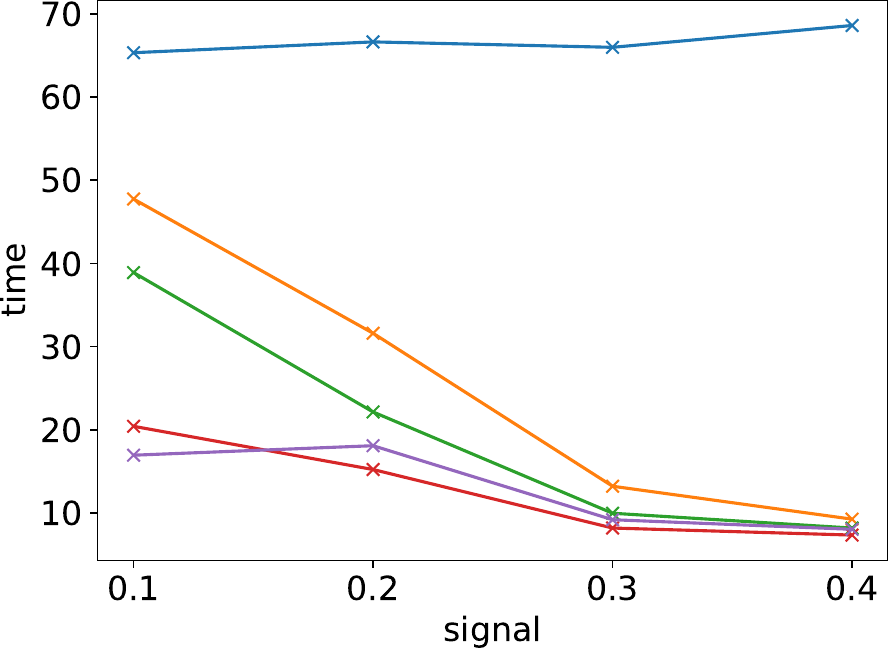}
        \subcaption{SFS ($\chi$-test)}
    \end{minipage}
    \begin{minipage}[b]{0.32\linewidth}
        \centering
        \includegraphics[width=1.0\linewidth]{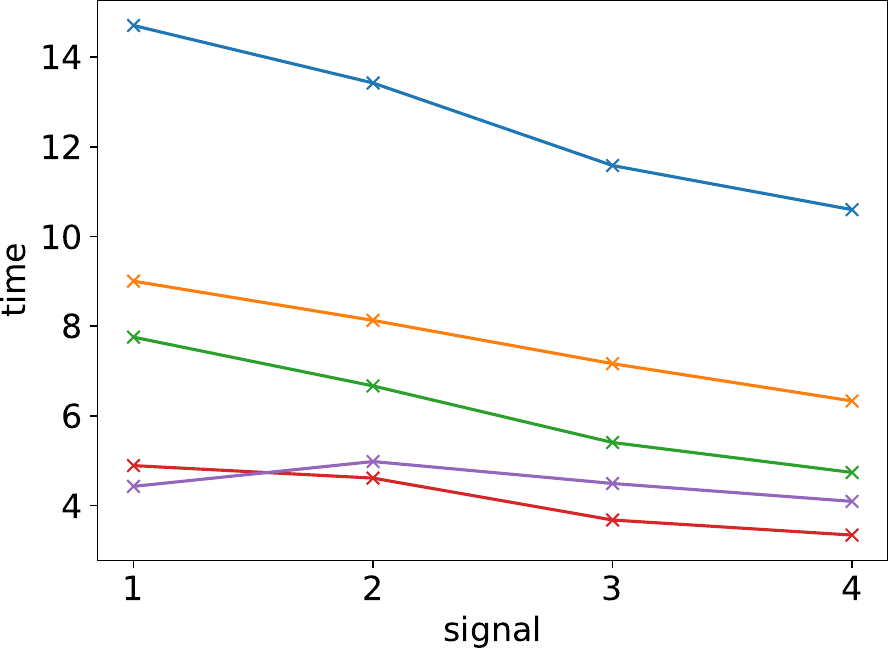}
        \subcaption{DNN ($z$-test)}
    \end{minipage}
    \caption{Computation time under alternative hypotheses}
    \label{fig:time_alternative}
\end{figure}
\begin{figure}[htbp]
    \centering
    \includegraphics[width=0.8\linewidth]{Fig/legends/time_and_count.pdf}
    \begin{minipage}[b]{0.32\linewidth}
        \centering
        \includegraphics[width=1.0\linewidth]{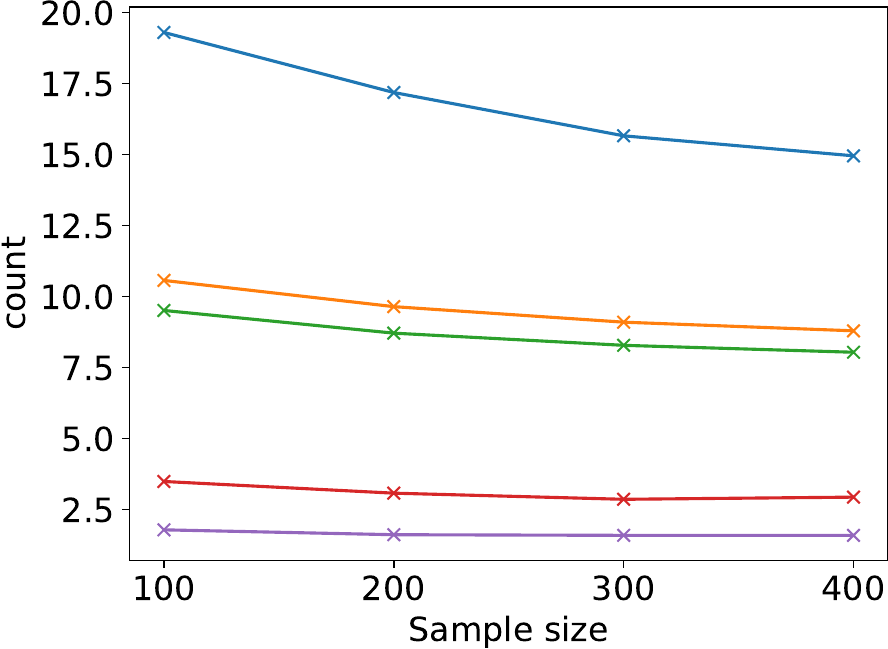}
        \subcaption{SFS ($z$-test)}
    \end{minipage}
    \begin{minipage}[b]{0.32\linewidth}
        \centering
        \includegraphics[width=1.0\linewidth]{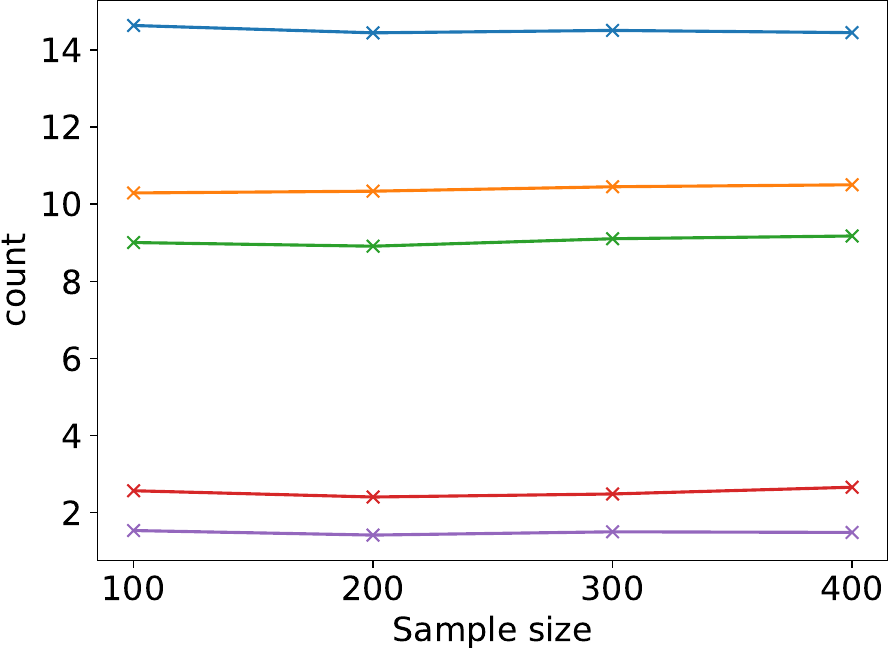}
        \subcaption{SFS ($\chi$-test)}
    \end{minipage}
    \begin{minipage}[b]{0.32\linewidth}
        \centering
        \includegraphics[width=1.0\linewidth]{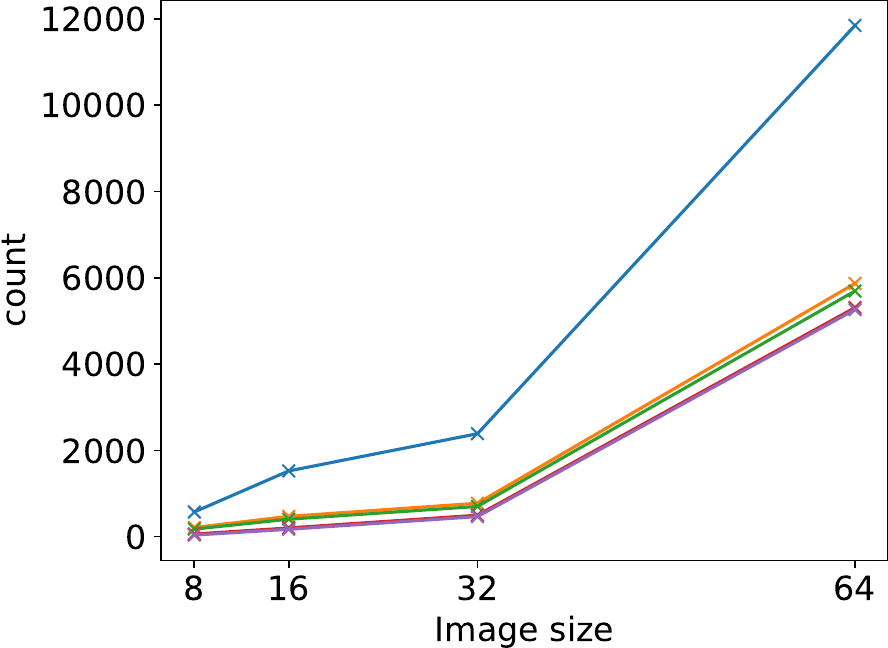}
        \subcaption{DNN ($z$-test)}
    \end{minipage}
    \caption{Search counts for termination criteria under null hypotheses}
    \label{fig:count_null_criteria}
\end{figure}
\begin{figure}[htbp]
    \centering
    \includegraphics[width=0.8\linewidth]{Fig/legends/time_and_count.pdf}
    \begin{minipage}[b]{0.32\linewidth}
        \centering
        \includegraphics[width=1.0\linewidth]{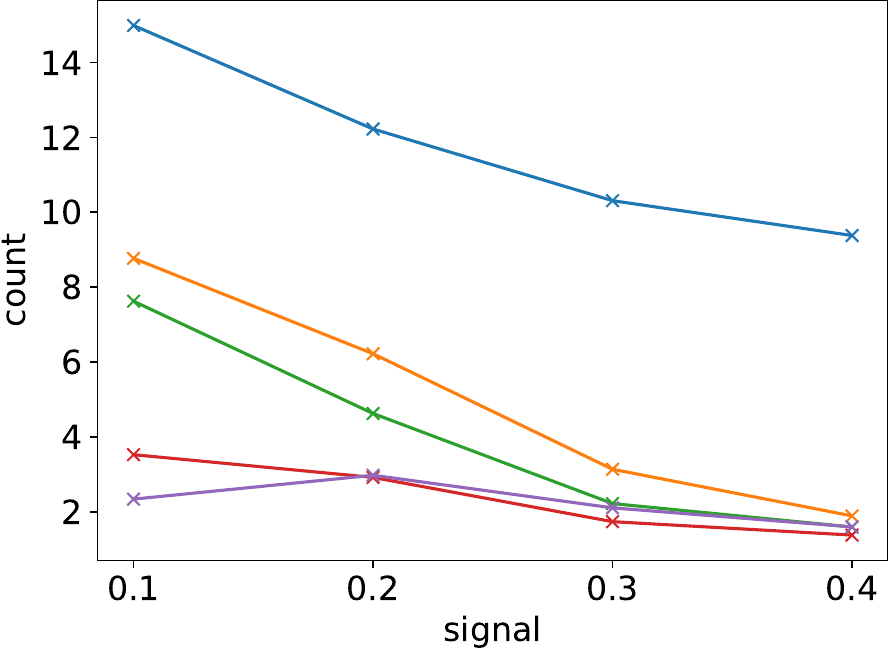}
        \subcaption{SFS ($z$-test)}
    \end{minipage}
    \begin{minipage}[b]{0.32\linewidth}
        \centering
        \includegraphics[width=1.0\linewidth]{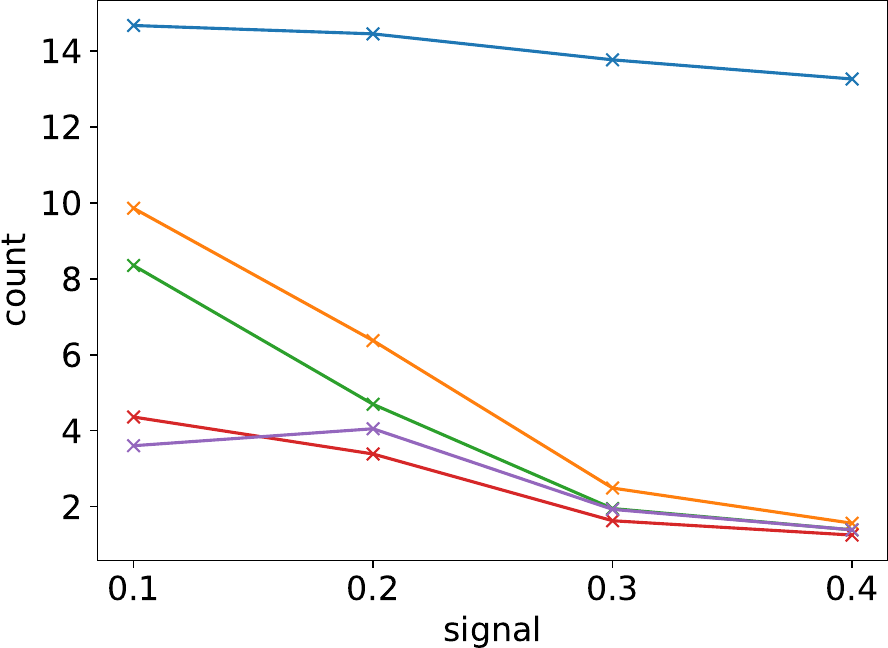}
        \subcaption{SFS ($\chi$-test)}
    \end{minipage}
    \begin{minipage}[b]{0.32\linewidth}
        \centering
        \includegraphics[width=1.0\linewidth]{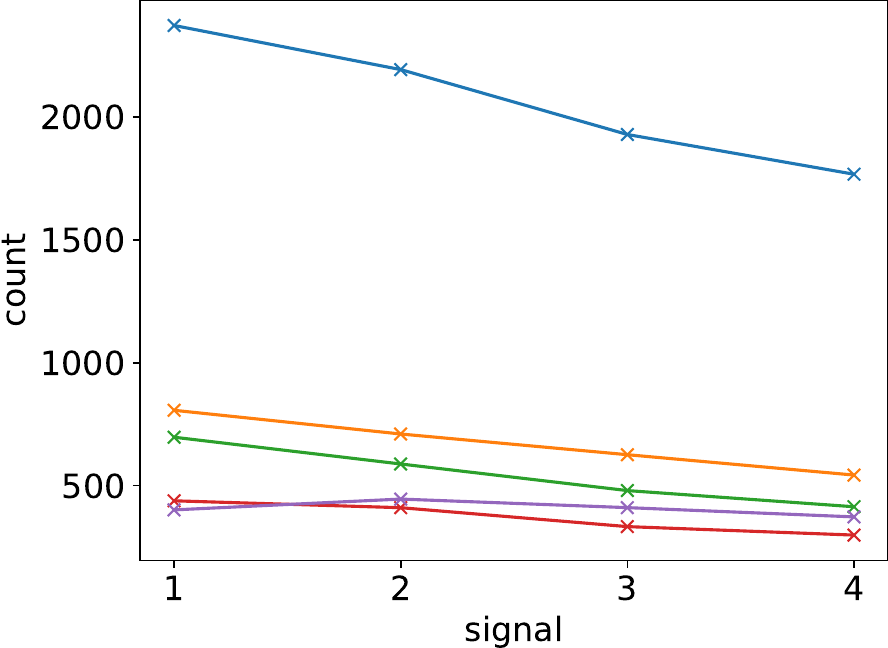}
        \subcaption{DNN ($z$-test)}
    \end{minipage}
    \caption{Search counts for termination criteria under alternative hypotheses}
    \label{fig:count_alternative_criteria}
\end{figure}
\begin{figure}[htbp]
    \centering
    \includegraphics[width=0.9\linewidth]{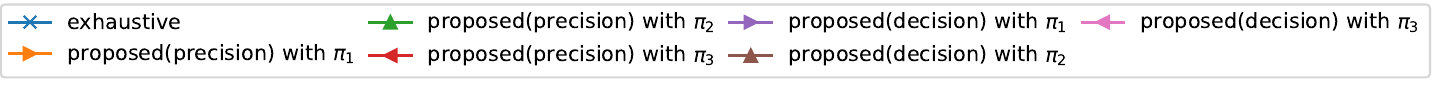}
    \begin{minipage}[b]{0.32\linewidth}
        \centering
        \includegraphics[width=1.0\linewidth]{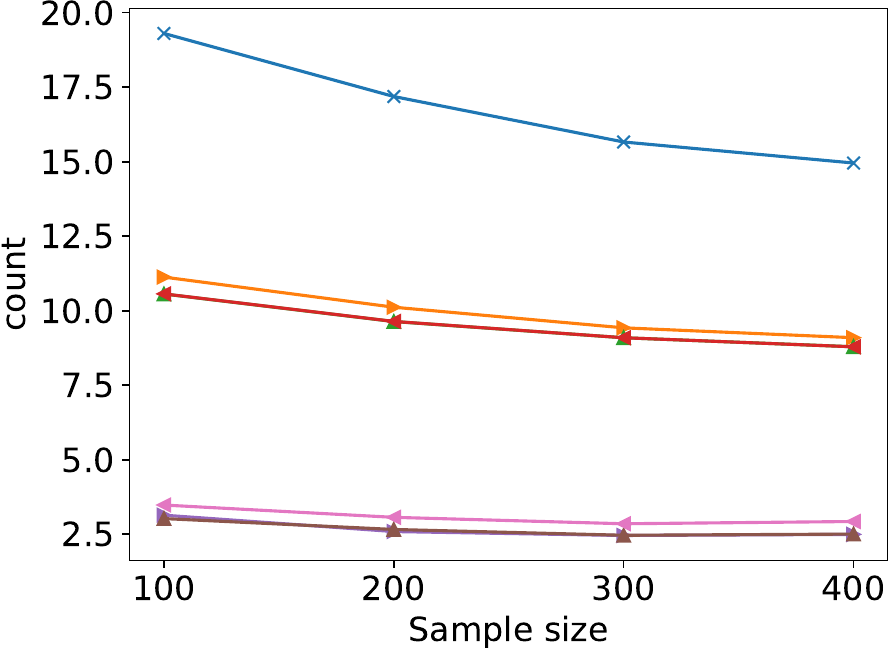}
        \subcaption{SFS ($z$-test)}
    \end{minipage}
    \begin{minipage}[b]{0.32\linewidth}
        \centering
        \includegraphics[width=1.0\linewidth]{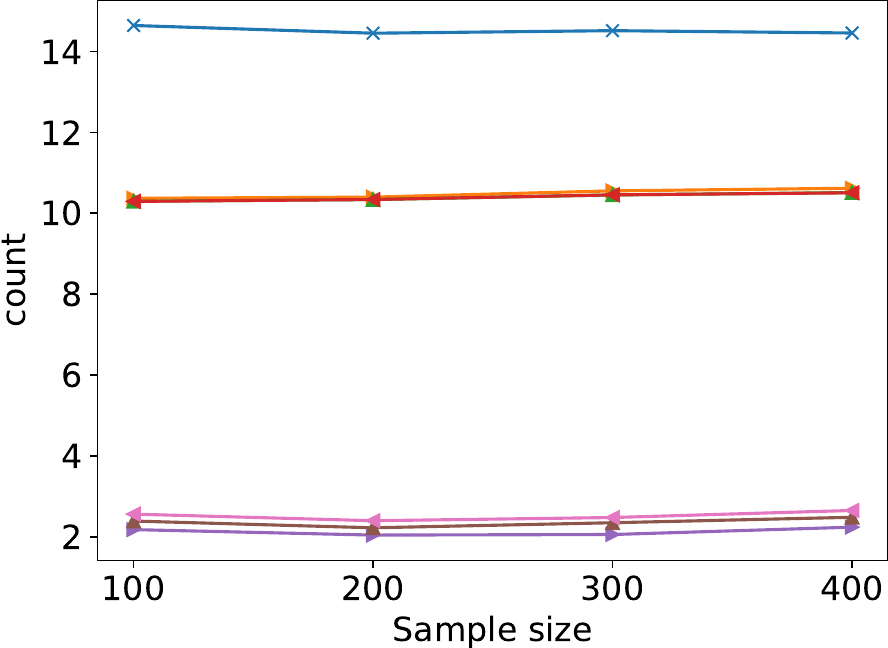}
        \subcaption{SFS ($\chi$-test)}
    \end{minipage}
    \begin{minipage}[b]{0.32\linewidth}
        \centering
        \includegraphics[width=1.0\linewidth]{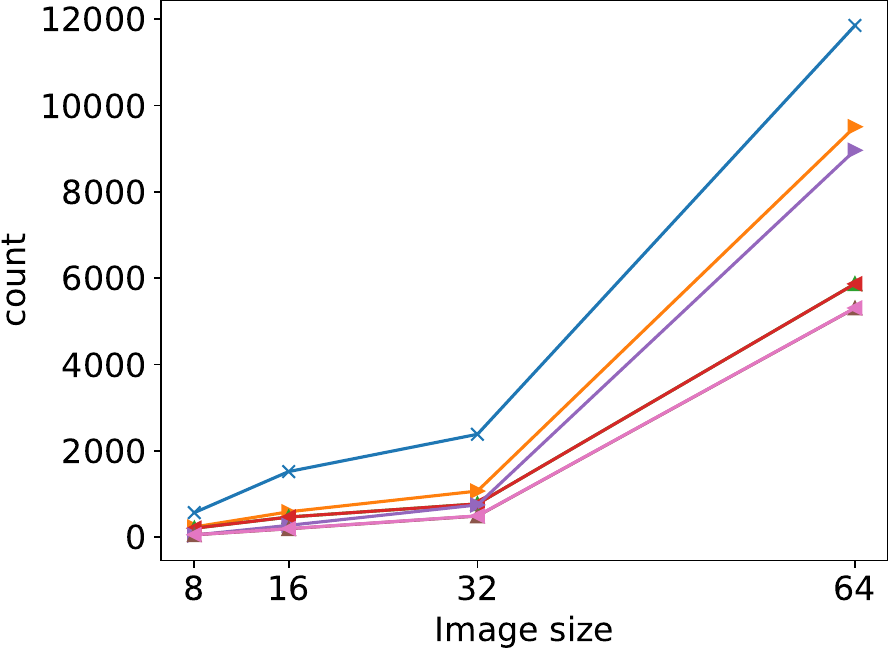}
        \subcaption{DNN ($z$-test)}
    \end{minipage}
    \caption{Search counts for search strategies under null hypotheses}
    \label{fig:count_null_strategy}
\end{figure}
\begin{figure}[htbp]
    \centering
    \includegraphics[width=0.9\linewidth]{Fig/legends/strategy.pdf}
    \begin{minipage}[b]{0.32\linewidth}
        \centering
        \includegraphics[width=1.0\linewidth]{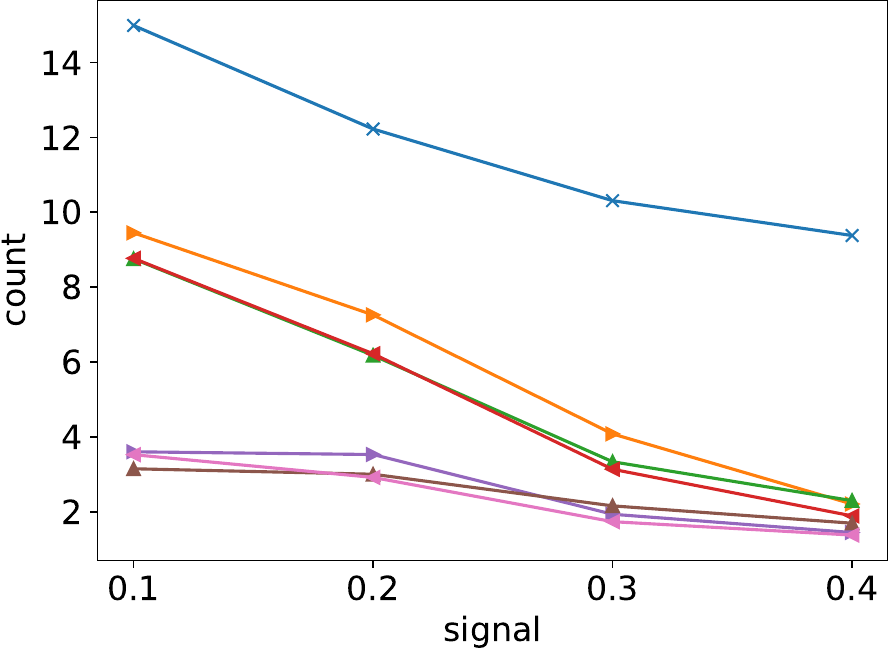}
        \subcaption{SFS ($z$-test)}
    \end{minipage}
    \begin{minipage}[b]{0.32\linewidth}
        \centering
        \includegraphics[width=1.0\linewidth]{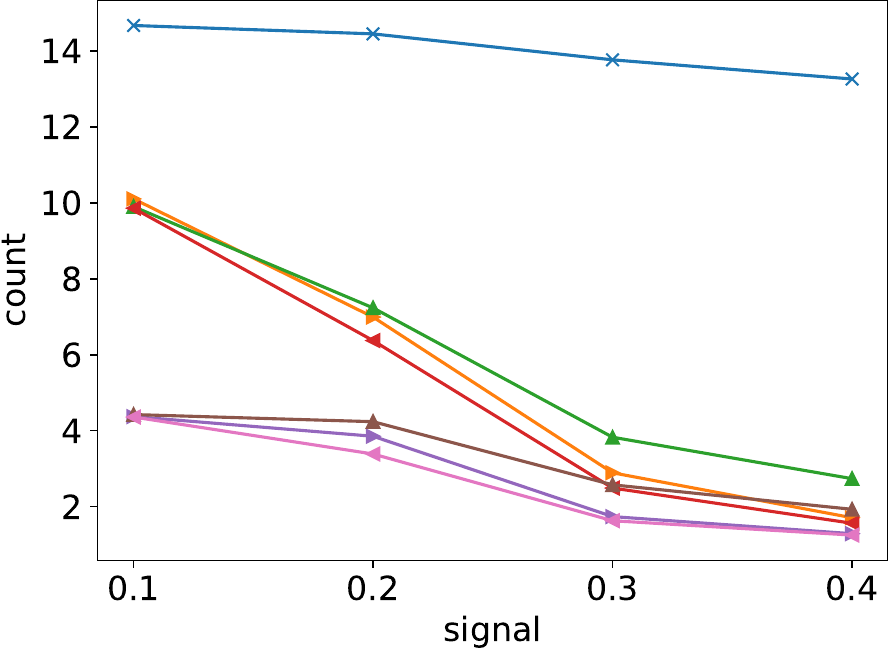}
        \subcaption{SFS ($\chi$-test)}
    \end{minipage}
    \begin{minipage}[b]{0.32\linewidth}
        \centering
        \includegraphics[width=1.0\linewidth]{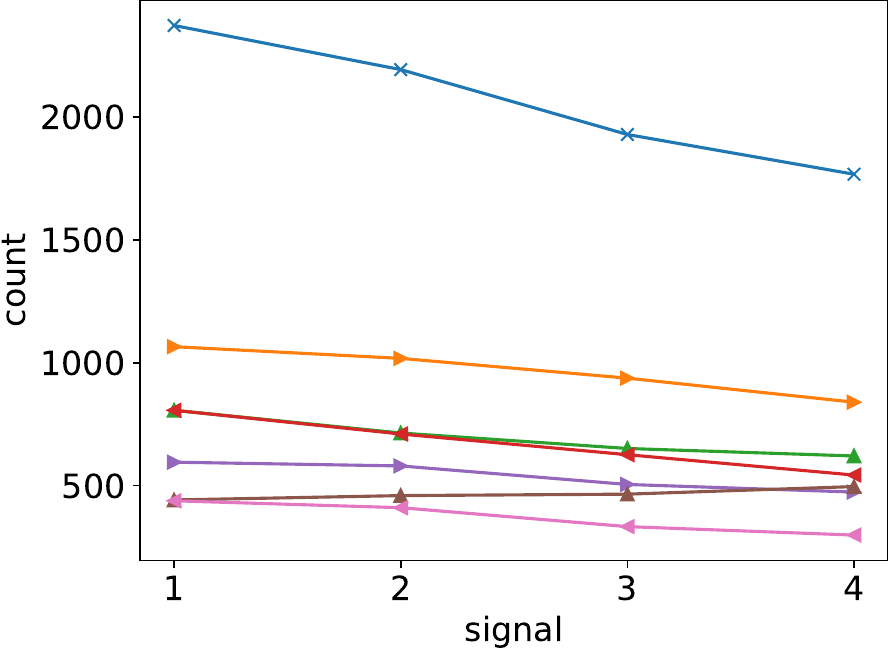}
        \subcaption{DNN ($z$-test)}
    \end{minipage}
    \caption{Search counts for search strategies under alternative hypotheses}
    \label{fig:count_alternative_strategy}
\end{figure}
\begin{figure}[htbp]
    \begin{minipage}[b]{0.49\linewidth}
        \centering
        \includegraphics[width=1.0\linewidth]{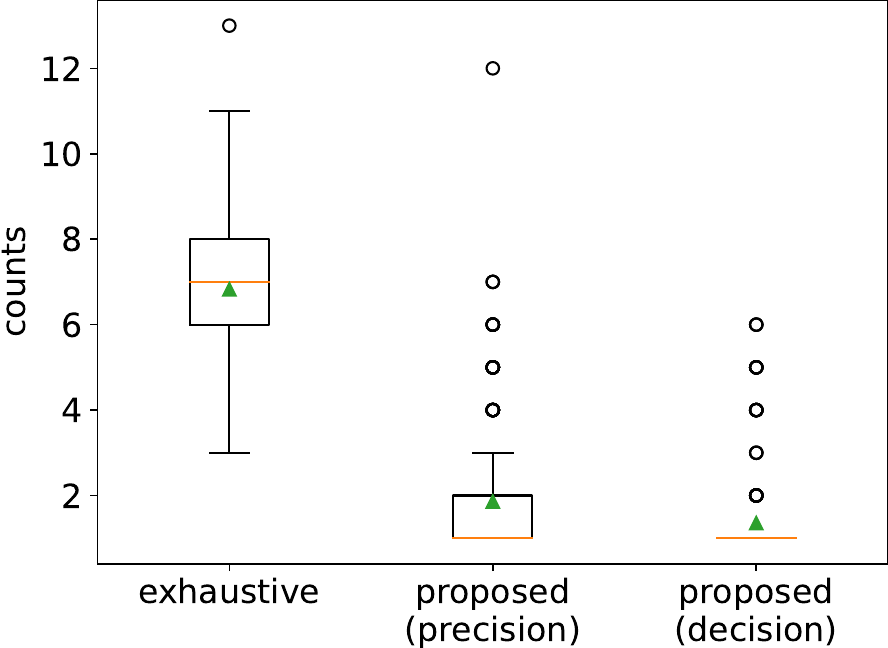}
        \subcaption{$z$-test}
    \end{minipage}
    \begin{minipage}[b]{0.49\linewidth}
        \centering
        \includegraphics[width=1.0\linewidth]{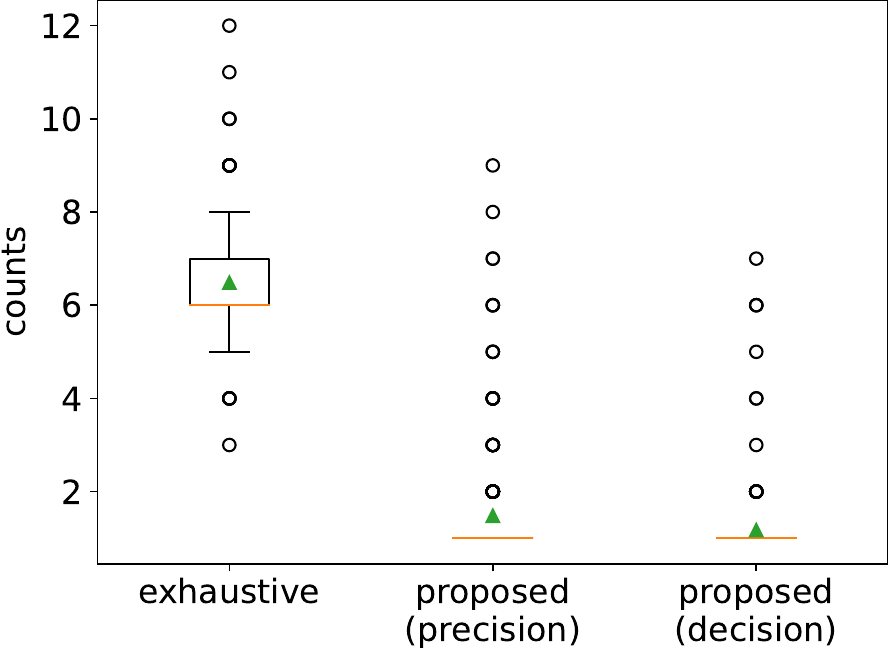}
        \subcaption{$\chi$-test}
    \end{minipage}
    \caption{Search counts for SFS on real dataset}
    \label{fig:sfs_real}
\end{figure}
\begin{figure}[htbp]
    \begin{minipage}[b]{0.49\linewidth}
        \centering
        \includegraphics[width=1.0\linewidth]{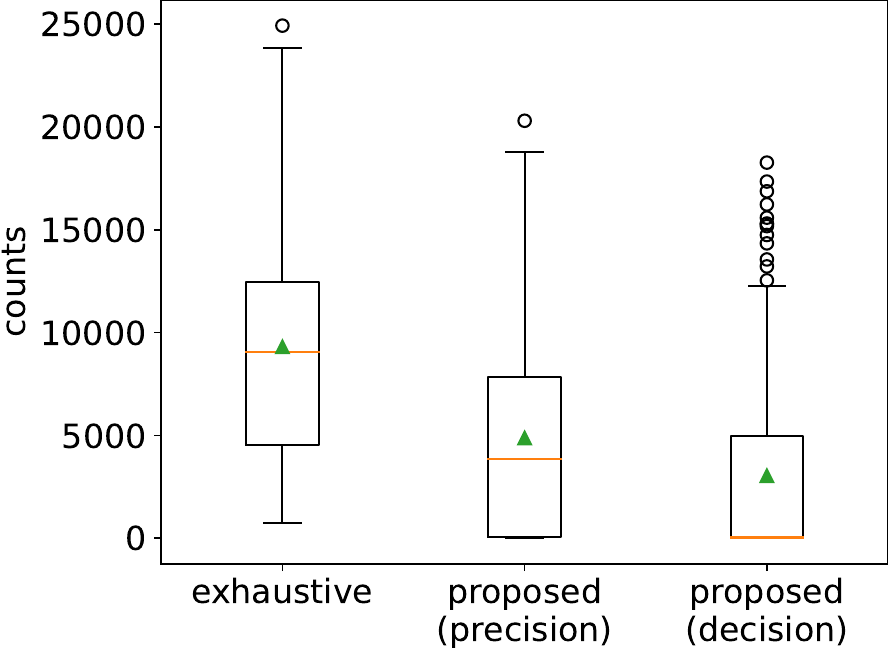}
        \subcaption{with tumors}
    \end{minipage}
    \begin{minipage}[b]{0.49\linewidth}
        \centering
        \includegraphics[width=1.0\linewidth]{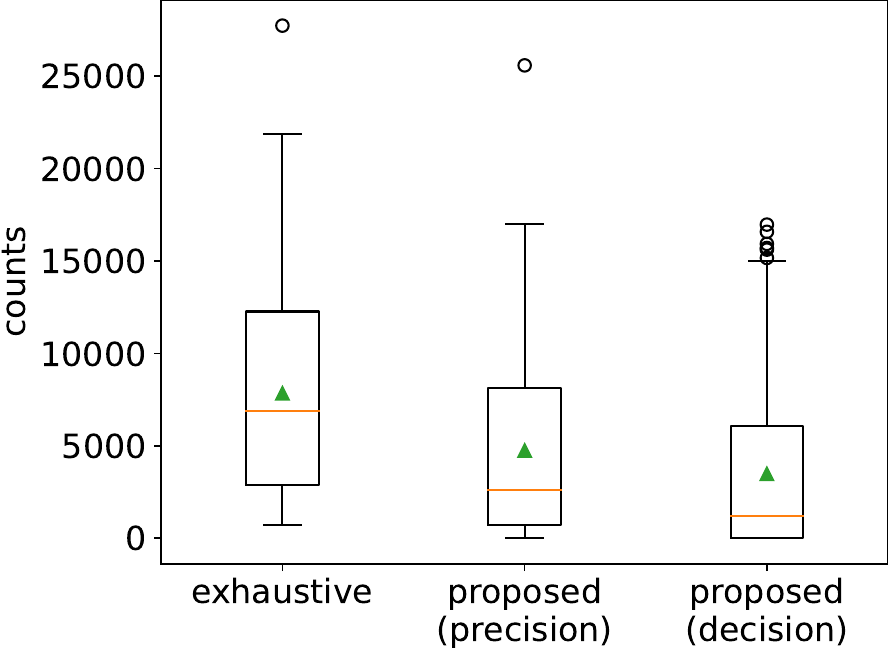}
        \subcaption{without tumors}
    \end{minipage}
    \caption{Search counts for DNN on real dataset}
    \label{fig:dnn_real}
\end{figure}

\newpage
\section{Conclusion}
\label{sec:sec5}
In this paper, we propose a method for computing the lower and upper bounds of selective $p$-values within the context of PP-based SI, without the need for exhaustive search of the data space.
These bounds allow us to appropriately terminate the $p$-value computation procedure based on desired precisions or significance levels.
In the conventional PP-based SI method, exhaustive exploration of the data space was necessary, resulting in significant computational costs for complex ML algorithms such as deep learning models.
Our proposed method can significantly reduce the computational cost while ensuring the computation of selective $p$-values with theoretically guaranteed precision.
The proposed method can be effectively applied to various SI tasks for various data-driven hypotheses.
Future works to further reducing computational costs will involve exploring search strategies that leverage parallelizable computing resources.
Furthermore, in sampling-based SI methods~\citep{terada2017selective,panigrahi2022approximate}, it may be possible to develop an approach that obtains bounds for selective $p$-values without requiring exhaustive sampling, based on similar ideas to our proposed method.


\newpage
\subsection*{Acknowledgement}
This work was partially supported by MEXT KAKENHI (20H00601), JST CREST (JPMJCR21D3, JPMJCR22N2), JST Moonshot R\&D (JPMJMS2033-05), JST AIP Acceleration Research (JPMJCR21U2), NEDO (JPNP18002, JPNP20006) and RIKEN Center for Advanced Intelligence Project.

\clearpage

\appendix

\newpage
\section{Proofs of Theorems and Lemmas}
\label{app:}
\subsection{Proof of Theorem~\ref{thm:main}}
\label{app:proof_of_main_theorem}
%
%
From the construction in Procedure~\ref{alg:simple}, both of $\{S_i\}_{i \in \mathbb{N}}$ and $\{R_i\}_{i \in \mathbb{N}}$ are obviously monotonically increasing sequences.
Due to the property of the algorithm $\mathcal{A}$, the search over all real numbers is completed in at most additive times, and $S_\infty=\mathbb{R}$.
Thus, the $\mathcal{Z}$ is equal to $\mathbb{R}_\infty$.
In addition, since the result obtained from any point in $S_i \setminus R_i$ by $\mathcal{A}$ is guaranteed to be different from the observed one, then we obtain $S_i\cap R_i^c\subseteq R_\infty^c$, and hence $R_\infty\subseteq R_i\cup S_i^c$ (considering the complements).
%
%
%
We define the set family sequence $\left\{\mathcal{R}_i\right\}_{i\in\mathbb{N}}$ using the subset $\mathcal{R}_i$ of $\mathcal{B}(\mathbb{R})$ shown in
\begin{equation}
	\mathcal{R}_i =
	\left\{
	R\in\mathcal{B}(\mathbb{R})\mid R_i\subseteq R \subseteq R_i\cup S_i^c
	\right\}.
\end{equation}
From the properties of $\{S_i\}_{i \in \mathbb{N}}$ and $\{R_i\}_{i \in \mathbb{N}}$, for any $i\in\mathbb{N}$, $R_\infty\in\mathcal{R}_i$ holds and hence the inequality
\begin{equation}
	\inf_{R\in\mathcal{R}_i}
	\frac
	{\mathcal{I}(R\setminus [-|t|,|t|])}
	{\mathcal{I}(R)}
	\leq
	p^\mathrm{selective}
	\leq
	\sup_{R\in\mathcal{R}_i}
	\frac
	{\mathcal{I}(R\setminus [-|t|,|t|])}
	{\mathcal{I}(R)}
\end{equation}
holds.
Thus, we only need to show the following two equations:
\begin{align}
	\inf_{R\in\mathcal{R}_i}
	\frac
	{\mathcal{I}(R\setminus [-|t|,|t|])}
	{\mathcal{I}(R)}
	 & =
	\frac
	{\mathcal{I}(R_i^\mathrm{inf} \setminus [-|t|,|t|])}
	{\mathcal{I}(R_i^\mathrm{inf})},
	\\
	\sup_{R\in\mathcal{R}_i}
	\frac
	{\mathcal{I}(R\setminus [-|t|,|t|])}
	{\mathcal{I}(R)}
	 & =
	\frac
	{\mathcal{I}(R_i^\mathrm{sup}\setminus [-|t|,|t|])}
	{\mathcal{I}(R_i^\mathrm{sup})},
\end{align}
where $R_i^\mathrm{inf}=R_i\cup (S_i^c\cap [-|t|,|t|])$ and $R_i^\mathrm{sup}=R_i\cup (S_i^c\setminus [-|t|,|t|])$ are elements of $\mathcal{R}_i$, for any $i\in\mathbb{N}$.
Now, the term to be optimized can be rewritten as
\begin{align}
	\frac
	{\mathcal{I}(R \setminus [-|t|,|t|])}
	{\mathcal{I}(R)}
	 & =
	\frac
	{\mathcal{I}(R \setminus [-|t|,|t|])}
	{\mathcal{I}(R \setminus [-|t|,|t|]) + \mathcal{I}(R \cap [-|t|,|t|])}
	\\
	\label{eq:transformed_term}
	 & =
	\frac{1}
	{
		1+
		\left(
		\mathcal{I}(R \cap [-|t|,|t|]) \middle/
		\mathcal{I}(R \setminus [-|t|,|t|])
		\right)
	}.
\end{align}
Moreover, from the fact that $R \cap [-|t|,|t|]$ and $R \setminus [-|t|,|t|]$ are disjoint and $f$ is always non-negative, we have the following four inequalities:
\begin{align}
	\mathcal{I}(R_i^\mathrm{inf} \cap [-|t|,|t|])
	 & =
	\mathcal{I}((R_i \cup S_i^c)\cap [-|t|,|t|])
	\\
	 & \geq \mathcal{I}(R  \cap [-|t|,|t|]),
	\\
	\mathcal{I}(R_i^\mathrm{inf} \setminus [-|t|,|t|])
	 & =
	\mathcal{I}(R_i\setminus [-|t|,|t|])
	\\
	 & \leq \mathcal{I}(R \setminus [-|t|,|t|]),
	\\
	\mathcal{I}(R_i^\mathrm{sup} \cap [-|t|,|t|])
	 & =
	\mathcal{I}(R_i\cap [-|t|,|t|])
	\\
	 & \leq \mathcal{I}(R \cap [-|t|,|t|]),
	\\
	\mathcal{I}(R_i^\mathrm{sup} \setminus [-|t|,|t|])
	 & =
	\mathcal{I}((R_i \cup S_i^c)\setminus [-|t|,|t|])
	\\
	 & \geq \mathcal{I}(R \setminus [-|t|,|t|]),
\end{align}
for any $R\in\mathcal{R}_i$,
and then the proof is completed.
\subsection{Proof of Lemma~\ref{lemm:bounds_property}}
\label{app:proof_of_bounds_property}
We will use the same notations as in the proof of Theorem~1 (Appendix~\ref{app:proof_of_main_theorem}).
We start by showing that the set family sequence $\left\{\mathcal{R}_i\right\}_{i\in\mathbb{N}}$ is a monotonically decreasing sequence and converges to the singleton $\left\{R_\infty\right\}$.
In fact, the monotonically decreasing property is shown from the formula transformation shown in
\begin{equation}
	R_i \subset R_{i+1}\subset R_{i+1}\cup S_{i+1}^c \subset R_\infty \cup S_{i+1}^c \subset
	R_i \cup S_i^c \cup S_{i+1}^c = R_i \cup S_i^c.
\end{equation}
Thus, the convergence of the set family sequence $\left\{\mathcal{R}_i\right\}_{i\in\mathbb{N}}$ is ensured, and its convergence destination is the singleton $\left\{R_\infty\right\}$ from $S_\infty^c=\mathbb{R}^c=\emptyset$ and its definition.
Then, combined with the fact that
\begin{equation}
	L_i =
	\inf_{R\in\mathcal{R}_i}
	\frac
	{\mathcal{I}(R\setminus [-|t|,|t|])}
	{\mathcal{I}(R)},\
	U_i =
	\sup_{R\in\mathcal{R}_i}
	\frac
	{\mathcal{I}(R\setminus [-|t|,|t|])}
	{\mathcal{I}(R)},
\end{equation}
and the definition of infimum and supremum, the monotonicity and convergence of $L_i$ and $U_i$ are obvious.
\subsection{Corollary of Theorem~\ref{thm:main}}
\label{app:corollary_of_main_theorem}
Theorem~\ref{thm:main} is extension of the conservative selective $p$-value considered in~\citep{jewell2022testing,chen2023more}, which corresponds to the upper bound in the following corollary.
\begin{corollary}
	If the searched intervals $S_i$ equals a single interval $[-|t|-\delta, |t|+\delta]$ with $\delta>0$ (corresponding to the case where the parametric search is performed so that the interval containing $[-|t|,|t|]$ is expanded for a linear selection event), the upper bound $U_i$ of the selective $p$-value is given by
	\begin{equation}
		U_i = \frac
		{\mathcal{I}((R_i\setminus [-|t|,|t|])\cup (-\infty, -|t|-\delta]\cup [|t|+\delta, \infty))}
		{\mathcal{I}(R_i\cup (-\infty, -|t|-\delta]\cup [|t|+\delta, \infty))}.
	\end{equation}
\end{corollary}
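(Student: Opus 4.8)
The plan is to specialize the general formula for $U_i$ in Theorem~\ref{thm:main} to the hypothesized shape of $S_i$. First I would note that if $S_i = [-|t|-\delta, |t|+\delta]$, then its complement is $S_i^c = (-\infty, -|t|-\delta] \cup [|t|+\delta, \infty)$. Since $\delta > 0$, the interval $[-|t|,|t|]$ is entirely contained in $S_i$, so $S_i^c \cap [-|t|,|t|] = \emptyset$ and consequently $S_i^c \setminus [-|t|,|t|] = S_i^c$ itself. This is the only structural observation needed: the search region is a symmetric interval strictly containing $[-|t|,|t|]$, so the part of the unsearched region that lies inside $[-|t|,|t|]$ is empty.

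Next I would substitute into the upper-bound formula from Theorem~\ref{thm:main}, namely
\begin{equation}
	U_i = \frac{\mathcal{I}((R_i \cup S_i^c) \setminus [-|t|,|t|])}{\mathcal{I}(R_i \cup (S_i^c \setminus [-|t|,|t|]))}.
\end{equation}
For the denominator, $S_i^c \setminus [-|t|,|t|] = S_i^c = (-\infty,-|t|-\delta] \cup [|t|+\delta,\infty)$ immediately, giving the stated denominator. For the numerator, I would distribute the set difference over the union: $(R_i \cup S_i^c) \setminus [-|t|,|t|] = (R_i \setminus [-|t|,|t|]) \cup (S_i^c \setminus [-|t|,|t|]) = (R_i \setminus [-|t|,|t|]) \cup (-\infty,-|t|-\delta] \cup [|t|+\delta,\infty)$, which is exactly the numerator in the corollary. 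That completes the derivation.

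There is essentially no obstacle here; the corollary is a direct algebraic substitution into Theorem~\ref{thm:main}. The only point requiring any care is the observation that $\delta > 0$ forces $[-|t|,|t|] \subseteq S_i$, which is what makes $S_i^c \cap [-|t|,|t|]$ vanish and lets the two set-difference operations simplify cleanly. I would also briefly remark, to connect with the cited literature, that in this case the lower bound $L_i$ reduces to $\mathcal{I}(R_i \setminus [-|t|,|t|]) / \mathcal{I}(R_i \cup S_i^c)$, and the gap $U_i - L_i$ is controlled by the tail mass $\mathcal{I}(S_i^c)$, which shrinks to zero as $\delta \to \infty$; the conservative $p$-value of \citet{jewell2022testing, chen2023more} is then recovered as the particular upper bound obtained at finite $\delta$.
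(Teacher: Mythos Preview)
Your proof is correct and follows exactly the paper's approach: both reduce to noting $S_i^c=(-\infty,-|t|-\delta]\cup[|t|+\delta,\infty)$ so that $S_i^c\cap[-|t|,|t|]=\emptyset$, then substituting into Theorem~\ref{thm:main}. One minor slip in your closing remark: with $S_i^c\cap[-|t|,|t|]=\emptyset$ the lower bound becomes $L_i=\mathcal{I}(R_i\setminus[-|t|,|t|])/\mathcal{I}(R_i)$, not $\mathcal{I}(R_i\setminus[-|t|,|t|])/\mathcal{I}(R_i\cup S_i^c)$.
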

\begin{proof}
	It is obvious from Theorem~\ref{thm:main} by noting that $S_i^c=(-\infty, -|t|-\delta]\cup [|t|+\delta, \infty)$ and then $S_i^c\cap [-|t|,|t|]=\emptyset$.
\end{proof}
\subsection{Proof of Lemma~\ref{lemm:rephrase}}
\label{app:proof_of_rephrase}
To minimize $U_{i+1}-L_{i+1}$, it is sufficient to maximize $L_{i+1}$ and minimize $U_{i+1}$.
From the proof of Theorem~\ref{thm:main} (Appendix~\ref{app:proof_of_main_theorem}) and the transformation in~\eqref{eq:transformed_term}, we have
\begin{align}
	L_{i+1} & =
	\frac
	{\mathcal{I}(R_{i+1}^\mathrm{inf}\setminus [-|t|, |t|])}
	{\mathcal{I}(R_{i+1}^\mathrm{inf})}
	=
	\frac{1}
	{
		1+
		\left(
		\mathcal{I}(R_{i+1}^\mathrm{inf} \cap [-|t|,|t|]) \middle/
		\mathcal{I}(R_{i+1}^\mathrm{inf} \setminus [-|t|,|t|])
		\right)
	},
	\\
	U_{i+1} & =
	\frac
	{\mathcal{I}(R_{i+1}^\mathrm{sup}\setminus [-|t|, |t|])}
	{\mathcal{I}(R_{i+1}^\mathrm{sup})}
	=
	\frac{1}
	{
		1+
		\left(
		\mathcal{I}(R_{i+1}^\mathrm{sup} \cap [-|t|,|t|]) \middle/
		\mathcal{I}(R_{i+1}^\mathrm{sup} \setminus [-|t|,|t|])
		\right)
	}.
\end{align}
Then, to maximize $L_{i+1}$ and minimize $U_{i+1}$, it is sufficient to maximize the following two quantities shown in
\begin{gather}
	\mathcal{I}(R_{i+1}^\mathrm{inf}\setminus [-|t|, |t|])
	=
	\mathcal{I}(R_{i+1}\setminus [-|t|, |t|]),\\
	\mathcal{I}(R_{i+1}^\mathrm{sup}\cap [-|t|, |t|])
	=
	\mathcal{I}(R_{i+1}\cap [-|t|, |t|]),
\end{gather}
and to minimize the following two quantities shown in
\begin{gather}
	\mathcal{I}(R_{i+1}^\mathrm{inf}\cap [-|t|, |t|])
	=
	\mathcal{I}((R_{i+1}\cup S_{i+1}^c)\cap [-|t|, |t|]),\\
	\mathcal{I}(R_{i+1}^\mathrm{sup}\setminus [-|t|, |t|])
	=
	\mathcal{I}((R_{i+1}\cup S_{i+1}^c)\setminus [-|t|, |t|]).
\end{gather}
Since the integral regions of the two quantities to be maximized are disjoint and the integral regions of the two quantities to be minimized are disjoint,
to optimize these four integral quantities,
it is sufficient to maximize $\mathcal{I}(R_{i+1})$ and minimize $\mathcal{I}(R_{i+1}\cup S_{i+1}^c)$.
Here, since $R_{i+1}$ and $S_{i+1}^c$ are obviously disjoint, we have $\mathcal{I}(R_{i+1}\cup S_{i+1}^c)=\mathcal{I}(R_{i+1})+\mathcal{I}(S_{i+1}^c)$.
Therefore, to minimize this quantity, it is sufficient to minimize $\mathcal{I}(R_{i+1})$ and $\mathcal{I}(S_{i+1}^c)$.
To note that $\mathcal{I}(S_{i+1})=1-\mathcal{I}(S_{i+1}^c)$ holds, it is sufficient to construct $S_{i+1}$ from $S_i$ so that the two integral quantities $\mathcal{I}(R_{i+1})$ and $\mathcal{I}(S_{i+1})$ are both maximized.
\subsection{Extension to One-Sided Test}
\label{app:extension_to_one_sided}
%
Similarly to the two-sided selective $p$-value in~\eqref{eq:one_dimensional_selective_p_value}, the selective $p$-values for the left-tailed and the right-tailed tests are given with the same notation as in Theorem~\ref{thm:main} as:
\begin{align}
	p^\mathrm{selective}_\mathrm{left}
	 & =
	\mathbb{P}_{\mathrm{H}_0}
	\left(
	Z < t
	~\big|~
	Z \in R_\infty
	\right), \\
	p^\mathrm{selective}_\mathrm{right}
	 & =
	\mathbb{P}_{\mathrm{H}_0}
	\left(
	Z > t
	~\big|~
	Z\in R_\infty
	\right)  \\
	 & =
	1 - p^\mathrm{selective}_\mathrm{left},
\end{align}
respectively.
Theorem~\ref{thm:main}, Lemma~\ref{lemm:bounds_property} and Lemma~\ref{lemm:rephrase} can be extended to the one-sided test by replacing $[-|t|,|t|]$ with $[t, \infty)$ for the left-tailed test and $(-\infty, t]$ for the right-tailed test.
The proof can be done exactly in the same way as
Appendix~\ref{app:proof_of_main_theorem},
\ref{app:proof_of_bounds_property} and
\ref{app:proof_of_rephrase},
except that $[-|t|,|t|]$ is replaced by $[t, \infty)$ or $(-\infty, t]$.

\newpage
\section{Experimental Details}
\label{appB}
\subsection{SI for SFS}
\label{app:si_for_sfs}
SI for SFS was first studied in~\citep{tibshirani2016exact}.
In this study, the authors only considered the case with conditioning both on the selected features and history for computational tractability.
Later, in~\citep{sugiyama2021more}, SI for SFS without conditioning on the history was studied by using parametric programming-based SI.
The method proposed in~\citep{sugiyama2021more} is computationally expensive because it requires exhaustive search of the data space.
The results of these previous studies are summarized below, using the same notation style as in section~\ref{sec:sec2}.
\paragraph{Stepwise feature selection.}
We denote by $\bm{r}(\bm{D}, X_\mathcal{M})$ a residual vector obtained by regressing $\bm{D}$ onto $X_\mathcal{M}$ for a set of features $\mathcal{M}$, i.e.,
\begin{equation}
    \bm{r}(\bm{D}, X_\mathcal{M}) = (I_n-X_\mathcal{M}(X_\mathcal{M}^\top X_\mathcal{M})^{-1}X_\mathcal{M}^\top) \bm{D}.
\end{equation}
We denote the index of the feature selected at step $k$ as $j_k$ and the set of selected features up to step $k$ as $\mathcal{M}_k = \{j_1, \ldots, j_k\}$ for $k \in [K]$.
In SFS, at each step $k$, the feature $j_k$ is selected as
\begin{equation}
    \label{eq:sfs_criterion}
    j_k
    = \argmin_{j \in [p] \setminus \mathcal{M}_{k-1}}
    \left\|
    \bm{r}(\bm{D}, X_{\mathcal{M}_{k-1} \cup \{j\}})
    \right\|^2_2,
\end{equation}
where $\mathcal{M}_0=\emptyset$ and $\bm{r}(\bm{D}, \mathcal{M}_0)=\bm{D}$.
\paragraph{Selective inference.}
We denote $\mathcal{A}_{\mathrm{SFS}}$ as the ML algorithm that maps a response vector $\bm{D}$ to a set of $K$ selected features:
\begin{equation}
    \mathcal{A}_{\mathrm{SFS}}\colon \bm{D} \mapsto \mathcal{M}_K.
\end{equation}
Based on the discussions in section~\ref{subsec:pp_based_si},
to conduct the statistical tests in~\eqref{eq:hypotheses_sfs} and \eqref{eq:hypotheses_sfs_chi},
we need to identify the region $\mathcal{Z}$ in~\eqref{eq:completion_of_Z} that corresponds to the event that the set of the selected features is the same as the one in the observed data, i.e., $\mathcal{A}_\mathrm{SFS}(r) = \mathcal{A}_\mathrm{SFS}^\mathrm{obs}$.
We first consider the over-conditioning regions in
\begin{equation}
    \mathcal{Z}_\mathrm{SFS}^\mathrm{oc}(\bm{a}+\bm{b}z)
    =
    \left\{
    r\in\mathbb{R}:
    \mathcal{A}_\mathrm{SFS}(r) = \mathcal{A}_\mathrm{SFS}(z),
    \mathcal{S}_\mathrm{SFS}(r) = \mathcal{S}_\mathrm{SFS}(z)
    \right\},
\end{equation}
where $\mathcal{S}_\mathrm{SFS}$ is the sub-algorithm that outputs the \emph{history} of the SFS algorithm:
\begin{equation}
    \mathcal{S}_\mathrm{SFS}: \bm{D} \mapsto (\mathcal{M}_1, \mathcal{M}_2,\ldots,\mathcal{M}_K),
\end{equation}
i.e., the sequence of the sets of the selected features in $K$ steps.
From~\eqref{eq:sfs_criterion}, the over-conditioning region can be represented as an intersection of a set of \emph{quadratic inequalities}:
\begin{equation}
    \mathcal{Z}_\mathrm{SFS}^\mathrm{oc}(\bm{a}+\bm{b}z)
    =
    \bigcap_{k \in [K]}
    \bigcap_{j \notin \mathcal{M}_{k-1}^z}
    \left\{
    r \in \mathbb{R}:
    g_{k, j}^z(r) \leq 0
    \right\},
\end{equation}
where $\mathcal{M}_k^z$ indicates the set of selected features up to step $k$ when the data $\bm{a}+\bm{b}z$ is given to the SFS algorithm, and
$g_{k, j}^z(r)$ is a quadratic function defined as
\begin{equation}
    g_{k, j}^z(r)
    =
    \left\|\bm{r}(\bm{a}+\bm{b}r, X_{\mathcal{M}_{k-1}^z \cup \{j_k^z\}})\right\|_2^2
    -
    \left\|\bm{r}(\bm{a}+\bm{b}r, X_{\mathcal{M}_{k-1}^z \cup \{j^z\}})\right\|_2^2.
\end{equation}
Note that the superscript $z$ in $g$ specify that they depend on $\mathcal{A}_\mathrm{SFS}(z)$ and $\mathcal{S}_\mathrm{SFS}(z)$.
Therefore, the region $\mathcal{Z}$ in \eqref{eq:completion_of_Z} can be written as:
\begin{equation}
    \mathcal{Z}_\mathrm{SFS}
    =
    \bigcup_{z\in\mathbb{R}\mid \mathcal{M}_K^z=\mathcal{A}^\mathrm{obs}_\mathrm{SFS}}
    \bigcap_{k \in [K]}
    \bigcap_{j \notin \mathcal{M}_{k-1}^z}
    \left\{
    r \in \mathbb{R}:
    g_{k, j}^z(r) \leq 0
    \right\}.
\end{equation}
\subsection{SI for DNN-Driven Hypotheses}
\label{app:si_for_dnn}
Problem setups we considered in this study are similar to those in \citep{duy2022quantifying,miwa2023valid}.
%
In these studies, not only conditioning on pixels selected as salient regions but also conditioning on additional factors such as activation functions and other operations like max-pooling have been considered.
Then, parametric programming-based SI was used to remove those additional conditions.
The computational cost of these methods are expensive because they require exhaustive search of the data space.
The results of these previous studies are summarized below, using the same notation style as in section~\ref{sec:sec2}.
\paragraph{Selective inference.}
We denote $\mathcal{A}_{\mathrm{DNN}}$ as the ML algorithm that maps an image $\bm{D}$ to a set of salient and non-salient regions:
\begin{equation}
    \mathcal{A}_{\mathrm{DNN}}\colon \bm{D} \mapsto \{\mathcal{O}_{\bm{D}}, \mathcal{B}_{\bm{D}}\}.
\end{equation}
Based on the discussions in section~\ref{subsec:pp_based_si},
to conduct the statistical test in~\eqref{eq:hypotheses_dnn},
we need to identify the region $\mathcal{Z}$ in~\eqref{eq:completion_of_Z} that corresponds to the event that the set of the pixels in the salient region is the same as the one in the observed data, i.e., $\mathcal{A}_\mathrm{DNN}(r) = \mathcal{A}_\mathrm{DNN}^\mathrm{obs}$.
We first consider the over-conditioning regions in
\begin{equation}
    \mathcal{Z}_\mathrm{DNN}^\mathrm{oc}(\bm{a}+\bm{b}z)
    =
    \left\{
    r\in\mathbb{R}:
    \mathcal{A}_\mathrm{DNN}(r) = \mathcal{A}_\mathrm{DNN}(z),
    \mathcal{S}_\mathrm{DNN}(r) = \mathcal{S}_\mathrm{DNN}(z)
    \right\},
\end{equation}
where $\mathcal{S}_\mathrm{DNN}$ is the sub-algorithm that outputs a set of selected pieces of all the activation functions (AFs) in the DNN, i.e.,
\begin{equation}
    \label{equ:define_dnn_over_condition}
    \mathcal{S}_\mathrm{DNN}: \bm{D} \mapsto \{s_{\ell, h}\}_{\ell \in [L], h\in[H^\ell]},
\end{equation}
where
$L$ is the number of layers,
$H^{\ell}$ is the number of hidden nodes at layer $\ell^\mathrm{th}$, and
$s_{\ell, h}$ is $0$ if the activation function at the $h^\mathrm{th}$ node of layer $\ell^\mathrm{th}$ is \emph{inactive}, and $1$ otherwise.
By considering a class of DNN that consists of affine operations and piecewise-linear AFs (such as ReLU), the over-conditioning region can be represented as an intersection of a set of \emph{linear inequalities}:
\begin{equation}
    \mathcal{Z}_\mathrm{DNN}^\mathrm{oc}(\bm{a}+\bm{b}z)
    =
    \bigcap_{\ell \in [L]}
    \bigcap_{h \in [H^\ell]}
    \left\{
    r \in \mathbb{R}:
    \Omega_{s_{\ell, h}}^z (\bm{a}+\bm{b}r) \leq \bm{\omega}_{s_{\ell, h}}^z
    \right\},
\end{equation}
where the matrix $\Omega_{s_{\ell, h}}^z$ and the vector $\bm{\omega}_{s_{\ell, h}}^z$ are defined depending on whether $s_{\ell, h}$ is active or inactive when the data $\bm{a}+\bm{b}z$ is given to the sub-algorithm $\mathcal{S}_\mathrm{DNN}$.
Note that the superscript $z$ in $\Omega$ and $\bm{\omega}$ specify that they depend on $\mathcal{A}_\mathrm{DNN}(z)$ and $\mathcal{S}_\mathrm{DNN}(z)$.
Therefore, the region $\mathcal{Z}$ in~\eqref{eq:completion_of_Z} can be written as:
\begin{equation}
    \mathcal{Z}_\mathrm{DNN}
    =
    \bigcup_{z\in\mathbb{R} \mid \mathcal{A}(\bm{a}+\bm{b}z)=\mathcal{A}^\mathrm{obs}_\mathrm{DNN}}
    \bigcap_{\ell \in [L]}
    \bigcap_{h \in [H^\ell]}
    \left\{
    r \in \mathbb{R}:
    \Omega_{s_{\ell, h}}^z (\bm{a}+\bm{b}r) \leq \bm{\omega}_{s_{\ell, h}}^z
    \right\}.
\end{equation}
\begin{remark}
    (Activation functions and operations in a trained DNN) We focus on a trained DNN where the AFs used at hidden layers are piecewise linear.
    Otherwise, if there is any specific demand to use non-piecewise linear functions such as sigmoid functions, we can apply a piecewise-linear approximation approach to these functions.
    Some basic operations in a trained DNN are characterized by an intersection of linear inequalities.
    For example, the max-pooling can be written as:
    \begin{equation}
        v_1 = \max\{v_1, v_2, v_3\}
        \quad \Leftrightarrow \quad
        \bm{e}_1^\top \bm{v} \leq \bm{e}_2^\top \bm{v}
        \text{ and }
        \bm{e}_1^\top \bm{v} \leq \bm{e}_3^\top \bm{v},
    \end{equation}
    where $\bm{v}=(v_1, v_2, v_3)^\top$ and $\bm{e}_i$ is the standard basis vector with $1$ at position $i$.
    Furthermore, if the AFs are used at the output layer (e.g., sigmoid function), we need not perform the piecewise-linear approximation task, because we can define the set of linear inequalities based on the values prior to activation.
    In this way, when a DNN has a component represented by an intersection of linear inequalities, we can include all the events that each of those inequalities holds in the definition of $\mathcal{S}_\mathrm{DNN}$ in~\eqref{equ:define_dnn_over_condition}.
\end{remark}
\subsection{Details for Setup of Experiments}
\label{app:experiment_details}
We executed the all experiments on Intel(R) Xeon(R) CPU Gold 8338.
\paragraph{Network structure.}
In all the experiments of selective $z$-test in DNN, we used the network structure shown in Figure~\ref{fig:miwa-san-paper}.
\paragraph{Methods for comparison.}
We compared our proposed method with the following approaches:
\begin{itemize}
    \item \texttt{naive}: The classical test is used to compute the naive $p$-value.
    \item \texttt{OC}: This is a method of additionally conditioning on the observation of the sub-algorithm $\mathcal{S}$, and corresponds to the case where the search is terminated when $R_1$ is obtained in Procedure~\ref{alg:simple}.
    \item \texttt{exhaustive}: This is a method of PP-based SI with exhaustive search, i.e., it searches a certain interval, which is $[-20\sigma\|\bm{\eta}\|, 20\sigma\|\bm{\eta}\|]$ for the selective $z$-test and $[0, 100]$ for the selective $\chi$-test.
          Note that if the observed value $t$ of the test statistic is not included in these intervals, then $[-(10\sigma\|\bm{\eta}\|+|t|), 10\sigma\|\bm{\eta}\|+|t|]$ is searched for the selective $z$-test and $[0, 50+|t|]$ is searched for the selective $\chi$-test.
\end{itemize}
\begin{figure}[htbp]
    \centering
    \includegraphics[width=1.0\linewidth]{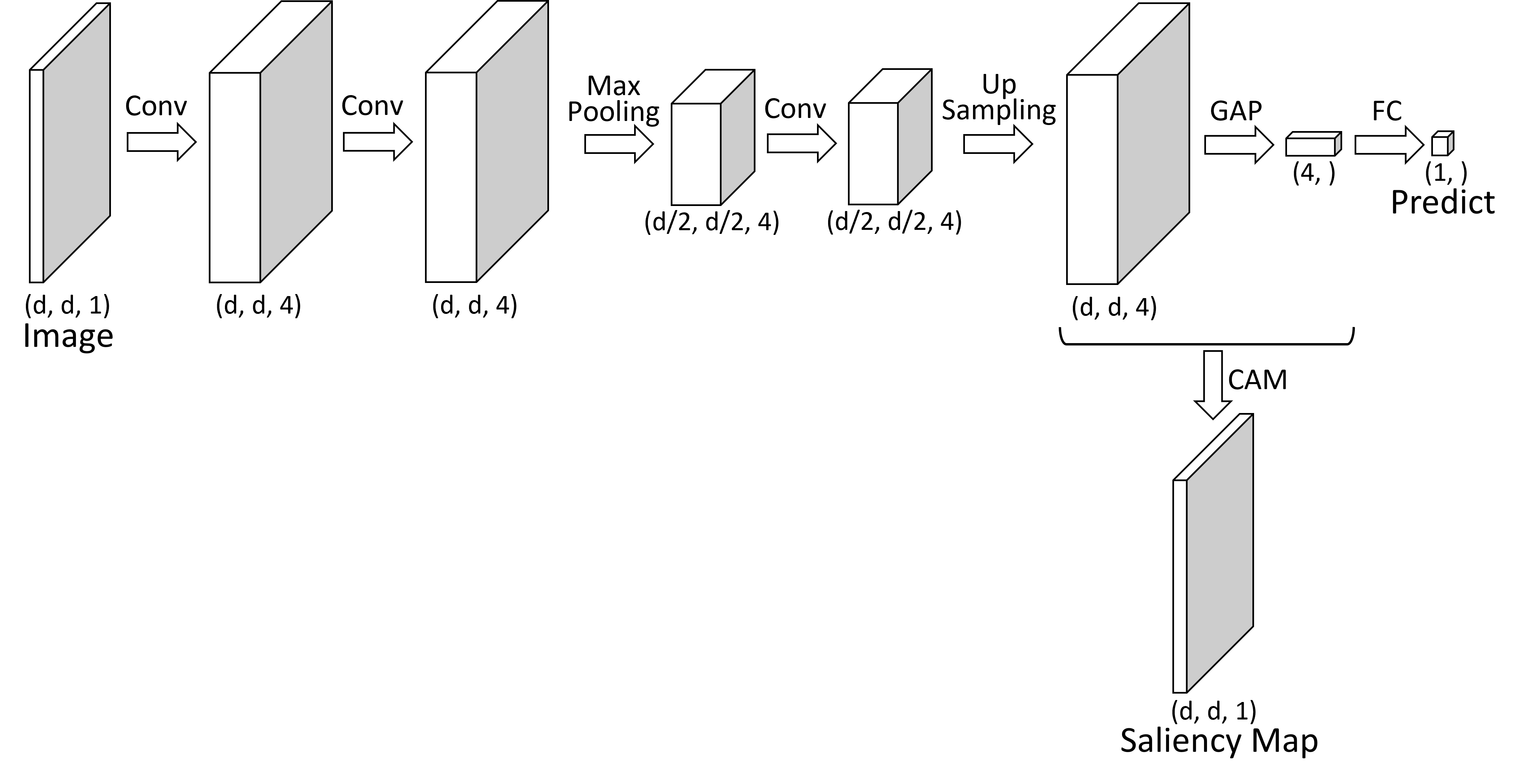}
    \caption{Network structure}
    \label{fig:miwa-san-paper}
\end{figure}

\newpage
\section{Extension to Selective Confidence Interval}
\label{app:extension_to_confidence_interval}
We extend our method to selective confidence intervals.
For the selective $z$-test, our method can be extended to guarantee the precision of selective confidence intervals.
However, for the selective $\chi$-test, to the best of our knowledge, no method has been proposed to compute selective confidence intervals.
Note that in \citep{yang2016selective}, a method has been proposed to compute conservative selective confidence intervals by assuming a stronger null hypothesis, to which our method can be applied similarly.
In this section, we only consider the selective $z$-test and redefine the test statistic as $T(\bm{D})=\bm{\eta}^\top \bm{D}/\sigma\|\bm{\eta}\|$ for simplicity (by this redefinition, the unconditional distribution of test statistic is standard normal).
\paragraph{Key idea.}
We use the same notation as in section~\ref{sec:sec3}, and explicitly denote integrals since we consider the multiple integrand functions.
Let $\mu_c$ be defined for any $c\in(0,1)$ to satisfy
\begin{equation}
    \label{eq:definition_of_mu_c}
    c =
    \frac
    {\int_{R_\infty\cap (-\infty, t]}\phi(z-\mu_c)dz}
    {\int_{R_\infty}\phi(z-\mu_c)dz},
\end{equation}
where $\phi$ is the probability density function of the standard normal distribution.
The existence and uniqueness of $\mu_c$ are obvious from the following lemma.
\begin{lemma}
    \label{lemm:cdf_monotonicity}
    For any $R\in\mathcal{B}(\mathbb{R})$, the cumulative distribution function of the truncated distribution of the normal distribution $\mathcal{N}(\mu, 1)$ in region $R$ is monotonically decreasing in $\mu$, i.e., for any $t\in \mathbb{R}$ and $\mu_1>\mu_2$,
    \begin{equation}
        \frac
        {\int_{R\cap (-\infty, t]}\phi(z-\mu_1)dz}
        {\int_{R}\phi(z-\mu_1)dz}
        <
        \frac
        {\int_{R\cap (-\infty, t]}\phi(z-\mu_2)dz}
        {\int_{R}\phi(z-\mu_2)dz}.
    \end{equation}
\end{lemma}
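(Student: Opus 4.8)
The plan is to reduce the claim to the strict monotonicity of a ratio of two integrals by a change-of-variables/differentiation argument. Write $F(\mu) = \int_{R\cap(-\infty,t]}\phi(z-\mu)\,dz$ and $G(\mu) = \int_R \phi(z-\mu)\,dz$, so that the quantity in question is $H(\mu) = F(\mu)/G(\mu)$, and the goal is to show $H$ is strictly decreasing. A clean way is to recognize $H(\mu)$ as a conditional probability: if $W \sim \mathcal{N}(\mu,1)$ then $H(\mu) = \mathbb{P}(W \le t \mid W \in R)$. The density of $\mathcal{N}(\mu,1)$ has the monotone likelihood ratio property in $\mu$ (for $\mu_1 > \mu_2$, the ratio $\phi(z-\mu_1)/\phi(z-\mu_2) = \exp(\mu_1 z - \mu_2 z - \tfrac12\mu_1^2 + \tfrac12\mu_2^2)$ is strictly increasing in $z$), and a standard fact is that MLR in a parameter implies first-order stochastic dominance of the family of conditional distributions on any fixed event $R$; hence the conditional CDF $z \mapsto \mathbb{P}(W \le z \mid W\in R)$ is pointwise non-increasing in $\mu$, and strictly decreasing whenever $R$ has positive mass on both sides of $t$ — which here is guaranteed since $R = R_\infty$ contains the observed point and the integrals are assumed positive.

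Concretely, I would argue the strict inequality directly. Fix $\mu_1 > \mu_2$ and set $R^- = R\cap(-\infty,t]$, $R^+ = R\cap(t,\infty)$. We want
\begin{equation}
\frac{F(\mu_1)}{F(\mu_1)+\int_{R^+}\phi(z-\mu_1)\,dz} < \frac{F(\mu_2)}{F(\mu_2)+\int_{R^+}\phi(z-\mu_2)\,dz},
\end{equation}
which (clearing denominators, all quantities positive) is equivalent to
\begin{equation}
F(\mu_1)\int_{R^+}\phi(z-\mu_2)\,dz < F(\mu_2)\int_{R^+}\phi(z-\mu_1)\,dz,
\end{equation}
i.e.
\begin{equation}
\int_{R^-}\!\!\int_{R^+} \phi(z_1-\mu_1)\phi(z_2-\mu_2)\,dz_2\,dz_1 < \int_{R^-}\!\!\int_{R^+} \phi(z_1-\mu_2)\phi(z_2-\mu_1)\,dz_2\,dz_1.
\end{equation}
Pointwise on the domain of integration we have $z_1 \le t < z_2$, so $z_1 < z_2$, and the MLR property gives $\phi(z_1-\mu_1)\phi(z_2-\mu_2) < \phi(z_1-\mu_2)\phi(z_2-\mu_1)$ (take logs: this is $(\mu_1-\mu_2)(z_1 - z_2) < 0$ after the Gaussian exponents cancel in pairs). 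Integrating the strict pointwise inequality over a product set of positive measure yields the strict inequality of integrals.

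The only genuine subtlety is ensuring the inequality is \emph{strict} rather than merely weak: this requires both $\int_{R^-}\phi\,dz > 0$ and $\int_{R^+}\phi\,dz > 0$, i.e. that $R$ (here $R_\infty$) has positive Lebesgue measure intersection with both $(-\infty,t]$ and $(t,\infty)$. I expect this to be the main point to address carefully — for $R_\infty = \mathcal{Z}$ it follows because $\mathcal{Z}$ contains an interval around the observed point and, under the selective setup, has mass on both sides of $t$; if one wants the lemma stated for fully general $R \in \mathcal{B}(\mathbb{R})$ one should add the nondegeneracy hypothesis that both pieces have positive mass (which is implicitly needed for the displayed fractions to be well-defined and meaningfully strict anyway). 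Everything else is routine: positivity of the Gaussian density makes all denominators strictly positive, and Fubini justifies rewriting the product of integrals as a double integral over $R^- \times R^+$.
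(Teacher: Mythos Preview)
Your proof is correct and follows essentially the same route as the paper's: both use the monotone likelihood ratio of the Gaussian family to obtain the pointwise inequality $\phi(z_1-\mu_1)\phi(z_2-\mu_2) < \phi(z_1-\mu_2)\phi(z_2-\mu_1)$ for $z_1 < z_2$, then integrate over $R^- \times R^+$ and rearrange. Your treatment is slightly more explicit about the nondegeneracy hypothesis needed for \emph{strict} inequality (positive mass of $R$ on both sides of $t$), which the paper leaves implicit.
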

The proof of Lemma~\ref{lemm:cdf_monotonicity} is presented in Appendix~\ref{app:proof_of_cdf_monotonicity}.
The main idea of the proof is that the truncated normal distribution has monotone likelihood ratio in the mean parameter.
This lemma is key to prove the following theorem in this section.
\paragraph{Selective confidence interval.}
By using the $\mu_c$ defined in~\eqref{eq:definition_of_mu_c}, a $(1-\alpha)$ selective confidence interval for $\bm{\eta}^\top \bm{\mu}/\sigma \|\bm{\eta}\|$ is defined as
\begin{equation}
    [\mu_{1-\alpha/2}, \mu_{\alpha/2}].
\end{equation}
Then, our goal is to find the upper and lower bounds of $\mu_c$ for any $c\in(0,1)$ from $S_i$ and $R_i$.
Note that we consider the lower and upper bounds for each end of the confidence interval.
\begin{theorem}
    \label{thm:confidence_interval}
    For any $c\in(0,1)$ and $i\in\mathbb{N}$, $\mu_c$ is bounded by $\mu_c\in[\mu_{c,i}^l,\mu_{c,i}^u]$, which satisfy
    \begin{equation}
        \label{eq:condition_for_ci}
        c =
        \frac
        {\int_{R_i\cap (-\infty, t]}\phi(z-\mu_{c,i}^l)dz}
        {\int_{R_i\cup (S_i^c\cap [t, \infty))}\phi(z-\mu_{c,i}^l)dz}
        =
        \frac
        {\int_{(R_i\cup S_i^c)\cap (-\infty, t]}\phi(z-\mu_{c,i}^u)dz}
        {\int_{R_i\cup (S_i^c\cap (-\infty, t])}\phi(z-\mu_{c,i}^u)dz}.
    \end{equation}
\end{theorem}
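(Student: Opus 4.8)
The plan is to mimic the structure of the proof of Theorem~\ref{thm:main}, but working with the ``shifted'' integrand $\phi(z-\mu)$ instead of the fixed density $f$, and using Lemma~\ref{lemm:cdf_monotonicity} to translate inequalities between truncation regions into inequalities between the implicitly-defined parameters. Recall from the proof of Theorem~\ref{thm:main} that $R_i \subseteq R_\infty \subseteq R_i \cup S_i^c$ for every $i$, and that $R_i$ and $S_i^c$ are disjoint. Define, for a fixed $c \in (0,1)$, the map
\begin{equation}
	G_\mu(R) = \frac{\int_{R \cap (-\infty, t]} \phi(z-\mu)\,dz}{\int_R \phi(z-\mu)\,dz},
\end{equation}
so that $\mu_c$ is characterized by $G_{\mu_c}(R_\infty) = c$ and $\mu_{c,i}^l, \mu_{c,i}^u$ are characterized by the two equalities in~\eqref{eq:condition_for_ci}, i.e. $G_{\mu_{c,i}^l}(R_i \cup (S_i^c \cap [t,\infty))) = c$ and $G_{\mu_{c,i}^u}(R_i \cup (S_i^c \cap (-\infty,t])) = c$. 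Note these two reference sets are exactly the analogues of $R_i^{\mathrm{sup}}$ and $R_i^{\mathrm{inf}}$ from the proof of Theorem~\ref{thm:main}, specialized to the one-sided cut $(-\infty, t]$; so this is the one-sided-test construction (Appendix~\ref{app:extension_to_one_sided}) re-used at the level of the CDF rather than the $p$-value.

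First I would establish the two set inclusions $R_i \subseteq R_i \cup (S_i^c \cap (-\infty,t]) \subseteq$ does NOT hold in general against $R_\infty$ directly, so instead I would argue via monotonicity of $G_\mu$ in its \emph{set} argument. The key observation is: for a fixed $\mu$, writing $\int_{R}\phi(z-\mu)dz = \int_{R\cap(-\infty,t]}\phi + \int_{R\setminus(-\infty,t]}\phi$, the ratio $G_\mu(R)$ is increasing when mass is added to $R \cap (-\infty,t]$ and decreasing when mass is added to $R \setminus (-\infty,t]$ --- this is the same algebraic manipulation as~\eqref{eq:transformed_term}. Now $R_\infty$ is obtained from $R_i$ by adding some subset of $S_i^c$; splitting that added subset into its part in $(-\infty,t]$ and its part in $[t,\infty)$, we get
\begin{equation}
	G_\mu\big(R_i \cup (S_i^c \cap [t,\infty))\big) \le G_\mu(R_\infty) \le G_\mu\big(R_i \cup (S_i^c \cap (-\infty,t])\big)
\end{equation}
for every $\mu$, since the left-hand set is $R_\infty$ with the ``good'' (left) part of $S_i^c$ removed and the ``bad'' (right) part of $S_i^c$ possibly added, and symmetrically on the right. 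Then I would evaluate all three at $\mu = \mu_{c,i}^l$: the left inequality gives $c = G_{\mu_{c,i}^l}(R_i \cup (S_i^c\cap[t,\infty))) \le G_{\mu_{c,i}^l}(R_\infty)$, while by definition $G_{\mu_c}(R_\infty) = c$; since Lemma~\ref{lemm:cdf_monotonicity} says $G_\mu(R_\infty)$ is strictly decreasing in $\mu$, the inequality $G_{\mu_{c,i}^l}(R_\infty) \ge c = G_{\mu_c}(R_\infty)$ forces $\mu_{c,i}^l \le \mu_c$. The upper bound $\mu_c \le \mu_{c,i}^u$ follows the same way from the right inequality. Existence and uniqueness of $\mu_{c,i}^l, \mu_{c,i}^u$ themselves follow from Lemma~\ref{lemm:cdf_monotonicity} applied to the respective reference sets, together with the limiting values $0$ and $1$ of $G_\mu$ as $\mu \to +\infty$ and $\mu\to-\infty$ (exactly as for $\mu_c$ just after~\eqref{eq:definition_of_mu_c}).

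The main obstacle I expect is making the ``mass monotonicity'' step fully rigorous, because there are really two monotonicities being composed: monotonicity of $G_\mu$ in the set argument $R$ (for fixed $\mu$), and monotonicity of $G_\mu$ in $\mu$ (for fixed $R$, which is precisely Lemma~\ref{lemm:cdf_monotonicity}). One has to be careful that the set-monotonicity is applied at a fixed value of $\mu$ (namely $\mu_{c,i}^l$ or $\mu_{c,i}^u$) before invoking the $\mu$-monotonicity, and that the inclusions between $R_\infty$ and the two reference sets are correctly decomposed according to whether the symmetric difference lies left or right of $t$. A minor subtlety is the degenerate possibility that $\int_R \phi(z-\mu)\,dz = 0$ for some candidate $R$; but every set in play contains $R_i \supseteq R_1 \supseteq$ the observed interval, which has positive Lebesgue measure, so all denominators are strictly positive and $G_\mu$ is well-defined throughout. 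Everything else is a routine repetition of the Theorem~\ref{thm:main} / Appendix~\ref{app:extension_to_one_sided} argument with $f(z)$ replaced by $\phi(z-\mu)$.
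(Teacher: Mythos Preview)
Your proposal is correct and follows essentially the same route as the paper's proof: the paper invokes the one-sided version of Theorem~\ref{thm:main} (Appendix~\ref{app:extension_to_one_sided}) with integrand $\phi(z-\mu_{c,i}^l)$ and $\phi(z-\mu_{c,i}^u)$ to obtain exactly your set-monotonicity inequalities $G_{\mu}(R_i\cup(S_i^c\cap[t,\infty)))\le G_\mu(R_\infty)\le G_\mu(R_i\cup(S_i^c\cap(-\infty,t]))$, and then applies Lemma~\ref{lemm:cdf_monotonicity} to conclude $\mu_{c,i}^l\le\mu_c\le\mu_{c,i}^u$. The only difference is cosmetic: you re-derive the set-monotonicity step explicitly via the $N/(N+M)$ rewriting, whereas the paper cites Theorem~\ref{thm:main} as a black box.
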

The proof of Theorem~\ref{thm:confidence_interval} is presented in Appendix~\ref{app:proof_of_confidence_interval}.
In implementation, we can compute $\mu_{c,i}^l$ and $\mu_{c,i}^u$ by using the binary search.
The lower bounds $\mu_{c,i}^l$ and upper bounds $\mu_{c,i}^u$ of the $\mu_c$ in Theorem~\ref{thm:confidence_interval} have some reasonable properties described in the following lemma.
\begin{lemma}
    \label{lemm:ci_monotonicity}
    The lower bounds $\mu_{c,i}^l$ and upper bounds $\mu_{c,i}^u$ of $\mu_c$ in Theorem~\ref{thm:confidence_interval} are monotonically increasing and decreasing, in $i$, respectively.
    Furthermore, each of them converges to $\mu_c$ as $i\to\infty$.
\end{lemma}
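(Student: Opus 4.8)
The plan is to mimic the structure of the proof of Lemma~\ref{lemm:bounds_property}, reducing both claims to elementary monotonicity of two scalar functions of the mean parameter together with Lemma~\ref{lemm:cdf_monotonicity} and the set inclusions $R_i\subseteq R_{i+1}\subseteq R_\infty\subseteq R_i\cup S_i^c$, $S_i\subseteq S_{i+1}$, $S_\infty=\mathbb{R}$ established in Appendix~\ref{app:proof_of_main_theorem}. For $i\in\mathbb{N}\cup\{\infty\}$ and $\mu\in\mathbb{R}$, define
\begin{align}
  G_i(\mu) = \frac{\int_{R_i\cap(-\infty,t]}\phi(z-\mu)\,dz}{\int_{R_i\cup(S_i^c\cap[t,\infty))}\phi(z-\mu)\,dz},
  &&
  H_i(\mu) = \frac{\int_{(R_i\cup S_i^c)\cap(-\infty,t]}\phi(z-\mu)\,dz}{\int_{R_i\cup(S_i^c\cap(-\infty,t])}\phi(z-\mu)\,dz},
\end{align}
so that, by~\eqref{eq:condition_for_ci}, $\mu_{c,i}^l$ and $\mu_{c,i}^u$ are the unique solutions of $G_i(\mu)=c$ and $H_i(\mu)=c$; since $S_\infty^c=\emptyset$ we have $G_\infty=H_\infty$, and this common function is exactly the one in~\eqref{eq:definition_of_mu_c}, so $\mu_c$ solves $G_\infty(\mu)=c$. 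The first thing I would observe is that in each of $G_i$ and $H_i$ the numerator region is precisely the intersection of the denominator region with $(-\infty,t]$ (the discrepancy being the single point $t$, of Lebesgue measure zero); hence each of $G_i$ and $H_i$ is the value at $t$ of the c.d.f.\ of a truncated normal, and by Lemma~\ref{lemm:cdf_monotonicity} it is continuous and \emph{strictly} decreasing in $\mu$.

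Next I would isolate the dependence on $i$ by a disjoint-union bookkeeping: writing $a_i=\int_{R_i\cap(-\infty,t]}\phi(z-\mu)\,dz$, $w_i=\int_{(R_i\cup S_i^c)\cap(t,\infty)}\phi(z-\mu)\,dz$, $n_i=\int_{(R_i\cup S_i^c)\cap(-\infty,t]}\phi(z-\mu)\,dz$ and $r_i=\int_{R_i\cap(t,\infty)}\phi(z-\mu)\,dz$, one gets $G_i(\mu)=a_i/(a_i+w_i)$ and $H_i(\mu)=n_i/(n_i+r_i)$. Because $R_i$ is increasing and $R_i\cup S_i^c$ is decreasing in $i$ (the latter since $R_{i+1}\subseteq R_i\cup S_i^c$ and $S_{i+1}^c\subseteq S_i^c$), the quantities $a_i,r_i$ are nondecreasing and $w_i,n_i$ nonincreasing, with common limits $a_\infty=\int_{R_\infty\cap(-\infty,t]}\phi(z-\mu)\,dz$ and $w_\infty=\int_{R_\infty\cap(t,\infty)}\phi(z-\mu)\,dz$ by monotone convergence (using $S_\infty^c=\emptyset$ and $\bigcap_i(R_i\cup S_i^c)=R_\infty$). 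Since $x\mapsto x/(x+w)$ is increasing in $x\ge 0$ and decreasing in $w\ge 0$, this yields $G_i(\mu)\le G_{i+1}(\mu)\le G_\infty(\mu)$ and $H_i(\mu)\ge H_{i+1}(\mu)\ge H_\infty(\mu)=G_\infty(\mu)$ for every $\mu$. Monotonicity of the bounds then follows at once: from $G_{i+1}(\mu_{c,i}^l)\ge G_i(\mu_{c,i}^l)=c=G_{i+1}(\mu_{c,i+1}^l)$ and the strict decrease of $G_{i+1}$ we get $\mu_{c,i}^l\le\mu_{c,i+1}^l$, and symmetrically $\mu_{c,i}^u\ge\mu_{c,i+1}^u$.

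For convergence, Theorem~\ref{thm:confidence_interval} gives $\mu_{c,i}^l\le\mu_c\le\mu_{c,i}^u$ for all $i$, so the monotone sequences converge, say $\mu_{c,i}^l\uparrow\mu^l\le\mu_c$ and $\mu_{c,i}^u\downarrow\mu^u\ge\mu_c$. To identify $\mu^l$ with $\mu_c$, I would argue: $\mu_{c,i}^l\le\mu^l$ and $G_i$ strictly decreasing give $G_i(\mu^l)\le G_i(\mu_{c,i}^l)=c$; letting $i\to\infty$ and using $G_i(\mu^l)\to G_\infty(\mu^l)$ (dominated convergence, since the indicator of $R_i\cup(S_i^c\cap[t,\infty))$ converges a.e.\ to that of $R_\infty$) yields $G_\infty(\mu^l)\le c=G_\infty(\mu_c)$; combined with $\mu^l\le\mu_c$ and the strict monotonicity (hence injectivity) of $G_\infty$, this forces $\mu^l=\mu_c$. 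The same argument applied to $H_i$ gives $\mu^u=\mu_c$, completing the proof. The only steps requiring care are the disjoint-union bookkeeping (together with the harmless measure-zero discrepancy at $z=t$) and the interchange of limit and integral; both are routine, and I do not expect any genuine obstacle beyond correctly tracking which sets shrink and which grow.
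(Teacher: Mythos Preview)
Your proof is correct and follows essentially the same route as the paper's: both arguments combine the pointwise-in-$\mu$ monotonicity of $G_i$ and $H_i$ in $i$ (which the paper obtains by invoking Lemma~\ref{lemm:bounds_property} for the left-tailed case with density $\phi(\cdot-\mu)$, and which you re-derive directly via the $a_i,w_i,n_i,r_i$ decomposition and the set inclusions $R_i\uparrow$, $R_i\cup S_i^c\downarrow$) with the strict $\mu$-monotonicity from Lemma~\ref{lemm:cdf_monotonicity} to conclude $\mu_{c,i}^l\le\mu_{c,i+1}^l$ and $\mu_{c,i}^u\ge\mu_{c,i+1}^u$. Your convergence argument is more explicit than the paper's (which simply appeals to $R_i\to R_\infty$ and $S_i^c\to\emptyset$), but the content is the same.
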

The proof of Lemma~\ref{lemm:ci_monotonicity} is presented in Appendix~\ref{app:proof_of_ci_monotonicity}.
Lemma~\ref{lemm:ci_monotonicity} indicates that, in Procedure~\ref{alg:simple}, as the iteration progresses, the bounds of $\mu_c$ are become tighter and converge to $\mu_c$.
\paragraph{Applications.}
To apply Theorem~\ref{thm:confidence_interval}, we need to pay attention to two points.
First, the binary search must be performed to compute $\mu_{c,i}^l$ and $\mu_{c,i}^u$ in~\eqref{eq:condition_for_ci}.
This means that we need to perform the binary search four times (the lower and upper bounds for each end of the confidence interval) at each iteration $i$, which leads to the slightly high computational cost.
It occurs especially when the number of instances $n$ and the number of features $p$ in the SFS problem setup are small and then the over-conditioning region is computed with low cost.

Second, the bounds are very loose when the truncated intervals $R_i$ is small.
As an example, we consider the case where $t=1.244, R_i=[1.106, 1.351], S_i=[-20.0,20.0]$.
This is not at all rare case in the DNN problem setup and even smaller $R_i$ may be obtained.
In this case, if we consider the $0.95$ selective confidence interval $[\mu_{0.975}, \mu_{0.025}]$, the lower and upper bounds of each of them are computed as
\begin{equation}
    -25.09\leq \mu_{0.975}\leq -9.525,\ 10.80\leq \mu_{0.025}\leq 35.67,
\end{equation}
respectively, by Theorem~\ref{thm:confidence_interval}.
\subsection{Proof of Lemma~\ref{lemm:cdf_monotonicity}}
\label{app:proof_of_cdf_monotonicity}
We first show that the truncated normal distribution has monotone likelihood ratio in the mean parameter.
For any $z\in\mathbb{R}$, we have
\begin{equation}
    \frac
    {\phi(z-\mu_1)}
    {\phi(z-\mu_2)}
    =
    \exp\left(
    \frac{1}{2}(\mu_1-\mu_2)z - \frac{1}{2}(\mu_1^2-\mu_2^2)
    \right).
\end{equation}
Then, this ratio is monotonically increasing in $z$ from $\mu_1>\mu_2$ and we have
\begin{equation}
    \frac
    {\phi(z_1-\mu_1)}
    {\phi(z_1-\mu_2)}
    >
    \frac
    {\phi(z_2-\mu_1)}
    {\phi(z_2-\mu_2)},\
    z_1>z_2.
\end{equation}
This implies
\begin{equation}
    \phi(z_1-\mu_1)\phi(z_2-\mu_2)
    >
    \phi(z_1-\mu_2)\phi(z_2-\mu_1),\
    z_1>z_2.
\end{equation}
Therefore, by integrating both sides with respect to $z_2$ on $R\cap (-\infty, t]$ for $t<z_1$, we have
\begin{equation}
    \phi(z_1-\mu_1)
    \int_{R\cap (-\infty, t]}\phi(z-\mu_2)dz
    >
    \phi(z_1-\mu_2)
    \int_{R\cap (-\infty, t]}\phi(z-\mu_1)dz,\ t<z_1.
\end{equation}
Then, by integrating both sides with respect to $z_1$ on $R\cap (t, \infty)$, we have
\begin{gather}
    \left(
    \int_{R}\phi(z-\mu_1)dz
    -
    \int_{R\cap (-\infty, t]}\phi(z-\mu_1)dz
    \right)
    \int_{R\cap (-\infty, t]}\phi(z-\mu_2)dz \\
    >
    \left(
    \int_{R}\phi(z-\mu_2)dz
    -
    \int_{R\cap (-\infty, t]}\phi(z-\mu_2)dz
    \right)
    \int_{R\cap (-\infty, t]}\phi(z-\mu_1)dz.
\end{gather}
Thus, by dividing both sides by $\int_{R}\phi(z-\mu_1)dz\int_{R}\phi(z-\mu_2)dz$, we have
\begin{equation}
    \frac
    {\int_{R\cap (-\infty, t]}\phi(z-\mu_1)dz}
    {\int_{R}\phi(z-\mu_1)dz}
    <
    \frac
    {\int_{R\cap (-\infty, t]}\phi(z-\mu_2)dz}
    {\int_{R}\phi(z-\mu_2)dz},
\end{equation}
as required.
\subsection{Proof of Theorem~\ref{thm:confidence_interval}}
\label{app:proof_of_confidence_interval}
The existence and uniqueness of $\mu_{c,i}^l$ and $\mu_{c,i}^u$ are obvious from the Lemma~\ref{lemm:cdf_monotonicity}.
Let us consider Theorem~\ref{thm:main} for left-tailed $p$-value as discussed in Appendix~\ref{app:extension_to_one_sided}.
By computing the lower bound for the probability density function $\phi(z-\mu_{c,i}^l)$ and the upper bound for $\phi(z-\mu_{c,i}^u)$, we have
\begin{gather}
    c
    =
    \frac
    {\int_{R_i\cap (-\infty, t]}\phi(z-\mu_{c,i}^l)dz}
    {\int_{R_i\cup (S_i^c\cap [t, \infty))}\phi(z-\mu_{c,i}^l)dz}
    \leq
    \frac
    {\int_{R_\infty\cap (-\infty, t]}\phi(z-\mu_{c,i}^l)dz}
    {\int_{R_\infty}\phi(z-\mu_{c,i}^l)dz}, \\
    \frac
    {\int_{R_\infty\cap (-\infty, t]}\phi(z-\mu_{c,i}^u)dz}
    {\int_{R_\infty}\phi(z-\mu_{c,i}^u)dz}
    \leq
    \frac
    {\int_{(R_i\cup S_i^c)\cap (-\infty, t]}\phi(z-\mu_{c,i}^u)dz}
    {\int_{R_i\cup (S_i^c\cap (-\infty, t])}\phi(z-\mu_{c,i}^u)dz}
    =
    c,
\end{gather}
respectively.
Then, combined with the definition of $\mu_c$ in~\eqref{eq:definition_of_mu_c}, we have
\begin{equation}
    \frac
    {\int_{R_\infty\cap (-\infty, t]}\phi(z-\mu_{c,i}^u)dz}
    {\int_{R_\infty}\phi(z-\mu_{c,i}^u)dz}
    \leq
    \frac
    {\int_{R_\infty\cap (-\infty, t]}\phi(z-\mu_c)dz}
    {\int_{R_\infty}\phi(z-\mu_c)dz}
    \leq
    \frac
    {\int_{R_\infty\cap (-\infty, t]}\phi(z-\mu_{c,i}^l)dz}
    {\int_{R_\infty}\phi(z-\mu_{c,i}^l)dz},
\end{equation}
which indicates that $\mu_{c,i}^l\leq \mu_c \leq \mu_{c,i}^u$ from Lemma~\ref{lemm:cdf_monotonicity}.
\subsection{Proof of Lemma~\ref{lemm:ci_monotonicity}}
\label{app:proof_of_ci_monotonicity}
Let us consider Lemma~\ref{lemm:bounds_property} for left-tailed $p$-value as discussed in Appendix~\ref{app:extension_to_one_sided}.
By considering the lower bound for the probability density function $\phi(z-\mu_{c,i}^l)$ and the upper bound for $\phi(z-\mu_{c,i}^u)$, we have
\begin{gather}
    c
    =
    \frac
    {\int_{R_i\cap (-\infty, t]}\phi(z-\mu_{c,i}^l)dz}
    {\int_{R_i\cup (S_i^c\cap [t, \infty))}\phi(z-\mu_{c,i}^l)dz}
    \leq
    \frac
    {\int_{R_{i+1}\cap (-\infty, t]}\phi(z-\mu_{c,i}^l)dz}
            {\int_{R_{i+1}\cup (S_{i+1}^c\cap [t, \infty))}\phi(z-\mu_{c,i}^l)dz}, \\
    \frac
    {\int_{(R_{i+1}\cup S_{i+1}^c)\cap (-\infty, t]}\phi(z-\mu_{c,i}^u)dz}
    {\int_{R_{i+1}\cup (S_{i+1}^c\cap (-\infty, t])}\phi(z-\mu_{c,i}^u)dz}
    \leq
    \frac
    {\int_{(R_i\cup S_i^c)\cap (-\infty, t]}\phi(z-\mu_{c,i}^u)dz}
    {\int_{R_i\cup (S_i^c\cap (-\infty, t])}\phi(z-\mu_{c,i}^u)dz}
    =
    c,
\end{gather}
respectively.
Then, combined with the definition of $\mu_{c,i+1}^l$ and $\mu_{c,i+1}^u$ in~\eqref{eq:condition_for_ci}, we have
\begin{gather}
    \frac
    {\int_{R_{i+1}\cap (-\infty, t]}\phi(z-\mu_{c,i+1}^l)dz}
            {\int_{R_{i+1}\cup (S_{i+1}^c\cap [t, \infty))}\phi(z-\mu_{c,i+1}^l)dz}
    \leq
    \frac
    {\int_{R_{i+1}\cap (-\infty, t]}\phi(z-\mu_{c,i}^l)dz}
            {\int_{R_{i+1}\cup (S_{i+1}^c\cap [t, \infty))}\phi(z-\mu_{c,i}^l)dz}, \\
    \frac
    {\int_{(R_{i+1}\cup S_{i+1}^c)\cap (-\infty, t]}\phi(z-\mu_{c,i}^u)dz}
    {\int_{R_{i+1}\cup (S_{i+1}^c\cap (-\infty, t])}\phi(z-\mu_{c,i}^u)dz}
    \leq
    \frac
    {\int_{(R_{i+1}\cup S_{i+1}^c)\cap (-\infty, t]}\phi(z-\mu_{c,i+1}^u)dz}
    {\int_{R_{i+1}\cup (S_{i+1}^c\cap (-\infty, t])}\phi(z-\mu_{c,i+1}^u)dz},
\end{gather}
respectively.
These indicate that $\mu_{c,i}^l\leq \mu_{c,i+1}^l$ and $\mu_{c,i}^u\geq \mu_{c,i+1}^u$ from Lemma~\ref{lemm:cdf_monotonicity}.
Furthermore, by the definition of $\mu_{c,i}^l$ and $\mu_{c,i}^u$ in~\eqref{eq:condition_for_ci}, the convergence of $\mu_{c,i}^l$ and $\mu_{c,i}^u$ to $\mu_c$ is obvious from $R_i\to R_\infty$ and $S_i^c\to \emptyset$.

\newpage

\bibliographystyle{plainnat}
\bibliography{ref}

\end{document}